\newcommand{\atlas}{\texttt{Atlas}\xspace}
\newcommand{\atlaslearn}{\texttt{Atlas-Learn}\xspace}
\newcommand{\atlasgrass}{\texttt{Atlas-Grass}\xspace}
\newtheorem{definition}{Definition}
\newcommand{\cM}{\mathcal{M}}
\begin{document}

%

%


\title{Atlas-based Manifold Representations\\ 
for Interpretable Riemannian Machine Learning}

\author[1]{Ryan A. Robinett}
\author[2]{Sophia A. Madejski}
\author[3,4]{Kyle Ruark}
\author[2,4,5]{\authorcr Samantha J. Riesenfeld}
\author[1,5]{Lorenzo Orecchia}
\affil[1]{Department of Computer Science, University of Chicago}
\affil[2]{Pritzker School of Molecular Engineering, University of Chicago}
\affil[3]{The College, University of Chicago}
\affil[4]{School of Engineering at Applied Sciences,  Harvard University}
\affil[5]{Department of Medicine, University of Chicago; CZ Biohub Chicago, LLC}
\affil[6]{NSF-Simons National Institute for Theory and Mathematics in Biology}

\maketitle






\begin{abstract}
Despite the popularity of the manifold hypothesis, current manifold-learning methods do not support machine learning
directly on the latent $d$-dimensional data manifold, as they primarily aim to perform dimensionality reduction into $\mathbb{R}^D$, losing key manifold features when the embedding dimension $D$ approaches $d$.
%
%
On the other hand, methods that directly learn the latent manifold as a differentiable atlas
have been relatively underexplored. 
%
%
In this paper, we aim to give a proof of concept of the effectiveness and potential of atlas-based methods.
To this end, we implement a generic data structure to maintain a differentiable atlas that enables Riemannian optimization over the manifold.
We complement this with an unsupervised heuristic that learns a differentiable atlas from point cloud data.
%
We experimentally demonstrate that this approach has advantages in terms of efficiency and accuracy in selected settings. 
Moreover, in a supervised classification task over the Klein bottle and in RNA velocity analysis of hematopoietic data, we showcase the improved interpretability and robustness of our approach.

\end{abstract}

\section{INTRODUCTION}
By extending concepts from flat Euclidean space to more complex, curved spaces, Riemannian manifolds provide a convenient mathematical model
to abstractly represent nonlinear data dependencies in high-dimensional datasets. 
The related manifold hypothesis~\cite{bengioRepresentationLearningReview2013}, which states that real-world data is predominantly concentrated on low-dimensional manifolds within a high-dimensional embedding space, has been both theoretically~\cite{carlsson} and experimentally~\cite{popeIntrinsicDimensionImages2021a, brownVerifyingUnionManifolds2022} validated for image data and is considered a key ingredient for the success of deep learning~\cite{bengioRepresentationLearningReview2013, bronsteinGeometricDeepLearning2021a}.
Similarly, natural scientists increasingly encounter empirical data that are well characterized by low-dimensional, often non-Euclidean, Riemannian manifolds.
For example, for several systems profiled by single-cell RNA-sequencing (scRNA-seq), which estimates the transcript counts of thousands of genes in tens (or hundreds) of thousands of individual cells, the data 
were shown to live near low-dimensional manifolds ($d \leq 5$) and display nontrivial geometry with statistically significant scalar curvature \cite{sritharan}.

As a result, there is now great interest in designing algorithms and simulating dynamics that eschew the high-dimensional ambient space and directly work with a representation of the low-dimensional manifold. 
Riemannian variants of generative diffusion models have recently been proposed~\cite{bortoliRiemannianScoreBasedGenerative2022, joGenerativeModelingManifolds2024, mangoubiEfficientDiffusionModels2025} and applied to protein generation~\cite{yimSE3DiffusionModel2023}. 
Stochastic models and dynamic optimal transport algorithms have been applied to learned manifold representations of transcriptomic spaces to study dynamic processes, such as cell-fate decisions in embryogenesis~\cite{Schiebinger2019-fq, moon, huguetManifoldInterpolatingOptimalTransport2022}.
%

Although Riemannian optimization techniques are well developed~\cite{absil}, most commonly employed manifold representation methods are ill-suited to their application, preventing the deployment of general-purpose machine learning techniques
to the low-dimensional manifolds underlying much real-world data.
Indeed, the steps of even the simplest Riemannian first-order methods are applications of the manifold's exponential map (or more generally of a retraction)~\cite{hosseini_and_sra}, which require knowledge of the manifold's tangent planes and differentiable structure.
%
By contrast, current manifold learning techniques do not attempt to learn representations that constrain downstream tasks to work within the manifold and its intrinsic geometry. 
Rather, they focus heavily
on dimensionality reduction, i.e., constructing embeddings of data that are naturally represented in a high-dimensional space $\mathbb{R}^n$ into a lower-dimensional space $\mathbb{R}^D$ , where $D << n$ is still higher (often substantially) than the intrinsic dimension of the latent data manifold~\cite{meilaManifoldLearningWhat2023}.
%
Such methods often aim to preserve fundamental intrinsic topological invariants (such as homology) and geometric invariants (such as geodesic distances), but not support optimization. 
For instance, even the simple task of evolving a given ODE by forward-Euler discretization over the manifold cannot be realized in a generic dimensionality-reduced embedding, as the evolving point may leave the manifold and travel freely in the $D$-dimensional ambient space. 
%
This issue arises in many scientific applications where experimental data naturally include a velocity vector field over the manifold, such as RNA velocity~\cite{frank_rec, qiuMappingTranscriptomicVector2022, wang_velocity_2025, topicvelo}. 
%
%
See Section~\ref{sec:related} for further discussion of 
different manifold learning methods.


%

\subsection{Our Contribution}

The previous discussion immediately suggests considering manifold representations that directly parametrize the latent manifold via diffeomorphic charts.
Because most real-world latent manifolds are not diffeomorphic to Euclidean space~\cite{cooley}, this approach explicitly requires the use of multiple charts and differentiable transition maps between overlapping charts, i.e., the maintenance of a differentiable atlas~\cite{lovettDifferentialGeometryManifolds2010}.
%
%
%
While some such methods have been proposed,
this approach has been relatively underexplored and has found limited practical application. 
%
A common objection to the deployment of this idea
are the tedious and computationally expensive numerical operations required to construct and maintain a large number of charts and overlaps. 

The main goal of this paper is to provide a proof of concept demonstrating that practical methods that explicitly learn and maintain a differentiable, approximate atlas can enable scalable
analysis of low-dimensional manifold structure that is more faithful and interpretable than afforded by current methods. 
%
%
To this end, we implement a straightforward such data structure, termed \atlas, which, while not optimized, allows for fast chart-membership queries and numerical approximation of exponential maps.
%
Our implementation can represent both known atlases of algebraic manifolds and atlases learned from empirical manifolds given as high-dimensional point-cloud data.
For the latter, we also present a simple heuristic, \atlaslearn, for constructing an \atlas representation via \emph{approximate quadratic coordinate charts}~\cite{sritharan}.
%
%
The rest of the paper is dedicated to exploring the \atlas properties. Our empirical analyses showcase its efficiency and its capability to recover manifold structure and enable machine-learning routines to work directly in the inherent geometry of the data.
In particular, we demonstrate that our atlas scheme can both speed up first-order Riemannian optimization over manifolds with closed-form algebraic structure and enable it in the first place over manifolds learned from empirical point cloud data. 
The following are our main contributions in this direction:
\begin{enumerate}
\item In Section~\ref{sec:grassmann}, we show that the \atlas scheme can speed up first-order Riemannian optimization for a benchmark online subspace learning problem on the Grassmann manifold, a classical target of Riemannian optimization routines~\cite{absil}. 
%
\item In Section~\ref{sec:comparison}, we compare \atlaslearn with existing manifold-learning techniques on the task of reconstructing an established manifold parametrization of Carlsson's high-contrast, natural image patches~\cite{carlsson, sritharan} from the point cloud data. We show that our scheme outperforms competitors in preserving homology
and approximate geodesic path lengths.

\item In Section~\ref{sec:classification_patches}, we implement a Riemannian optimization algorithm -- the Riemannian principal boundary (\texttt{RPB}) algorithm~\cite{yao_2020} -- to solve a classification problem over the Klein bottle, a manifold with nontrivial topology, using \atlas-based primitives. The results demonstrate that, relative to existing manifold learning methods, the atlas approach has better accuracy and interpretability.
\item Finally, in Section~\ref{sec:bio}, we apply \atlaslearn and \atlas to the reconstruction of transcriptomic dynamics over a well-studied hematopoietic scRNA-seq dataset~\cite{qiuMappingTranscriptomicVector2022} and find that the predicted flows adhere much more closely to the latent manifold than do those of previous approaches based on higher-dimensional embeddings.
\end{enumerate}

As prototype solutions to the challenge of learning an effective atlas-like manifold representation, both \atlas and \atlaslearn can likely be improved upon significantly. Indeed, the main point of our paper is \emph{not} their algorithmic optimization, but the finding that such a simple framework already improves upon the inadequacies of existing manifold learning methods.
We hope that our paper can inspire others to design improved algorithms and  use our approach to verify empirical results obtained by other methods.
%

%
%

\subsection{Related Work} \label{sec:related}

\paragraph{Manifold Learning}

Previous work on manifold learning has been centered on the problem of reducing the dimension $n$ of the ambient space $\mathbb{R}^n$ to a target dimension $D$, while preserving the structure of the $d$-dimensional latent manifold, with the goal of saving space and processing time~\cite{meilaManifoldLearningWhat2023}.
The earliest example of such methods is principal component analysis (\texttt{PCA})~\cite{PrincipalComponentAnalysis2002}, which simply projects onto the $D$-dimensional subspace preserving the most variance in the data.
\texttt{Isomap}~\cite{tenenbaumGlobalGeometricFramework2000} and its variations~\cite{hongyuanzhaIsometricEmbeddingContinuum, weinbergerUnsupervisedLearningImage2006} attempt to construct \emph{metric embeddings} of the latent manifold into $\mathbb{R}^d,$ i.e., embeddings of the data in which the Euclidean geodesic (straight-line) distances in $\mathbb{R}^D$ equal (or approximate) the manifold geodesic distances.
Clearly, this is only possible for flat manifolds, no matter how large $D$ is chosen\footnote{Isometric embeddings, such as those of Nash~\cite{nashC1IsometricImbeddings1954,nashImbeddingProblemRiemannian1956}, preserve the Riemannian metric and the geodesic distance \emph{along the manifold}, not with respect to the metric structure of the ambient space.}\cite{robinsonSphereNotFlat2006}.
Diffusion maps~\cite{coifmanDiffusionMaps2006} and Laplacian eigenmaps~\cite{belkinLaplacianEigenmapsSpectral2001a, belkin_2006} successfully approximate the Laplace-Beltrami operator of the latent manifold and yield embeddings that capture the diffusion distance along the manifold. However, they do not enable the execution of optimization routines over the manifold.
%
More advanced methods learn~\cite{koelleManifoldCoordinatesPhysical2022, donohoHessianEigenmapsLocally2003, roweisNonlinearDimensionalityReduction2000} local representations, akin to charts, but combine them into one global Euclidean representation, rather than constructing an atlas.
Finally, methods like $t$-SNE and UMAP, mostly used for visualization are known in practice to greatly distort topological information, particularly in the case of scRNA-seq data~\cite{cooley}.

Ultimately, all these methods aim to produce  a faithful embedding of the latent manifold into $\mathbb{R}^D$, 
which is known to exist for $D \geq 2d$ by Whitney's celebrated embedding result~\cite{whitney1936differentiable}. However, even Whitney's embedding, which is the gold standard of manifold learning~\cite{meilaManifoldLearningWhat2023}, would not let us work directly on the latent manifold. 
To address this issue, all the above methods are often run with $D \approx d$ in the attempt to construct a global single chart for the data manifold. However, this is liable to cause significant distortions in the resulting embedding for manifold with non-trivial topology, which are not homeomorphic to $\mathbb{R}^D.$

There are, however, some good news in that many crucial manifold invariants can be accurately estimated from samples in a local fashion. For instance, 
Little \textit{et al.} prove that if a manifold $\mathcal{M}\subset\mathbb{R}^D$ is observed as a finite sample perturbed by sub-Gaussian noise, then multiscale singular value decomposition (\texttt{mSVD}) can be used to compute both local tangent plane approximations and the intrinsic dimensionality of $\mathcal{M}$ \cite{little}.
In more empirical work,
Sritharan \textit{et al.} describe a method to estimate the Riemannian curvature tensor via  the Gauss-Codazzi equations by computing local quadratic approximations to the manifold  \cite{kac,sritharan}. 
These successful local algorithms, \texttt{mSVD} and local quadratic approximations, will also be the main workhorses of our approach.

\paragraph{Riemannian Optimization}

While many Riemannian optimization routines are intuitive adaptations of Euclidean first- and second-order methods~\cite{absil, hosseini_and_sra}. However, it is far more complex to compute analogous first- and second-order primitives on Riemannian manifolds. For example, the action of a gradient on a point in Euclidean space is simply vector addition, whereas in a Riemannian manifold, this action requires the computation of the exponential map, which is the solution to a specific second-order ordinary differential equation; while closed-form solutions are known for some algebraic manifolds~\cite{absil, boumal_2016}, they are generally unavailable, even for cases as simple as oblate ellipsoids in $\mathbb{R}^3$ \cite{ganshin_1969}.
Circumnavigating this difficulty is often discussed in terms of retractions, i.e., smooth maps that assign to each point a map from the tangent space to the manifold which agrees with the exponential map in the zero-th and first derivatives. 
Euclidean ``step-and-project'' methods, which make updates on a manifold using Euclidean gradients in an ambient space before mapping the update back into the manifold, are prototypical examples of retractions, but can suffer from slow running time due to numerical condition issues \cite{absil}.
Because finding sufficiently fast, accurate retractions is a challenge, they are often discussed in theory  rather than practice \cite{hosseini_and_sra}, with the notable exception of the B\"urer-Monteiro scheme for convex optimization on the  PSD elliptope \cite{cifuentes_2021}.

%

For general manifolds observed empirically as point clouds, 
existing Riemannian optimization techniques have seen limited application, as the differential-geometric primitives lack known closed forms.
In these cases, Euclidean optimization methods are usually deployed on
low-dimensional Euclidean representations learned via the dimensionality reduction techniques discussed above~\cite[e.g.]{huguetManifoldInterpolatingOptimalTransport2022}.  However, downstream machine learning tasks, such as clustering and inference of pseudo-time, have been shown to be sensitive to the choice of low-dimensional representation~\cite{patruno_2020}, emphasizing the need for a more robust approach to Riemannian optimization over learned manifolds.

\section{Methods}
This section summarizes our methodology, including the specification of the \atlas data structure and motivation for its design (Sec.~\ref{subsec:data-structure}), the use of the \atlas to quickly compute retractions, i.e., approximations to the exponential map, and vector transports via simple Euclidean updates (Sec.~\ref{subsec:retractions}), and the method \atlaslearn for constructing an \atlas data structure from point-cloud data (Sec.~\ref{subsec:atlaslearn}).
%

\subsection{The \atlas Data Structure}\label{subsec:data-structure}

\begin{figure}[h]
    \centering
    \begin{overpic}[width=0.49\columnwidth, trim=35 30 40 20, clip]{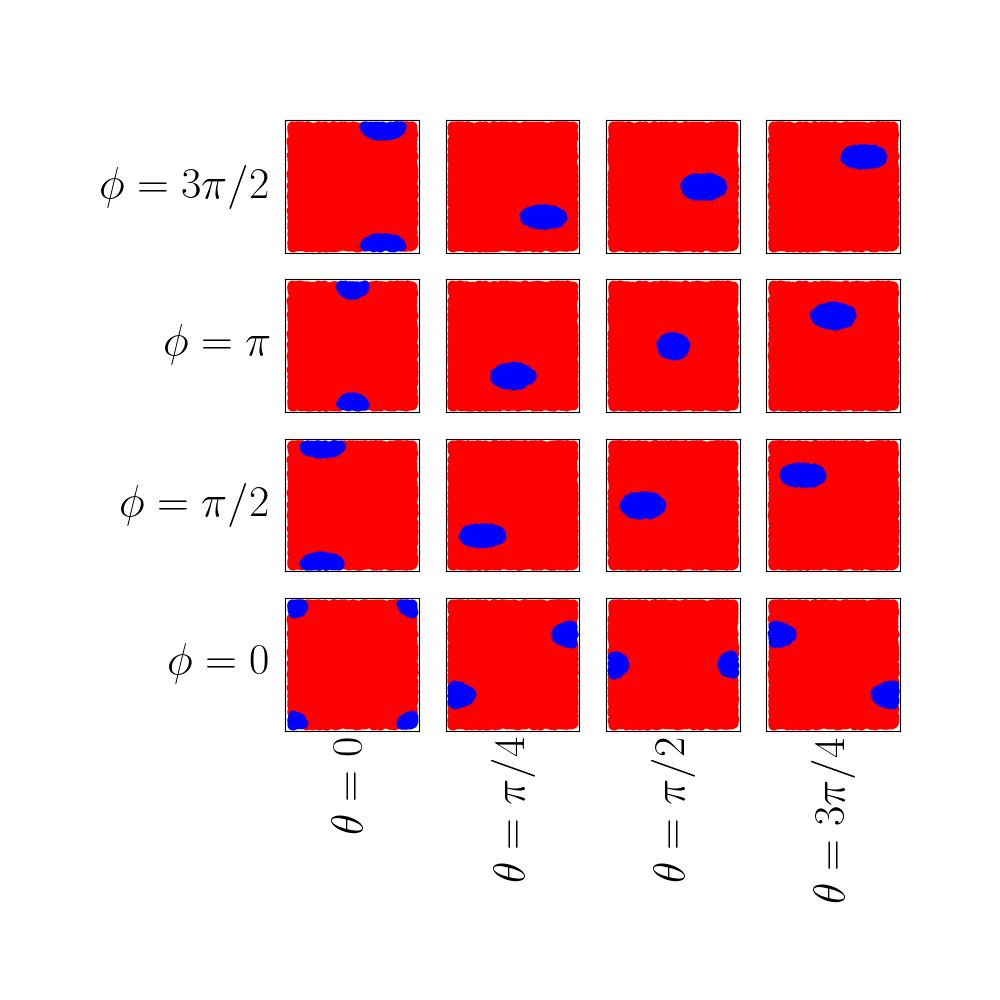} \put(5,95){\textbf{A}} \end{overpic}
    \begin{overpic}[width=0.49\columnwidth, trim=35 30 40 20, clip]{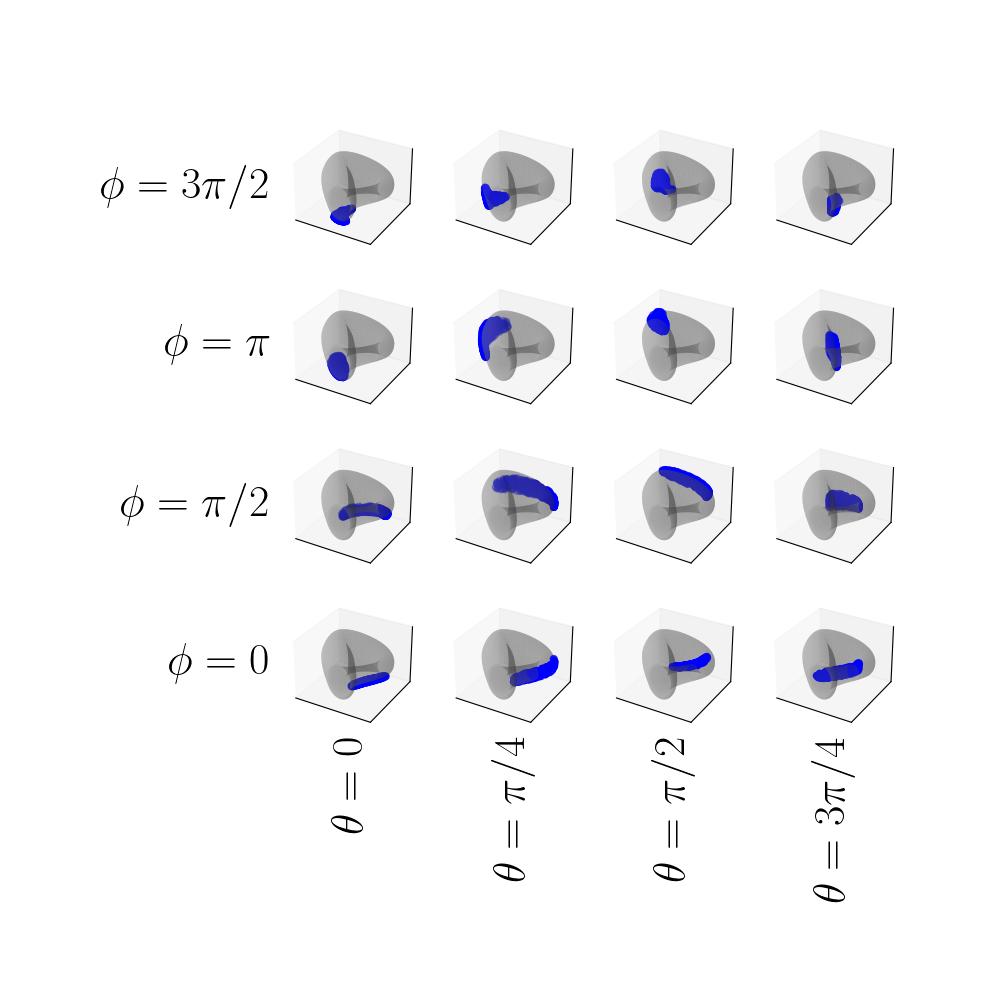}\put(5,95){\textbf{B}}
    \end{overpic}

    \caption{\textbf{Chart domains for a differentiable atlas of the Klein bottle} \textbf{A.} Charts depicted in polar coordinates show points belonging (blue) or not (red) to the chart, and indexed by the $(\theta, \phi)$ coordinates of the chart center $k(\theta,\phi)$ (Sec.~\ref{sec:comparison}). \textbf{B.} The same coordinate charts shown in the Karcher representation.}
    \label{fig:klein_ag_example}
\end{figure}

\paragraph{Differentiable Atlases} Since the work of Whitney, the notion of a differentiable atlas has been a crucial component of the modern mathematical definition of a smooth manifold~\cite{lovettDifferentialGeometryManifolds2010,kobayashi_nomizu}.
%
%
A differentiable atlas for a manifold $\mathcal{M}$ of dimension $d$ is a collection $\left\{\varphi_i: \mathcal{V}_i \subseteq \mathbb{R}^d  \to \mathcal{U}_i\right\}_i$ of bijective coordinate charts\footnote{The coordinate charts are also homeomorphisms. For simplicity, in this presentation, we omit standard topological details.} 
 whose image covers $\mathcal{M}$, i.e., $\bigcup_i \mathcal{U}_i=\mathcal{M}$. 
For coordinate charts $\varphi_i$ and $\varphi_j$ whose images overlap, i.e., $\mathcal{U}_i \bigcap \mathcal{U}_j \neq \emptyset$, an atlas also specifies bijective \textit{transition maps}
\begin{equation}\label{eqn:transition_maps}
    \psi_{ij}:=\left.\left(\varphi_j^{-1} \circ \varphi_i\right)\right\rvert_{\varphi_i^{-1}(\mathcal{U}_i \cap \mathcal{U}_j)}.
\end{equation}
The atlas is differentiable if all the transition maps are differentiable. This definition neatly allows $\mathcal{M}$ to be an abstract topological space, by shifting all the differentiability requirements to the transition maps.
A collection of bijective differentiable transition maps uniquely identifies a differentiable manifold, up to diffeomorphism, even in the absence of corresponding charts.
Motivated by this property,  we will remove the requirement of Equation~\ref{eqn:transition_maps} and define transition maps separately from coordinate chart images. This will significantly simplify the computational work needed to handle empirical manifolds. 

\paragraph{The \atlas Data Structure}
We assume, without loss of generality, that $\mathcal{M}$ is presented through an embedding in high-dimensional space $\mathbb{R}^D$. For applications to point-cloud data, this embedding may have been preprocessed by \texttt{PCA}-based dimensionality reduction to accelerate computation.
An instance of the \atlas data structure maintains $n$ coordinate charts of the form $\{\varphi_i: \mathcal{V}_i \subseteq \mathbb{R}^d \to \mathbb{R}^D\}_{i \in [n]}$, which are differentiable injections. An undirected atlas graph $G=([n], E)$ describes which charts overlap, i.e., $\{i,j\} \in E$ if charts $i$ and $j$ overlap. In this case, a differentiable injective transition map $\psi_{ij}$ is defined from a subset $\mathcal{V}_{ij} \subset \mathcal{V}_i$ to $\mathcal{V}_j.$
The basic attributes and methods of our data structure, are given in Algorithm~\ref{spec:atlas_graph}. 

\paragraph{Patching Together Local Charts} 
The definition of \atlas is motivated by the desire to accommodate not only atlases for manifolds with known structure, but also atlases learned from point-cloud data, which may not exactly conform to the definition of a differentiable atlas. To understand this challenge, let us briefly consider the behavior of \atlaslearn (further detailed in Sec.~\ref{subsec:atlaslearn}).
\atlaslearn starts by locally learning tangent planes via PCA and defining charts in tangential coordinates by performing quadratic regression in the normal space. An immediate obstacle is 
that different charts are solving separate regression problems: for example, it may happen that no pair of images $\phi_i(\mathcal{V}_i)$ and $\phi_j(\mathcal{V}_j)$ even overlap, 
so that transition maps of the form of Equation~\ref{eqn:transition_maps} become undefined.
To accommodate this possibility, while still being able to work on the manifold, we weaken the requirement imposed on transition maps,
allowing the \atlas data structure to build arbitrary differentiable injective transition maps between charts domains, possibly failing to obey Equation~\ref{eqn:transition_maps}.
In other words, we will use the coordinate charts just as canonical maps from each domain $\mathcal{V}_i$ to the embedding space $\mathbb{R}^D$, while the global differentiable structure will be determined by the transition maps.

\begin{algorithm*}[t]
    \caption{\atlas \\
    Generic specification for \atlas data structure} \label{spec:atlas_graph}
    \begin{algorithmic}
        \Require target manifold dimension $d$, ambient dimension $D$, number $n$ of charts, atlas graph $G=([n],E)$
        
        \ReqDo
        
        \For{$i \in [n]$}
            \State $\mathtt{in\_domain}_i: \vec{\xi} \in \mathbb{R}^d \to \{\mathbb{T}, \mathbb{F}\} $ 
            \Comment{Implicitly defines the domain $\mathcal{V}_i$ for chart $\phi_i$}

            \State $\texttt{identify\_new\_chart}_i: \vec{\xi} \in \mathcal{V}_{i} \to 2^{[n]}$ 
            \Comment{Identifies the charts $j$ for which $\psi_{ij}(\vec{\xi})$ is defined, defining $\mathcal{V}_{ij}$} 
            
            \State $\mathtt{coord\_chart}_i: \vec{\xi} \in \mathcal{V}_i \to 
            \mathbb{R}^D$ 
            \Comment{Coordinate chart mapping $\vec{\xi}$ to ambient coordinates $\varphi(\vec{\xi})$}

            \State $\texttt{D\_chart}_i: \vec{\xi} \in \mathcal{V}_i \to \mathbb{R}^{D \times   d}$
            \Comment{Maps $\vec{\xi}$ to the differential operator $D\varphi_i(\vec{\xi})$}


            
            
            
            
        \EndFor

        \For{$\{i,j\} \in E$}
            \State $\psi_{ij}: \mathcal{V}_{ij} \to \mathcal{V}_j$ \Comment{Transition map from $\mathcal{V}_{ij}$ to $\mathcal{V}_j$}
            \State $\texttt{transition\_vector}_{ij}: (\vec{\xi}, \vec{\tau}) \in  \mathcal{V}_{ij} \times \mathbb{R}^d \to \mathbb{R}^d$ \Comment{Maps tangent vector $\vec{\tau}$ at $\vec{\xi} \in \mathcal{V}_i$ to $[D \psi_{ij}]_{\vec{\xi}}(\vec{\tau})$}
        \EndFor
        
        \ReDo
    \end{algorithmic}
\end{algorithm*}

%


\paragraph{Discrepancy Between Charts}
Naturally, our \atlaslearn heuristic will enforce a degree of compatibility between the coordinate charts $\varphi_i$ and the transition maps. In particular, \atlaslearn will ensure that, for any point $\vec{\xi} \in \mathcal{V}_{ij}$, the discrepancy between coordinate charts and transition maps
$$
\| \varphi_i(\vec{\xi}) - \varphi_j(\psi_{ij}(\vec{\xi}))) \|_{\mathbb{R}^D}
$$
goes to $0$ as the number of charts $n$ and the  number of points sampled from a smooth goes to infinity. Notice that if the discrepancy is $0$ for all points $\vec{\xi}$, Equation~\ref{eqn:transition_maps} holds and  the \atlas data structure actually yields a differentiable atlas (Section~\ref{app:atlas}).

In contrast, when a discrepancy exists, error may be introduced that can accumulate over multiple transitions. The closer $\psi_{ji}$ is to $\psi_{ij}^{-1}$, and the fewer times a transition boundary needs to be invoked, the less error is introduced.
Empirically, our results demonstrate numerical robustness of \atlas-enabled computations on various problems despite error introduced from transition maps.


 %
 %
 %


\subsection{Building Retractions via \atlas}\label{subsec:retractions}

In Riemannian optimization, hard-to-compute exponential maps are often replaced by retractions~\cite[e.g.,]{hosseini_and_sra}. 
A retraction $R_{\vec{p}}$ at a point $\vec{p} \in \mathcal{M}$ is a local diffeomorphism from the tangent plane $\mathcal{T}_{\vec{p}}\mathcal{M}$ at $\vec{p}$ to $\mathcal{M}$ which maps $\vec{0}$ to $\vec{p}$ and has the identity as its differential at $\vec{0}$. 
This ensures that the retraction is a first-order approximation to the true exponential map.

Given an \atlas data structure, we can exploit our coordinate charts to give an efficient \emph{local} retraction by the following simple construction~\cite{absil}. Let $\vec{p} =\mathtt{coord\_chart}_i(\vec{\xi})$ and define:
\begin{equation}\label{eq:retraction}
R_{\vec{p}}(\vec{\tau}) := \mathtt{coord\_chart}_i(\vec{\xi} + \mathtt{D\_chart}_i(\vec{\xi})^{-1} \vec{\tau})
\end{equation}
Elementary differentiation shows that $R_{\vec{p}}(\tau)$ is a valid retraction at $\vec{p}$ as long as 
$\vec{\xi} + \mathtt{D\_chart}_i(\vec{\xi})^{-1} \vec{\tau} \in \mathcal{V}_i.$
When this is not the case, we are anyway forced to transition to a different chart. When the coordinate is barely past the transition boundary, we simply transition charts after the update\footnote{In our specific experiments, due to the relatively small magnitude of our first-order updates, we only have to consider this case.}. Otherwise, we perform the update by truncating $\vec{\tau}$ to $\vec{\tau}' = c \cdot \vec{\tau}$, for the largest $c$ such that $\vec{q} = R_{\vec{p}}(\vec{\tau}')$ still lands in $\mathcal{V}_i.$ We then transition $\vec{q}$ and the tangent vector $(1-c) \cdot \tau$ to the next chart.
In the rest of the paper, 
we refer to updates performed via the retraction of Equation~\ref{eq:retraction} as \textit{quasi-Euclidean update}, as, in the coordinate chart representation, they simply add a linear transformation of the update direction $\vec{\tau}$ to the current point $\vec{\xi}.$
In our implementation, we rely on the basic primitives of the \atlas data structure to implement other fundamental manifold methods, including retraction-logarithms (the inverses of retractions), geodesic computations, and vector transports. We provide details 
in the Supplementary Materials (Sec.~\ref{app:approx_primitives}).

\subsection{The \atlaslearn Heuristic} \label{subsec:atlaslearn}

We assume that the dimensionality of the manifold representation be specified \textit{a priori}; this can be learned using dimensionality estimation methods, e.g., multiscale singular value decomposition (mSVD)~\cite{little}.
The \atlaslearn starts 
by using $n$-medoids clustering~\cite{schubertFastEagerKmedoids2021} to partition the $N$-point cloud $X \in \mathbb{R}^{N \times D}$ into $n$ subsets $X_1, \ldots, X_n$, which will correspond to the images of the coordinate charts.
For each $X_i$, we compute the mean $\vec{m}_{i}$ of points in $X_i$ and learn a local tangent plane by computing the top $d$ principal components of the data $X_i$ centered at $m_i$. 
This yields a Stiefel matrix $L_i \in \mathbb{R}^{d \times D}$, which projects onto the tangent space passing through $m_i$, and another Stiefel matrix $M_i \in \mathbb{R}^{(D-d) \times D}$, which projects onto the orthogonal normal space.

With this representation of the tangent space in hand, we parametrize the manifold over $X_i$ by performing a least-squares quadratic regression from the tangent coordinates $L_i^\top X_i$ to the normal coordinates $M_i^\top X_i$.
Let $f_i : \mathbb{R}^d \mapsto \mathbb{R}^{D-d}$ be the resulting quadratic function.
Then, we can implement part of specification of \atlas by the standard parametrization of a graph manifold~\cite{docarmo_2016}, where
$\mathtt{coord\_chart\_i}(\vec{\xi}) :=(\vec{\xi},f_i(\vec{\xi}))$,
and $\mathtt{D\_chart}_i$ is just its differential.
The transition map is constructed as:
$$
\Psi_{ij}(\vec{\xi}) =  L_j^\top \left[ \left(L_i \vec{\xi} + \vec{m}_{i}\right) - \vec{m}_{ j}\right]
$$
with $\mathtt{transition\_vector}_{ij}$ 
being again its differential.
We also note that the coordinate representation at $\vec{\xi}$ of the Riemannian metric induced by the ambient embedding of the manifold can be easily expressed as the matrix $I_d + \nabla f_i(\vec{\xi})^\top \nabla f_i(\vec{\xi}).$
It remains to define a mechanism to determine chart membership. 
We accomplish this by learning the minimum-volume enclosing ellipsoid (MVEE) of $X_i$~\cite{nie_2005}, using a second-order cone program. 
More specifics regarding how the primitives in Algorithm~\ref{spec:atlas_graph} are implemented by \atlaslearn are given in Section~\ref{app:pointcloud}.


\section{Results}

\subsection{Online Subspace Learning}\label{sec:grassmann}
An \atlas data structure can be constructed explicitly for a manifold with a known atlas. We do this for the $(n,k)$-Grassmann manifold $\mathbf{Gr}_{n,k}$ with its canonical Ehresmann atlas. The implementation is a simple exercise in numerical algebra (Supp. Materials, Sec.~\ref{app:grassmann}). 
We use the resulting \atlas to compute the classic Grassmann inductive Fr\'echet expectation estimator (\texttt{GiFEE})~\cite{chakraborty_gifme}, a standard task in online subspace learning.
Our experiments show that our approach, based on quasi-Euclidean retractions on the \atlas data structure, outperforms state-of-the-art methods in runtime, with no loss in accuracy.

\paragraph{Fr\'echet Expectation Estimation}

Given a stream of samples $\mathcal{X}_1,\ldots,\mathcal{X}_i$ from a probability distribution on $\mathbf{Gr}_{n,k}$, the \texttt{GiFEE} estimator is computed by inductively applying the update rule
\begin{align}\label{eqn:log_update}
    \vec{v}_{i+1} & \gets \mathbf{Log}_{M_i}\left(\mathcal{X}_i\right)\\
    M_{i+1} & \gets \textbf{Exp}_{M_i}\left(\frac{1}{i+1}\vec{v}_{i+1}\right),
\end{align}
where $M_1=\mathcal{X}_1$ and the tangent vector $\vec{v}_i\in T_{M_i}\mathbf{Gr}_{n,k}$ is initialized to $\vec{v}_1=\vec{0}$. Chakraborty and Vemuri show that the \texttt{GiFEE} estimator converges in probability to the Fr\'echet mean of any distribution on $\mathbf{Gr}_{n,k}$ under certain limitations on support and Riemannian $L^2$-moment~\cite{chakraborty_gifme}. In this experiment, we use geodesic power distributions $\mathbf{GPD}(\mathcal{X}, p)$ with Fr\'echet mean $\mathcal{X}$ for $p > 1$ (formally defined in the Supplementary Materials).
To compute first-order updates on the \atlas representation, we replace invocations of $\mathbf{Exp}$ with invocations of quasi-Euclidean updates on the \atlas instance and invocations of $\mathbf{Log}$ with an \atlas-based approximation of the Riemannian logarithm, which are detailed in the Supplementary Materials (Sec.~\ref{app:grassmann}). 

\paragraph{Competitors} We compare our algorithm against these Grassmannian optimization routines: (1) the original GiFEE algorithm~\cite{chakraborty_gifme}; (2) \texttt{MANOPT}, the state-of-the-art package for manifold optimization~\cite{boumal_2016,em_alt,luchnikov_2021}; and 3)  \texttt{MANOPT-RET}, a method in the \texttt{MANOPT} package that uses a more efficient retraction.
In contrast, \texttt{GiFMEE} and \texttt{MANOPT} use closed forms for logarithms and exponential maps in $\mathbf{Gr}_{n,k}$,
but differ in how the numerical steps are organized, leading to different performance.
More details on the specific implementations are given in the Supplementary Materials.

\begin{figure}[h]
    \centering
    \includegraphics[width=\columnwidth, trim=0 0 0 30, clip]{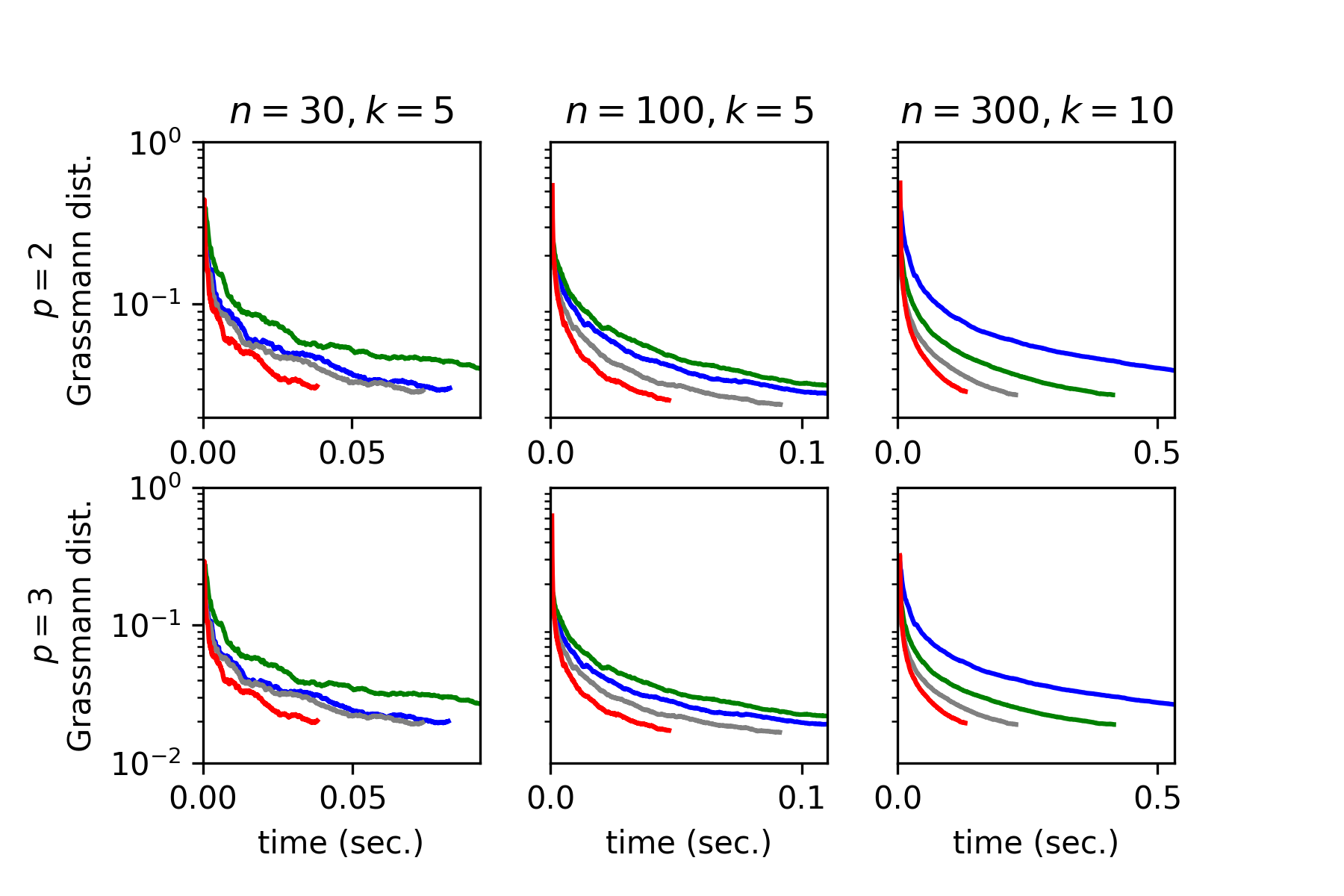}
    \includegraphics[width=\columnwidth, trim=60 30 60 30, clip]{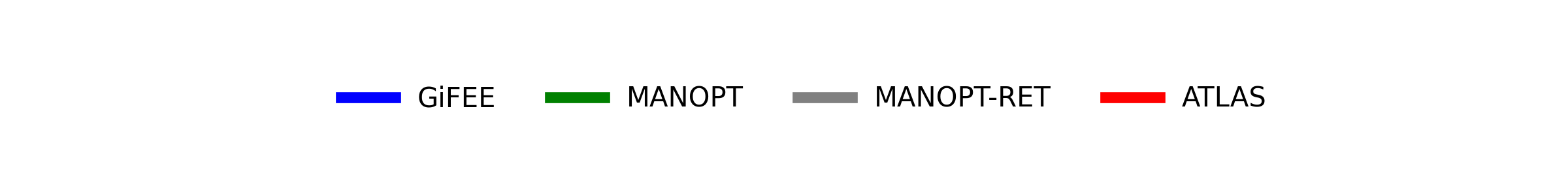}
    
    \caption{\textbf{Quasi-Euclidean updates in the \atlas representation of the Grassmannian $\mathbf{Gr}_{n,k}$ converge faster to the population Fr\'echet mean than other first-order update schemes.} Logarithms and retractions are executed by each method one observation at a time for 1,000 iterations, each observation being a $\mathbb{R}^{n, k}$ Stiefel matrix. Stiefel matrices $X \in \mathbb{R}^{n \times k}$ drawn from the distribution $\mathbf{GPD}(\mathcal{X},p)$ with fixed Fr\'echet mean $\mathcal{X}$ and $p\in\{2,3\}$ (Sec.~\ref{sec:geodesic_power_distribution}). $\mathbf{Gr}_{n,k}$-distance to $\mathcal{X}$ from each method's iterate is plotted against cumulative runtime.}
    \label{fig:table_high_nk}
\end{figure}

\paragraph{Experimental Results} All methods have similar, high accuracy, measured by geodesic distance between the \texttt{GiFEE} estimator and the true population Fr\'echet mean as a function of the number of iterations. The experimental results show runtime superiority of our 
\atlas approach over the other first-order routines for experiments with $(n,k)$ set to $(30,5)$, $(100,5)$, or $(300,10)$ (Fig.~\ref{fig:table_high_nk}). 


\subsection{Preservation of Manifold Geometry}\label{sec:comparison}

Here, we evaluate how well the \atlas data structure constructed by \atlaslearn preserves topological and geometric features of the ground-truth manifold for realistic data. Because it has a well-studied structure, we leverage the space of $3\times3$ high-contrast, natural image patches \cite{vanhaterenIndependentComponentFilters1998}, the underlying manifold of which Carlsson elegantly parametrized and showed was homeomorphic to the Klein bottle~\cite{carlsson}. Focusing on the $k_0$ parametrization~\cite{sritharan}, which induces a Riemannian metric through a specific embedding into $\mathbb{R}^{3 \times 3}$, we 
construct meshes of different cardinalities on the manifold  and use them as inputs for \atlaslearn and the dimensionality-reduction methods \texttt{UMAP}, \texttt{TSNE}, \texttt{LLE}, \texttt{Isomap} and \texttt{PCA}.
All methods are run with target dimension $D=2$, the intrinsic dimension of the manifold. We then compute persistent homology and geodesic distances on the resulting representations.

\begin{figure}[h]
    \centering
    \includegraphics[width=0.9\columnwidth, trim=0 20 0 20, clip]{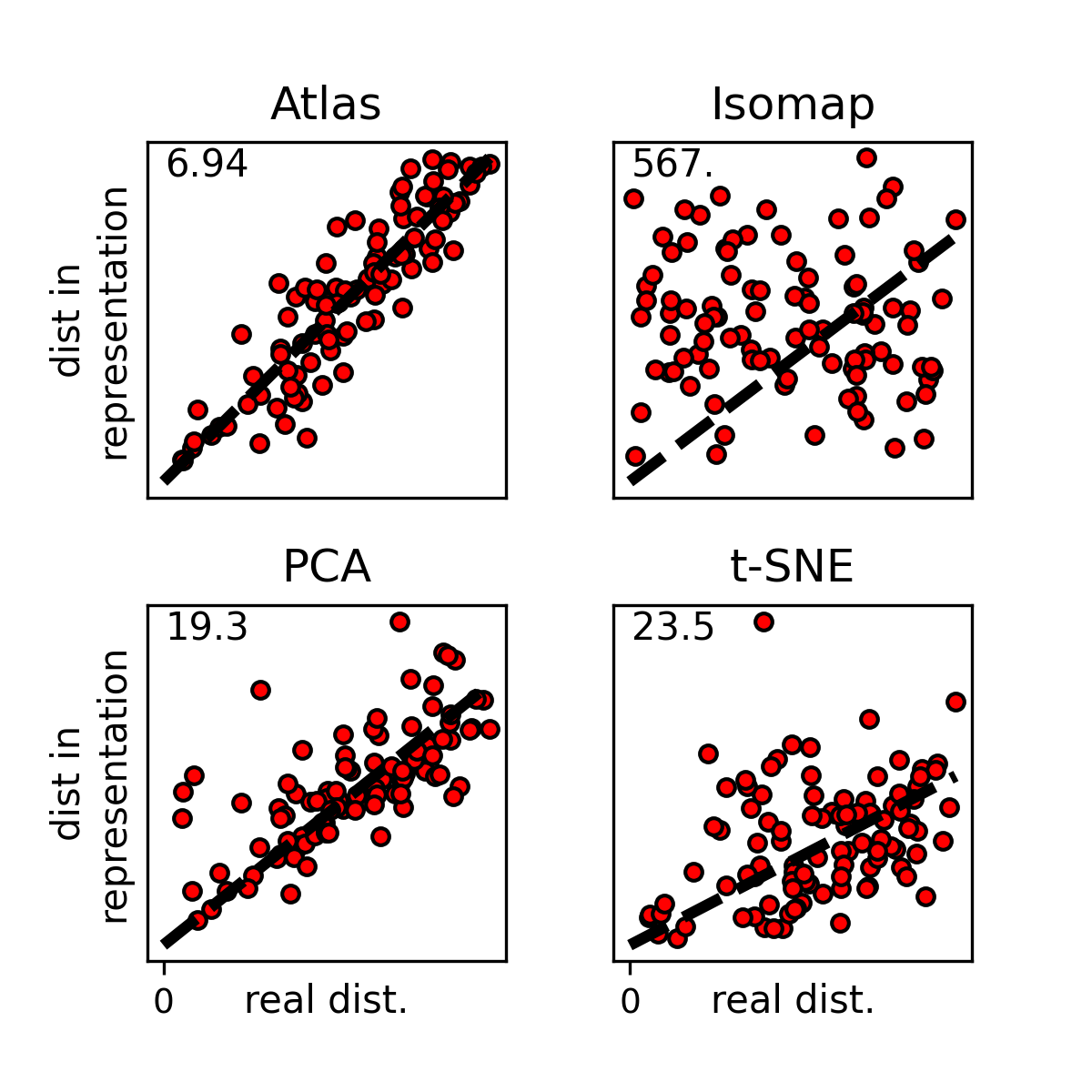}
    \caption{\textbf{The \atlas data structure preserves geodesic distances  better than common dimensionality-reduction techniques.} 
    \texttt{Isomap}, \texttt{PCA} and \texttt{$t$-SNE} were computed on a mesh of 1 million points. \atlaslearn was asked to produce 64 charts, given only a 10,000-points mesh. For 100 randomly sampled pairs of points on the Klein bottle, each scatter plot shows each pair (dot) according to a precise estimate of their true geodesic distance (x axis, ``real dist.'') versus the embedding distance (y axis, ``dist. in representation'') for each embedding (panel title) (Supp. Materials, Sec.~\ref{app:klein}). Metric distortion given in top left corner. Dashed line, fit with $y$-intercept zero.}
    \label{fig:dist_fig}
\end{figure}

The results show that geodesic distances between pairs of points are better preserved by the \atlas representation than by the other methods (Fig.~\ref{fig:dist_fig}).
In particular, \texttt{Isomap}, which explicitly aims to preserve geodesic distances, fares very poorly.
These findings, robust even larger dimension budget $D>2$ is allowed (Supp. Materials, Sec.~\ref{app:klein}), likely stem from the nontrivial topology and curvature of the Klein bottle.
Indeed, we find that while competitors do not preserve homological features, as measured by an aggregate bottleneck distance~\cite{oudot}, the \atlas representation are almost perfectly recapitulates these features (Supp. Materials, Sec.~\ref{app:klein}).  
%

\subsection{Classification of Image Patches}\label{sec:classification_patches}

\begin{figure}[h]
    \centering
    \includegraphics[width=\columnwidth, trim=90 30 50 50, clip]{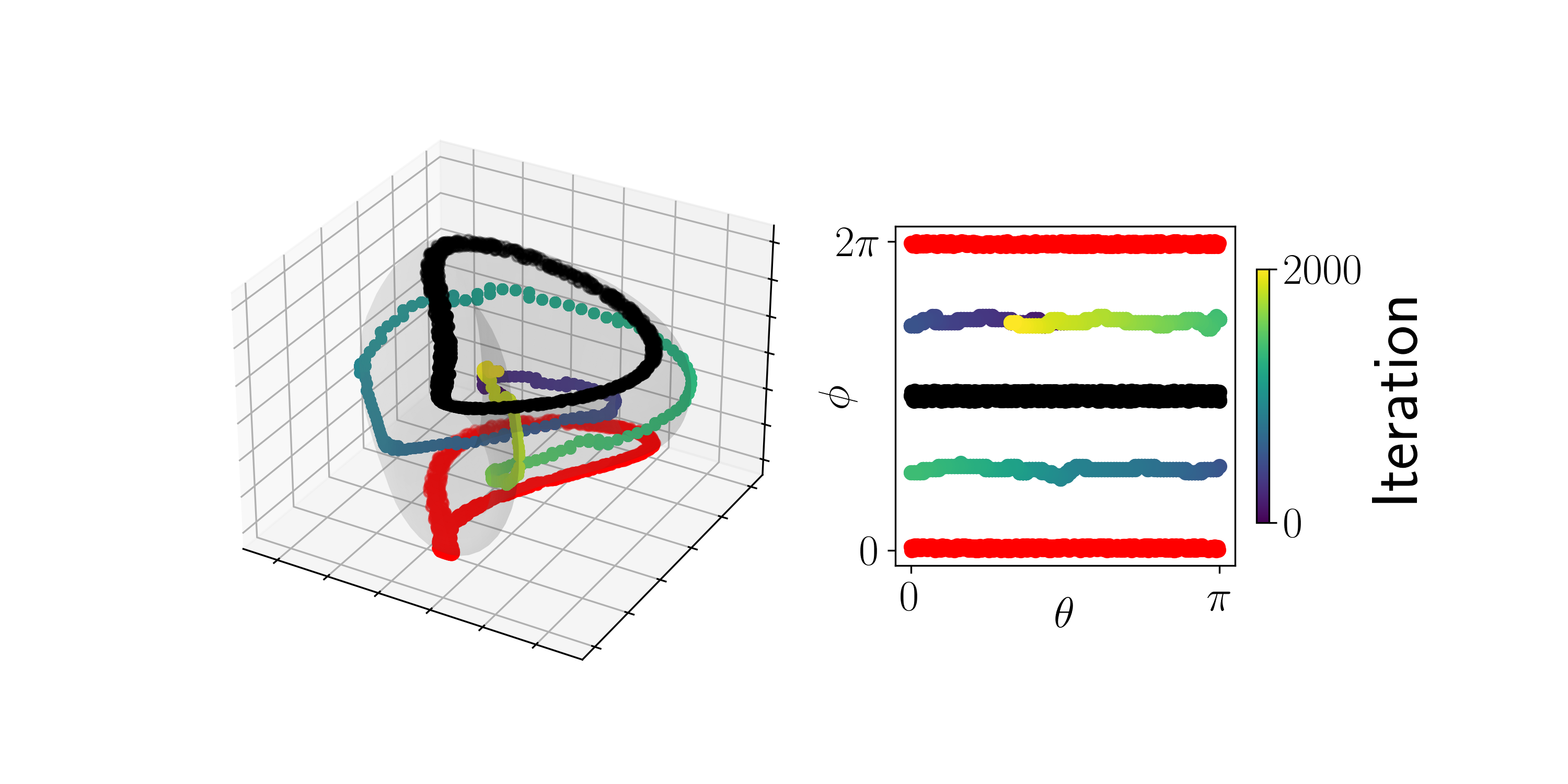}
    
    \caption{\textbf{The \texttt{RPB} algorithm effectively learns a discriminating boundary on the manifold of high-contrast image patches.} A 64-chart \atlas data structure was generated by \texttt{Atlas-Klein} from a 10,000-point mesh of the Klein bottle manifold for high-contrast image patches. The \texttt{RPB} implementation was then applied to learn a discriminating boundary (colored by discretized ODE iterations of the algorithm) between the convex (red) and concave (black) patches, as illustrated in Karcher (left) and polar (right) coordinates.}
    \label{fig:rpb}
\end{figure}

To test the ability of a learned \atlas to enable Riemannian optimization and machine learning, we implemented and applied the Riemannian principal boundary (\texttt{RPB}) algorithm~\cite{yao_2020}, which generalizes support vector machines (\texttt{SVM}) to Riemannian geometry.
The natural image patches have a natural principal boundary between convex and concave patches, which we aim to identify using \texttt{RPB} algorithmic approach (Fig.~\ref{fig:rpb}, Supp. Materials, Sec.~\ref{app:klein}).
For a 2-dimensional manifold $\mathcal{M}$, the \texttt{RPB} algorithm learns a binary classifier by first learning one-dimensional ``boundary'' submanifolds $\Gamma,\Gamma^\prime\subset\mathcal{M}$, for each of the two classes, and then ``interpolating'' between these two boundaries to create a 1-dimensional separating submanifold. The interpolation is done by characterizing $\Gamma$ and $\Gamma^\prime$ as solutions to  ordinary differential equations (ODEs) called \emph{principal flows} parameterized by the class label. At each iteration of the algorithm, the solution curve to each principle flow is approximated locally as a short geodesic update in the direction of the first derivative given by the ODE. The boundary curve is simultaneously updated by parallel-transporting the first derivatives from the two principle flows, taking their weighted Fr\'echet mean, and taking a small geodesic update in the direction of this average. The resulting boundary curve, which is referred to as the \textit{principal boundary}, serves as a binary classifier. 

Our \texttt{RPB} algorithm implementation applies Euler's method over 2,000 iterations to simultaneously integrate the principal flows for the convex and concave patches, as well as the principal boundary ODE.  
For easier visualization in polar coordinates, the \atlas is built using \texttt{Atlas-Klein}, a deterministic variant of \atlaslearn with 64 chart centers and spherical bounding ellipsoids that are specified manually using a Cartesian mesh in $k_0$-polar coordinates (Supp. Materials, Sec.~\ref{app:klein}). 

Our analyses show that \texttt{RPB} successfully learns a boundary curve between the convex and concave patches, solely in terms of the geometry encoded in the \atlas representation (Fig.~\ref{fig:rpb}). The result is a robust separator that is interpretable, i.e., entirely composed of high-contrast patches, which other techniques fail to produce (detailed in Supp. Materials, Sec~\ref{app:rpb}).

\subsection{Integrating Vector Flows Over Transcriptomic Space}\label{sec:bio}

Finding that the \atlas performs well for manifolds with atlases that are either known or learned from relatively low-dimensional point-cloud data, we turn to a more complex setting. In this section, we explore the utility of a learned \atlas representation in the study of dynamics over real-world single-cell transcriptomic data. Specifically, we consider an application in which vector field information, representing estimated time derivatives of the transcriptional states of cells, known as ``RNA velocities''~\cite{frank_rec, qiuMappingTranscriptomicVector2022, wang_velocity_2025, topicvelo}, have been inferred and we wish to simulate the resulting RNA dynamics.

A benchmark dataset for this task from Qiu \textit{et al.} contains transcriptome-wide mRNA expression data from CD34+ hematopoietic stem cells (HSCs) and progenitor cells~\cite{qiuMappingTranscriptomicVector2022}.
%
Their computational framework \texttt{Dynamo} leverages metabolic labeling, which distinguishes nascent from pre-existing mRNA, to estimate more accurate RNA velocity vectors for each datum (cell). They then use sparse kernel regression to extend these velocities to a vector field  $\vec{V}_{\texttt{amb}}$ over the ambient space $\mathbb{R}^{30}$, determined by the top 30 principal components. 
Finally, they integrate this vector field as an ODE in ambient space to reveal transcriptional trajectories characterizing the differentiation of HSCs into terminal cell types.

An explicit assumption behind this approach is that integration over the vector field, starting at HSC data points, should maintain proximity to the latent lower-dimensional manifold $\mathcal{M}$ of mRNA expression.
To verify this assumption, we compared the ODE solution yielded by \texttt{Dynamo}'s ambient-space integration against that yielded by integrating the ODE over an \atlas learned via \atlaslearn.
To perform this integration, we endow our \atlas with a vector field $\vec{V}_{\texttt{atlas}}$ given by the projection of $\vec{V}_{\texttt{amb}}$ onto the tangent spaces produced by \atlaslearn .
The complete setup is detailed in the Supplementary Materials (Sec.~\ref{app:rna}).

The results show that the iterates of the \atlas-restricted ODE lie significantly closer to datapoints than ambient ODE iterates do (Fig.~\ref{fig:dynamo_iterates}), suggesting that the ambient ODE veers away from the manifold.
Moreover, the average distance between an \atlas-restricted iterate and the closest datum  is relatively small, falling below the global average distance between data  points and the learned \atlas manifold. 
%
%
These findings suggest that the learned \atlas instance serves as a constraint that enables the integration along $\vec{V}_{\texttt{amb}}$  to reconstruct transcriptional dynamics in HSC differentiation more faithfully.

\begin{figure}
    \centering
    \includegraphics[width=\columnwidth, trim=90 0 28 0, clip]{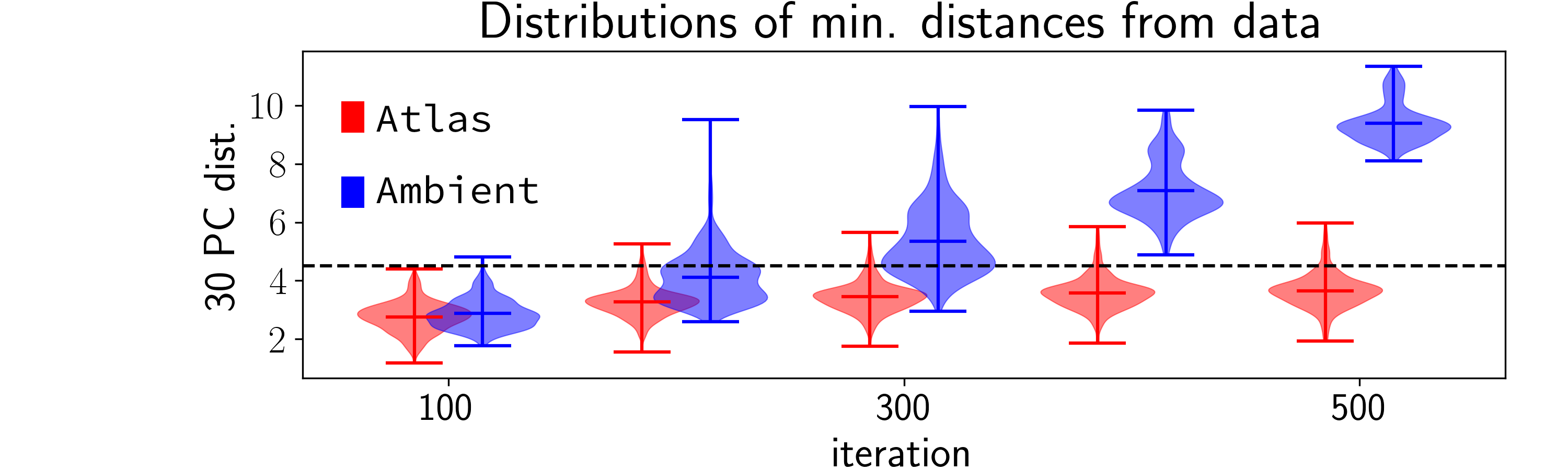}
     \caption{\textbf{Integrating the RNA velocity vector field over the \atlas representation rather than the ambient space more closely matches observations in hematopoietic data.}
     For 309 distinct initial conditions, corresponding to labeled HSCs, a vector field over human hematopoietic cells is integrated, either over the ambient space (blue) or over a 5-dimensional \atlas representation with 30 charts (red).  At every iteration, distances from each of the 309 evolutions to the closest datum in 30-PC space is computed; these are shown as violin plots. Dashed line indicates average distance between a datum and nearest \atlas point.}
    \label{fig:dynamo_iterates}
\end{figure}

\section{Discussion}

We present an atlas-based approach to manifold learning and Riemannian optimization that is built upon a minimalist representation of a differentiable atlas. Surprisingly, our methods are able to preserve nontrivial homology and metric information in a way that state-of-the-art dimensionality reduction techniques cannot achieve. 
They also enable the implementation of Riemannian optimization routines on manifolds both known and learned from point-cloud data.

One weakness of our current definition of \atlas is the lack of support for varifolds, generalizations of manifolds whose intrinsic dimensionality may change from location to location. This is important for real-world data that are unlikely to conform to a constant dimension. Another opportunity for improvement lies in the construction of charts within \atlaslearn, which could benefit from adaptively choosing chart centers by considering the goodness of fit of the
resulting quadratic approximations. Finally, robust Riemannian optimization on an atlas-based representation requires careful consideration
of the error introduced through both geodesic approximation and transition maps, which remains to be evaluated rigorously. 


\section*{Acknowledgments}
SJR and LO gratefully acknowledge the support of the NSF-Simons National Institute for Theory and Mathematics in Biology via grants NSF DMS-2235451 and Simons Foundation MP-TMPS-00005320. SJR is a CZ Biohub Investigator. RAR was supported in part by a National Science Foundation Graduate Research Fellowship. The authors wish to thank Shmuel Weinberger for his feedback and for many inspiring conversations.

\bibliographystyle{plain} 
\bibliography{references}


\clearpage
\appendix
\thispagestyle{empty}
\onecolumn
\aistatstitle{Atlas-based Manifold Representations\\ for Interpretable Riemannian Machine Learning: \\
Supplementary Materials}

\begin{bibunit}[plain]
\section*{Clarifications}

In the body of this paper, we list $\texttt{coord\_chart}_i$ as a method of the \atlas instance that implements the computation of $\varphi_i$. We use ``$\texttt{coord\_chart}_i$'' and ``$\varphi_i$'' interchangeably in the body, though they refer to the same thing abstractly. In the supplement, and in future versions of this paper, we will limit ourselves to ``$\varphi_i$'' for simplicity.

Further, in the body, we use $k_0$ to refer to the map $k^0$ used to parametrized the Klein bottle in~\cite{sritharan}. For the sake of consistency with the body, we continue to use $k_0$ in the supplement.

\section*{Errata}
In Section~\ref{subsec:atlaslearn}, we use $\Psi_{ij}$ to denote the transition map from $\mathcal{V}_i$ to $\mathcal{V}_j$ and define
$$
\Psi_{ij}(\vec{\xi}) =  L_j^\top \left[ \left(L_i \vec{\xi} + \vec{m}_{i}\right) - \vec{m}_{ j}\right].
$$
While this is a plausible choice, it is actually incorrect as it does not make use of the quadratic approximation to the manifold. The correct formula is:
$$
\Psi_{ij}(\vec{\xi}) =  L_j^\top \left[\frac{1}{2}M_i K_i\left(\vec{\xi}\otimes \vec{\xi}\right) + \left(L_i \vec{\xi} + \vec{m}_{i}\right) - \vec{m}_{ j}\right].
$$
In the body, we use the notation $\Psi_{ij}$ and $\psi_{ij}$ to refer to the transition map; in the supplement, we restrict ourselves to $\psi_{ij}$ to reduce confusion.

\section*{Code Repository}

All code is available at the anonymized repository \url{https://anonymous.4open.science/r/atlas\_graph\_learning-6DE0}.

\section{Details and Mathematical Results on \atlas}
\label{app:atlas}

\subsection{\atlas discrepancy}

Recall the following definition:
\begin{definition}
Given an \atlas data structure $\mathcal{A}$ with charts $\varphi_i: \mathcal{V}_i \to \mathbb{R}^D$ and transition maps $\psi_{ij}: \mathcal{V}_{ij} \to V_j,$ the discrepancy of  $\mathcal{A}$ is defined as 
$$
\sup_{i,j} \sup_{\vec{\xi} \in \mathcal{V}_{ij}} \|\varphi_i(\vec{\xi}) - \phi_j(\psi_{ij}(\vec{\xi}))\|_{\mathbb{R}^D}.
$$
\end{definition}

As claimed in the body, if the discrepancy of an \atlas $\mathcal{A}$ is $0$, then $\mathcal{A}$ is a differentiable atlas for a manifold $\cM \subset \mathbb{R}^D.$

\begin{lemma}
If the discrepancy of \atlas $\mathcal{A}$  equals $0$, then for all charts $i,j$ 
$$
\psi_{ij} = \varphi_j^{-1} \circ \varphi_i
$$
on $\mathcal{V}_{ij}.$
\end{lemma}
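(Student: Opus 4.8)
The plan is to simply unwind the definition of zero discrepancy and then exploit the injectivity of the coordinate charts, which is part of the \atlas specification. First I would observe that the discrepancy is a supremum of nonnegative real numbers (norms in $\mathbb{R}^D$). Hence the hypothesis that this supremum equals $0$ forces every individual term to vanish: for every pair of overlapping charts $i,j$ and every $\vec{\xi} \in \mathcal{V}_{ij}$,
\[
\|\varphi_i(\vec{\xi}) - \varphi_j(\psi_{ij}(\vec{\xi}))\|_{\mathbb{R}^D} = 0,
\]
and therefore $\varphi_i(\vec{\xi}) = \varphi_j(\psi_{ij}(\vec{\xi}))$ as points of $\mathbb{R}^D$.

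Next I would recall that, by the specification of the \atlas data structure, each $\varphi_j$ is a differentiable \emph{injection} on $\mathcal{V}_j$, so it admits a well-defined inverse $\varphi_j^{-1}$ on its image $\varphi_j(\mathcal{V}_j)$. Since the transition map satisfies $\psi_{ij}(\vec{\xi}) \in \mathcal{V}_j$ by construction, the pointwise equality above shows that $\varphi_i(\vec{\xi}) = \varphi_j(\psi_{ij}(\vec{\xi})) \in \varphi_j(\mathcal{V}_j)$. Applying $\varphi_j^{-1}$ to both sides then yields
\[
\varphi_j^{-1}(\varphi_i(\vec{\xi})) = \psi_{ij}(\vec{\xi}),
\]
i.e., $\psi_{ij} = \varphi_j^{-1} \circ \varphi_i$ on $\mathcal{V}_{ij}$, as claimed.

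The step I expect to require the most care is not a genuine obstacle but rather a bookkeeping point: one must confirm that the composition $\varphi_j^{-1} \circ \varphi_i$ is even well-defined on $\mathcal{V}_{ij}$, since a priori $\varphi_i(\mathcal{V}_{ij})$ need not lie in the image $\varphi_j(\mathcal{V}_j)$. The key observation is that the zero-discrepancy hypothesis does double duty here: the same identity $\varphi_i(\vec{\xi}) = \varphi_j(\psi_{ij}(\vec{\xi}))$ that produces the final equality also certifies that $\varphi_i(\vec{\xi})$ lands in $\varphi_j(\mathcal{V}_j)$, so the inverse may legitimately be applied. Beyond this, the argument is purely formal and relies on no differentiability or topological structure, only on injectivity of the charts and nonnegativity of the norm.
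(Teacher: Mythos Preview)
Your proposal is correct and follows essentially the same approach as the paper's own proof: both arguments observe that zero discrepancy forces each individual term $\|\varphi_i(\vec{\xi}) - \varphi_j(\psi_{ij}(\vec{\xi}))\|_{\mathbb{R}^D}$ to vanish, and then invoke injectivity of $\varphi_j$ to apply $\varphi_j^{-1}$ and conclude. Your version is in fact more careful than the paper's, since you explicitly address why $\varphi_j^{-1} \circ \varphi_i$ is well-defined on $\mathcal{V}_{ij}$ under the hypothesis, a point the paper leaves implicit.
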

\begin{proof}
 By the definition of discrepancy, for all charts $i,j$, for all $\vec{\xi} \in \mathcal{V}_{ij}$, we have
 $$
\|\varphi_i(\vec{\xi}) - \phi_j(\psi_{ij}(\vec{\xi}))\|_{\mathbb{R}^D} = 0 \implies \varphi_i(\vec{\xi}) = \phi_j(\psi_{ij}(\vec{\xi})) \implies \psi_{ij}(\vec{\xi}) = (\varphi_j^{-1} \circ \varphi_i)(\vec{\xi}).
 $$
\end{proof}

In Section~\ref{sec:app_atlas_learn_convergence},  we sketch the proof that, given $n$ samples from the uniform distribution over a smooth manifold $\cM$ embedded in $\mathbb{R}^D,$ \atlaslearn builds an \atlas data structure $\mathcal{A}$ whose discrepancy goes to $0$ as $n$ goes to infinity, showing that $\mathcal{A}$ converges to a differentiable atlas.

\subsection{Representation of the Riemannian metric on \atlas}\label{sec:atlas_graph_met}
Let $\left(\mathcal{V},\varphi\right)$ be a coordinate chart of Riemannian manifold $\mathcal{M}$. If $\mathfrak{g}$ is the Riemannian metric on $\mathcal{M}$, then $\mathfrak{g}_p$ can be thought of as a linear isomorphism from $T_p\mathcal{M}$ to $T_p^*\mathcal{M}$ (i.e., the vector space of linear functionals on $T_p\mathcal{M}$). For $\vec{\xi}=\varphi(p)$, we can define a distinct Riemannian metric $\overline{\mathfrak{g}}_{\vec{\xi}}:T_{\varphi(p)}\mathcal{V}\to T_{\varphi(p)}^*\mathcal{V}$ in terms of local coordinate charts such that
\begin{equation}\label{eqn:g_bar}
\overline{\mathfrak{g}}_{\vec{\xi}}(\vec{\tau}):\vec{\tau}^\prime\mapsto\mathfrak{g}_p\left(\left[D\varphi^{-1}\right]_{\varphi(p)}\left(\vec{\tau}\right)\right)\left(\left[D\varphi^{-1}\right]_{\varphi(p)}\left(\vec{\tau}^\prime\right)\right),
\end{equation}
where $D\varphi^{-1}$ denotes the differential of $\varphi^{-1}$.

Similarly, for an \atlas data structure, we approximate the metric $\overline{\mathfrak{g}}_{\vec{\xi}}(\vec{\tau})$ for $\vec{\xi} \in \mathcal{V}_i$ as:
$$
\overline{\mathfrak{g}}_{\vec{\xi}}(\vec{\tau}) : \vec{\tau}' \mapsto \tau^\top(\mathtt{D\_chart}(\vec{\xi})^{-1})^\top (\mathtt{D\_chart}(\vec{\xi})^{-1}) \tau'
$$
where we took $\mathfrak{g}_p$ to be the identity, i.e., the standard metric for the Euclidean space $\mathbb{R}^D$ in which the manifold $\cM$ is embedded.

\section{Approximation of differential-geometric primitives in approximately geodetic coordinates}\label{app:approx_primitives}

\subsection{Approximation error of vector transport on \atlas}\label{app:vec_trans}

When a vector field $V$ acts on a differentiable manifold $\mathcal{M}$ to take a point $p$ to $q$, vectors in $T_p\mathcal{M}$ should be transformed into vectors in $T_q\mathcal{M}$. There are many linear isomorphisms between $T_p\mathcal{M}$ and $T_q\mathcal{M}$, and the ``correct'' choice of isomorphism should be uniquely determined by the path $\gamma:[0,1]\to\mathcal{M}$ along which $V$ takes $p$ to $q$. This isomorphism $T_p\mathcal{M}\to T_q\mathcal{M}$ determined by $\gamma$ is called the \textit{parallel transport}. Its fundamental role in first-order methods over manifolds is described in~\cite{hosseini_and_sra}.

For \atlas, within a chart, we approximate parallel transport $\mathcal{P}^{\mathbf{Ret}_p}_{p\to\mathbf{Ret}_p\left(\vec{v}\right)}\vec{w}$ of tangent vector $\vec{w}\in T_p\mathcal{M}$ to $T_{\mathbf{Ret}_p\left(\vec{v}\right)}\mathcal{M}$ along $\mathbf{Ret}_p$ with the identity vector transport 
\begin{equation}\label{eq:vec_transport}
\mathcal{T}_{\vec{\xi}\to\vec{\xi}+\vec{\tau}}:\vec{\sigma}\mapsto\vec{\sigma}
\end{equation}
It is used to approximate Riemannian logarithms (Secs.~\ref{app:grassmann} and~\ref{app:rpb}), to approximate geodesic path lengths (Sec.~\ref{app:klein}) and implement the Riemannian principal boundary (RPB) algorithm (Sec.~\ref{app:rpb}).

Here, we look at the error term of the identity vector transport of Equation~\ref{eq:vec_transport} as an approximation of the parallel transport of a tangent vector along a quasi-Euclidean update. 

\begin{theorem}
    The vector transport of Equation~\ref{eq:vec_transport} approximates parallel transport along quasi-Euclidean updates up to order $O\left(\left\lVert\vec{\sigma}\right\rVert_{\overline{\mathfrak{g}}}\left\lVert\vec{\tau}\right\rVert_{\overline{\mathfrak{g}}}^2\right)$.
\end{theorem}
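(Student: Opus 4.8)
The plan is to pass to local coordinates, where both the identity vector transport and the true parallel transport act on the coordinate components of a tangent vector, and to compare them by Taylor-expanding the parallel transport ODE to second order in the displacement. Writing the quasi-Euclidean update in chart $i$ as the straight-line path $\gamma(t) = \vec{\xi} + t\vec{\tau}$, $t \in [0,1]$, in coordinate space, the coordinate components $\sigma^k(t)$ of the parallel transport of $\vec{\sigma}$ obey $\dot{\sigma}^k = -\Gamma^k_{ab}(\gamma(t))\,\tau^a \sigma^b$, where $\Gamma^k_{ab}$ are the Christoffel symbols of the pullback metric $\overline{\mathfrak{g}}$. Integrating from $t=0$ to $t=1$ and expanding gives
\begin{equation*}
\mathcal{P}\vec{\sigma} = \vec{\sigma} - \Gamma(\vec{\xi})(\vec{\tau},\vec{\sigma}) + O\!\left(\|\vec{\tau}\|^2\|\vec{\sigma}\|\right),
\end{equation*}
where $\Gamma(\vec{\xi})(\vec{\tau},\vec{\sigma})$ abbreviates the bilinear contraction $\Gamma^k_{ab}(\vec{\xi})\tau^a\sigma^b$ and the remainder collects the $\frac12\ddot{\sigma}$ and higher terms, each of which carries two factors of $\vec{\tau}$ and one of $\vec{\sigma}$. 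Since the identity transport returns $\mathcal{T}\vec{\sigma} = \vec{\sigma}$, the transport error is exactly $\mathcal{P}\vec{\sigma} - \mathcal{T}\vec{\sigma} = -\Gamma(\vec{\xi})(\vec{\tau},\vec{\sigma}) + O(\|\vec{\tau}\|^2\|\vec{\sigma}\|)$, which a priori is only first order in the displacement.

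The crux of the argument is to show that the first-order Christoffel term vanishes because the \atlas coordinates are (approximately) geodetic. For the graph charts produced by \atlaslearn, $\varphi_i(\vec{\xi}) = (\vec{\xi}, f_i(\vec{\xi}))$, the induced metric is $g_{ab}(\vec{\xi}) = \delta_{ab} + \sum_\alpha \partial_a f_i^\alpha(\vec{\xi})\,\partial_b f_i^\alpha(\vec{\xi})$, so its derivative $\partial_c g_{ab} = \sum_\alpha(\partial_c\partial_a f_i^\alpha\,\partial_b f_i^\alpha + \partial_a f_i^\alpha\,\partial_c\partial_b f_i^\alpha)$ is a sum of terms each carrying a factor of $\nabla f_i$. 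Because the tangent plane is fit by PCA and $f_i$ is a centered quadratic (so $\nabla f_i$ is linear in $\vec{\xi}$), $\nabla f_i$ vanishes at the chart center; hence $\partial_c g_{ab}$, and therefore every Christoffel symbol $\Gamma^k_{ab}$, vanishes there. At such a base point the first-order term disappears and the error collapses to $O(\|\vec{\tau}\|^2\|\vec{\sigma}\|)$. More generally $\|\nabla f_i(\vec{\xi})\|$ is bounded by the distance of $\vec{\xi}$ from the center and shrinks under mesh refinement, which is precisely the sense in which the coordinates are \emph{approximately} geodetic.

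Finally, to match the norms in the statement I would observe that wherever $\nabla f_i = 0$ the metric is $g = I_d$, so $\|\cdot\|_{\overline{\mathfrak{g}}}$ coincides with the Euclidean norm at the base point; away from it the two norms are equivalent with constants tending to $1$ as the chart shrinks, so the Euclidean bound $O(\|\vec{\tau}\|^2\|\vec{\sigma}\|)$ is equivalent to the stated $O(\|\vec{\sigma}\|_{\overline{\mathfrak{g}}}\|\vec{\tau}\|_{\overline{\mathfrak{g}}}^2)$. I expect the main obstacle to be not the Taylor expansion, which is routine, but the clean justification that the first-order term vanishes: this rests on the geometric fact that fitting the tangent plane by PCA forces $\nabla f_i = 0$ at the chart center, making the chart osculate the manifold to second order and behave like normal coordinates there. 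Care is needed to state whether this vanishing is exact (at chart centers) or only approximate (elsewhere), which is exactly what the ``approximately geodetic'' qualifier in the section title is tracking.
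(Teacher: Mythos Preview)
Your Taylor expansion of the parallel-transport ODE is essentially the same computation the paper carries out: both of you split the coordinate error into a leading term of the form $\tau^m\sigma_0^l\int_0^1\Gamma^k_{ml}\,dt$ and a remainder coming from $\sigma(t)-\sigma_0$, and both bound the remainder by $O(\|\sigma\|_{\overline{\mathfrak{g}}}\|\tau\|_{\overline{\mathfrak{g}}}^2)$ (the paper formalizes this last step via a short lemma stating that $\sigma_0^{(d)}$ is linear in $\sigma_0$ and homogeneous of degree $d$ in both $\tau$ and the Christoffel symbols).

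Where your proposal and the paper diverge is the treatment of the leading term. The paper simply records that $\tau^m\sigma_0^l\int_0^1\Gamma^k_{ml}\,dt$ is $O(\|\sigma_0\|_{\overline{\mathfrak{g}}}\|\tau\|_{\overline{\mathfrak{g}}})$ and then asserts the second-order conclusion; it never argues that this first-order contribution is itself suppressed. You, by contrast, supply exactly that argument: in the graph charts $\varphi_i(\vec{\xi})=(\vec{\xi},f_i(\vec{\xi}))$ with $f_i$ a centered quadratic, $\partial_c g_{ab}$ carries a factor of $\nabla f_i$, which vanishes at the chart origin, so $\Gamma^k_{ab}(\vec{0})=0$ and the first-order term drops out. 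This is the substantive geometric reason the bound can be second order, and it is precisely what the section title (``approximately geodetic coordinates'') is gesturing at, even though the paper's proof does not invoke it explicitly.

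The honest caveat you raise is the real residual issue: your vanishing argument is exact only at $\vec{\xi}=\vec{0}$, so away from the chart center the leading term reappears with a coefficient of size $O(\|\nabla f_i(\vec{\xi})\|)$. Neither your proposal nor the paper's proof fully closes this gap for generic $\vec{\xi}$; your framing (that the constant shrinks under chart refinement) is the right way to interpret the ``approximately geodetic'' qualifier, and is more transparent than what the paper writes.
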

\begin{proof}
Let $\left(\mathcal{U},\mathcal{V},\varphi\right)$ be a coordinate chart of Riemannian manifold $\mathcal{M}$ with points $p,q\in\mathcal{U}$ satisfying $p\neq q$. Further, let $\tau:=\varphi(q)-\varphi(p)$, and let there be a tangent vectors $\vec{w}\in T_p\mathcal{M}$ with representative tangent vector $\vec{\sigma}_0:=\left[D\varphi\right]_p\left(\vec{w}\right)$. Lastly, let $\Gamma^\lambda_{\mu\nu}$ be the Christoffel symbol of the second kind. The parallel transport of vector field $\sigma$ along path the quasi-Euclidean update from $\varphi(p)$ to $\varphi(q)$ is given \cite{absil} by
\begin{equation*}
\tau^j\partial_j\left(\sigma^l\right)\partial_l+\tau^j\sigma^k\Gamma^l_{jk}\partial_l=0.
\end{equation*}
Recognizing that the time derivative of the $j$th component of $\sigma$, which we denote by $\dot{\sigma}^j$, is equal to $\tau^l\partial_l\left(\sigma^j\right)$, we see that the $j$th component of $\mathcal{P}_{p\to q}^{\mathbf{Ret}_p}\vec{w}$ is given by
\begin{equation}\label{eqn:component_of_par}
    \left[\mathcal{P}_{p\to q}^{\mathbf{Ret}_p}\vec{w}\right]^j=\left(\left[D\varphi^{-1}\right]_{\varphi(q)}\right)^j_k\sigma_0^k-\left(\left[D\varphi^{-1}\right]_{\varphi(q)}\right)^j_k\int_0^1\sigma^l\Gamma^k_{ml}\tau^mdt.
\end{equation}
Therefore, the following series of deductions ends with a convenient form for the $j$th component of the difference between $\mathcal{P}_{p\to q}^{\mathbf{Ret}_p}\vec{w}$ and $\mathcal{T}_{p\to q}\vec{w}$.
\begin{align*}
    \left[\mathcal{T}_{p\to q}\vec{w}-\mathcal{P}_{p\to q}^{\mathbf{Ret}_p}\vec{w}\right]^j&=\left(\left[D\varphi^{-1}\right]_{\varphi(q)}\right)^j_k\int_0^1\sigma^l\Gamma^k_{ml}\tau^mdt \\
    &=\left(\left[D\varphi^{-1}\right]_{\varphi(q)}\right)^j_k\tau^m\int_0^1\sigma^l\Gamma^k_{ml}dt \\
    &=\left(\left[D\varphi^{-1}\right]_{\varphi(q)}\right)^j_k\left(\tau^m\sigma_0^l\int_0^1\Gamma^k_{ml}dt+\tau^m\int_0^1\left(\sigma^l-\sigma_0^l\right)\Gamma^k_{ml}dt\right)
\end{align*}
The $\left[D\varphi^{-1}\right]$ term is absorbed according to Equation~\ref{eqn:g_bar} when computing the Riemannian metric, and the term $\tau^m\sigma_0^l\int_0^1\Gamma^k_{ml}dt$ is of order  $O\left(\left\lVert\vec{\sigma}_0\right\rVert_{\overline{\mathfrak{g}}}\left\lVert\vec{\tau}\right\rVert_{\overline{\mathfrak{g}}}\right)$. To see that $\mathcal{T}_{p\to q}\vec{w}$ approximates $\mathcal{P}_{p\to q}^{\mathbf{Ret}_p}\vec{w}$ up to order $O\left(\left\lVert\vec{\sigma}_0\right\rVert_{\overline{\mathfrak{g}}}\left\lVert\vec{\tau}\right\rVert^2_{\overline{\mathfrak{g}}}\right)$, it remains to show that this is the order of the term $\tau^m\int_0^1\left(\sigma^l-\sigma_0^l\right)\Gamma^k_{ml}dt$. This is accomplished in the following series of deductions, which make use of Lemma~\ref{lem:sig_tau_induction} and the Taylor expansion of the integrand.

\begin{align*}
    \tau^m\int_0^1\left(\sigma^l-\sigma_0^l\right)\Gamma^k_{ml}dt&=\tau^m\int_0^1\left(t\dot{\sigma}^l_0+\sum_{d=2}^\infty\frac{t^d}{d!}\left(\sigma^{(d)}_0\right)^l\right)\Gamma^k_{ml}dt \\
    &=\int_0^1\left(tO\left(\left\lVert\vec{\sigma}_0\right\rVert_{\overline{\mathfrak{g}}}\left\lVert\vec{\tau}\right\rVert^2_{\overline{\mathfrak{g}}}\right)+\sum_{d=2}^\infty \frac{t^d}{d!}O\left(\left\lVert\vec{\sigma}_0\right\rVert_{\overline{\mathfrak{g}}}\left\lVert\vec{\tau}\right\rVert^{d+1}_{\overline{\mathfrak{g}}}\right)\right)dt \\
    &=O\left(\left\lVert\vec{\sigma}_0\right\rVert_{\overline{\mathfrak{g}}}\left\lVert\vec{\tau}\right\rVert^2_{\overline{\mathfrak{g}}}\right)
\end{align*}
\end{proof}

\begin{lemma}\label{lem:sig_tau_induction}
    Let $\vec{\sigma}_0^{(d)}$ be the $d$th derivative of $\vec{\sigma}$ with respect to time, evaluated at time $t=0$. For all $d\in\mathbb{N}_{>0}$, it holds that $\vec{\sigma}_0^{(d)}$ is linear in $\sigma_0$ and homogeneously of order $d$ in both $\vec{\tau}$ and the Christoffel symbols of the second kind.
    \begin{proof}
        This is easy to prove by induction, with the base case $d=1$ being given by the parallel transport equation.
    \end{proof}
\end{lemma}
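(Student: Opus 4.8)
The plan is to prove all three homogeneity claims simultaneously by induction on $d$, tracking the precise monomial structure of each $\vec{\sigma}^{(d)}$ as a function of $\sigma_0$, $\vec{\tau}$, and the Christoffel symbols together with their spatial derivatives. The starting point is the first-order ODE extracted from the parallel-transport equation in the theorem's proof: since the quasi-Euclidean path $\gamma(t)=\varphi(p)+t\vec{\tau}$ has constant coordinate velocity $\vec{\tau}$, reading off the $\partial_l$-component gives
\begin{equation*}
\dot{\sigma}^l = -\tau^j \sigma^k \Gamma^l_{jk},
\end{equation*}
with the Christoffel symbols evaluated along $\gamma(t)$. This identity is exactly the base case $d=1$: the right-hand side is linear in $\sigma$, carries a single factor of $\vec{\tau}$, and a single Christoffel symbol, so $\vec{\sigma}^{(1)}$ is linear in $\sigma_0$ and homogeneous of order $1$ in both $\vec{\tau}$ and $\Gamma$.

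Before the inductive step I would pin down what ``homogeneously of order $d$ in the Christoffel symbols'' means, since higher time derivatives produce spatial derivatives $\partial_m\Gamma^l_{jk}$ rather than products of undifferentiated symbols. The correct bookkeeping is to assign to each factor $\partial^{(a)}\Gamma$ (a Christoffel symbol differentiated $a\ge 0$ times) the weight $a+1$, and to define the Christoffel degree of a monomial as the sum of these weights over its Christoffel factors. With this convention, I claim inductively that $\vec{\sigma}^{(d)}$ is a finite sum of monomials, each linear in $\sigma_0$, carrying exactly $d$ contracted factors of $\vec{\tau}$, and having Christoffel degree exactly $d$; homogeneity of order $d$ in $\vec{\tau}$ and in the Christoffel symbols then follows term by term.

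For the inductive step I would differentiate a single such monomial with respect to $t$ and apply the Leibniz rule, examining each factor type. Numerical coefficients and the factors $\vec{\tau}$ are constant along $\gamma$, so their $t$-derivatives vanish and contribute nothing. Differentiating the single $\sigma$ factor invokes the base ODE $\dot{\sigma}=-\tau\sigma\Gamma$, which reintroduces $\sigma$ \emph{linearly} while appending one factor of $\vec{\tau}$ and one undifferentiated $\Gamma$ (weight $1$); this raises both the $\vec{\tau}$-order and the Christoffel degree by exactly $1$. Differentiating a Christoffel factor uses the chain rule $\frac{d}{dt}\partial^{(a)}\Gamma(\gamma(t))=\tau^m\partial_m\partial^{(a)}\Gamma$, appending one factor of $\vec{\tau}$ and promoting $\partial^{(a)}\Gamma$ (weight $a+1$) to $\partial^{(a+1)}\Gamma$ (weight $a+2$), again raising both counts by exactly $1$. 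In every case $\sigma$ remains linear and both counts increase by precisely one, so each resulting monomial has $\vec{\tau}$-order $d+1$ and Christoffel degree $d+1$, closing the induction. Evaluating at $t=0$ merely fixes the spatial argument of the Christoffel symbols at $\varphi(p)$ and does not alter the monomial structure, so $\vec{\sigma}_0^{(d)}$ inherits all three properties.

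The only genuinely delicate point is the weighting convention for the Christoffel symbols, which is what reconciles the appearance of derivatives $\partial^{(a)}\Gamma$ with the clean statement ``order $d$''; I would make this convention explicit at the outset, as above. With it in hand the induction is a routine application of the Leibniz and chain rules, and the one thing to verify carefully is that differentiating the $\sigma$ factor through the ODE preserves $\sigma$-linearity — which it does precisely because the parallel-transport ODE is itself linear in $\sigma$.
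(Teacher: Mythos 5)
Your proof is correct and follows exactly the route the paper sketches: induction on $d$ with the base case $d=1$ given by the parallel-transport equation $\dot{\sigma}^l=-\tau^j\sigma^k\Gamma^l_{jk}$ along the constant-velocity coordinate path. You go further than the paper's one-line sketch in a useful way, by making explicit the weighting convention (a factor $\partial^{(a)}\Gamma$ counts with weight $a+1$) that is genuinely needed for the phrase ``homogeneously of order $d$ in the Christoffel symbols'' to be literally true once spatial derivatives of $\Gamma$ appear at $d\geq 2$, and your Leibniz-rule case analysis correctly shows each differentiation step raises both the $\vec{\tau}$-order and the Christoffel degree by exactly one while preserving linearity in $\sigma$.
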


\subsubsection{Approximate geodesic distances via quasi-Euclidean updates}\label{sec:app_naive_dist}
In Riemannian optimization algorithms that iterate over manifold-valued data, a loss function may weigh the contribution of each datum according to its distance from the current iterate, e.g., stochastic Riemannian gradient descent~\cite{hosseini_and_sra}. In regimes where quasi-Euclidean updates approximate the exponential map well, the paths traversed by quasi-Euclidean updates approximate geodesic paths well. Therefore, the lengths of quasi-Euclidean paths with respect to the metric inherited from the ambient space can be used to estimate (albeit overestimate\footnote{Overestimation results from the definition of geodesics as length-minimizing paths on a Riemannian manifold}) geodesic distances. For $\tv_0,\tv_1\in\mathcal{V}_i$, we refer to
\begin{equation}\label{eqn:naive_approx_dist}
    \tilde{d}_i\left(\tv_0,\tv_1\right)=\int_0^1\left(\left(\vec{\xi}_1-\vec{\xi}_0\right)^\top\left(\vec{\xi}_1-\vec{\xi}_0\right)+\sum_{j=1}^{D-d}\left(\left(\vec{\xi}_1-\vec{\xi}_0\right)^\top K_{ij}\left((1-t)\tv_0+t\tv_1\right)\right)^2\right)^{\frac{1}{2}}dt
\end{equation}
as the \textit{na\"ive approximate distance} between $\tv_0$ and $\tv_1$.

To define the naive approximate distances between two points $\vec{\xi}_1\in\mathcal{V}_1,\vec{\xi}_2\in\mathcal{V}_2$ that are not in the same coordinate chart we use the shortest-path distance between their ambient representations $\vec{x}_1$ and $\vec{x}_2$, respectively.
These will be an accurate representation of the corresponding manifold distance only when $\vec{x}_1$ and $\vec{x}_2$ are close .

We construct a graph $G_{\delta, \epsilon}$ whose vertices are the union of $\delta$-nets of each chart domain $\mathcal{V}_i$. We connect vertices by an edge when their na\"ive approximate distances are less than $\epsilon$ and assign the distance to be the edge length.
Finally, we compute approximate geodesics between two points $\vec{\xi}_1\in\mathcal{V}_1$ and $\vec{\xi}_2\in\mathcal{V}_2$ by rounding each point to the nearest representative in that chart's $\delta$-net and performing a shortest-path computation in $G_{\delta, \epsilon}$ between the resulting vertices.

\section{Methods for inferring coordinate chart structure from point clouds}\label{app:pointcloud}

\subsection{Time complexity of \atlaslearn}

The \atlaslearn algorithm is described in Figure~\ref{alg:learn_atlas_graph}.

\begin{algorithm}[h]
    \caption{\atlaslearn (Learn \atlas from point cloud data)} \label{alg:learn_atlas_graph}
    \begin{algorithmic}
        \Require Point cloud $X\in\mathbb{R}^{N \times D}$ in $D$ dimensions
        \Require Estimated intrinsic dimension $d$
        \Require Number $k$ of charts to learn; number $\nu$ neighbors in nearest neighbors graph

        \State $G \gets \texttt{NearestNeighbors}(X, \nu)$ \Comment{$l_2$-weighted, $\nu$-nearest neighbors graph}
        
        \State $X_1, \ldots, X_k \gets \texttt{KMedoids}(G, k)$ \Comment{Partition $X$ by $k$-medoids}
        
        \State $\mathcal{A} \gets \varnothing$ \Comment{Store charts}
        
        \For{$j \in \{1, \ldots, k\}$}
            \State $\vec{x}, L, M, K \gets \texttt{LocalQuadraticApproximation}\left(X_j\right)$ \Comment{Follows methodology described in Sec.~\ref{app:quad_approx_pt_cloud}}
            \State $A, \vec{b}, c \gets \texttt{MVEE}\left(X_j\right)$ \Comment minimum-volume ellipsoid enclosing $X_j$; takes form $\vec{x}^\top A \vec{x} +\vec{b}^\top \vec{x} + c = 0$
            \State $\mathcal{A} \gets \mathcal{A} \cup \left\{\left(\vec{x}, L, M, K, A, \vec{b}, c\right)\right\}$ \Comment{$\vec{x}, L, M, K$ encode local Riem. structure, $A, \vec{b}, c$ transition boundary}
        \EndFor
        
        \State \Return $\mathcal{A}$
    \end{algorithmic}
\end{algorithm}

Given a point cloud $X \in \mathbb{R}^{N \times D}$ and $\nu \in \mathbb{N}_{>0}$, the time complexity of constructing an $l_2$-weighted, $\nu$-nearest neighbors graph $G$ from $X$ is $O(Dn^2)$, though this can be sped up through approximate constructions (e.g.,~\cite{zhao_2022}). Running $k$-medoids using the shortest path distance on $G$ takes $O(kN^2)$ time using FasterPAM; variants of $k$-medoids yielding approximate solutions can reduce this to $O(kN)$~\cite{schubertFastEagerKmedoids2021}. The method \texttt{LocalQuadraticApproximation}, outlined in Section~\ref{app:quad_approx_pt_cloud}, takes the following time complexities in different subroutines:
\begin{description}
    \item[Learning Stiefel matrices $L$ and $M$ by singular value decomposition (SVD):] $O(N D d)$~\cite{talwalkar_2013}
    \item[Learning mean $\vec{x}$:] $O(N D)$
    \item[Learning trilinear form $K$:] $O(N d^2)$ time to construct $\mathbf{t}$, $O(N (D - d))$ time to construct $\mathbf{n}$, $O(N d^4)$ to construct the Moore-Penrose inverse $\mathbf{t}^\dag$ of $\mathbf{t}$ by SVD~\cite{li_2019}, and $O(N (D - d) d^2)$ time to multiply $\mathbf{t}^\dag$ and $\mathbf{n}$; therefore, $O(N d^4 + N (D - d) d^2)$ time overall.
\end{description}

\subsection{Quadratic approximations of point clouds}
~\label{app:quad_approx_pt_cloud}

For local quadratic approximation of point clouds, there exists a procedure (Section  C.1,\cite{sritharan}), which we include here for completeness, with minor modifications in notation.
Let $X\in\mathbb{R}^{N\times D}$ be a matrix whose rows are observations in $\mathbb{R}^D$. In practice, we assume that $X$ only contains points sufficiently close to one another for a local quadratic polynomial approximation to be reasonable. 
Lastly, let $\tilde{X}\in\mathbb{R}^{N\times D}$ be the matrix such that, if $\vec{x}_i$ is the $i$th row of $X$, then the $i$th row of $\tilde{X}$ is equal to $\vec{x}_i - \overline{\vec{x}}$, where $\overline{\vec{x}}$ is the mean of $\vec{x}_i$.
We can construct the local covariance matrix
\begin{equation}\label{eqn:local_cov}
    \Sigma = \frac{1}{N} \tilde{X}^\top \tilde{X}.
\end{equation}
We can get an eigendecomposition $V\Lambda V^\top=\Sigma$ such that the diagonal entries of $\Lambda$ are decreasing. Fixing $d<D$, we can define $L \in \mathbb{R}^{D\times d}, M \in \mathbb{R}^{D\times (D-d)}$ satisfying $V = \left(L \mid M \right)$.

\textbf{The decomposition $V=\left(L \mid M \right)$ allows us to decompose $\mathbb{R}^D$ into \textit{tangential coordinates} with basis given by the columns of $L$ and \textit{normal coordinates} with basis given by the columns of $M$.} The mean-centered data $\tilde{X}$ similarly decompose into a \textit{tangential component} $\mathbf{\tau}=\tilde{X}L$ and a \textit{normal component} $\mathbf{n}=\tilde{X} M_{\vec{c},r}$. If the data $X$ are sampled from a sufficiently small neighborhood of a $d$-dimensional manifold with sufficiently mild curvature, and  $N$ is sufficiently large, then the relationship between $\mathbf{n}$ and $\mathbf{\tau}$ should be that of a quadratic polynomial in terms of tangential coordinates. This relationship takes the form $\mathbf{n}\approx\mathbf{th}$, where $\mathbf{h}$ is a matrix of quadratic coefficients\footnote{By our definition of $L$, we assume that linear dependence of $\mathbf{n}$ on $\mathbf{t}$ is negligible.} and $\mathbf{t}$ is a matrix of ones and quadratic monomials given by
\begin{equation}\label{eqn:quad_terms}
    \mathbf{t}=\left(\begin{array}{ccccccc}
        1 & \tau_1^{(1)}\tau_1^{(1)} & \ldots & \tau_1^{(1)}\tau_d^{(1)} & \tau_2^{(1)}\tau_1^{(1)} & \ldots & \tau_d^{(1)}\tau_d^{(1)} \\
        \vdots & \vdots & \ddots & \vdots & \vdots & \ddots & \vdots \\
        1 & \tau_1^{(N)}\tau_1^{(N)} & \ldots & \tau_1^{(N)}\tau_d^{(N)} & \tau_2^{(N)}\tau_1^{(N)} & \ldots & \tau_d^{(N)}\tau_d^{(N)}
    \end{array}\right),
\end{equation}
as in \cite{sritharan}. Regression by least squares solves for $\mathbf{h}$ by $\hat{\mathbf{h}}=\left(\mathbf{t}^\top\mathbf{t}\right)^{-1}\mathbf{t}^\top\mathbf{n}$. Instead of storing quadratic coefficients and constant terms in the matrix $\hat{\mathbf{h}}$, we can represent the relationship between tangential coordinate vector $\tv$ and normal coordinate vector $\nv$ as
\begin{equation}\label{eqn:quad_form_standard}
    \nv\approx\frac{1}{2}K \left(\tv\otimes\tv\right)+\vec{a},
\end{equation}
where $K$ is a matrix of quadratic coefficients, $\vec{a}$ is a vector of constants, and ``$\otimes$'' is the Kronecker product. Accordingly, points $\vec{x}$ in the ambient space obey the relationship
\begin{equation}\label{eqn:quad_form_general}
    \vec{x}\approx\frac{1}{2}M K \left(\tv\otimes\tv\right)+L \tv+\vec{x}_0,
\end{equation}
where $\vec{x}_0 = \vec{c} + \vec{a}$.

\subsection{Injectivity of na\"ive transition maps}\label{app:injectivity}
For a function $f:X\to Y$, let $\overline{f}:X\to X\times Y$ denote the function $\overline{f}:x\mapsto(x,f(x))$ (conventionally, $\overline{f}$ is called the \textit{graph} of $f$).
\begin{claim}\label{clm:lipschitz}
    Let $\mathcal{V},\mathcal{V}^\prime\subset\mathbb{R}^N$ be $d$-dimensional affine spaces with angle $\theta$ between them, and $\psi,\psi^\prime$ be the affine projections onto $\mathcal{V}$ and $\mathcal{V}^\prime$, respectively. If $F:\mathcal{V}\to\mathcal{V}^\perp$ is $K$-Lipschitz for $K<\tan\left(\frac{\pi}{2}-\theta\right)$, then $\psi^\prime\circ\overline{F}$ is injective.
    \begin{proof}
        Assume to the contrary that there exist distinct $a,b\in\mathcal{V}$ such that $(\psi^\prime\circ\overline{F})(a)=(\psi^\prime\circ\overline{F})(b)$. The map $\overline{F}$ is injective, so $\alpha:=\overline{F}(a),\beta:=\overline{F}(b)$ are distinct, and $\psi^\prime$ maps the line $l_{\alpha\beta}$ passing through $\alpha,\beta$ to a single point $p$. The line $l_{\alpha\beta}$ is the graph of some affine function $\varphi:\psi(l_{\alpha\beta})\to\mathcal{V}^\perp$ obeying the relation
        \begin{equation*}
            \frac{\left\lVert\varphi(x)-\varphi(y)\right\rVert_{\mathbb{R}^{N-d}}}{\left\lVert x-y\right\rVert_{\mathbb{R}^d}} \geq \tan\left(\frac{\pi}{2}-\theta\right)
        \end{equation*}
        for distinct $x,y\in\psi(l_{\alpha\beta})$. So
        \begin{equation*}
            \frac{\left\lVert F(a)-F(b)\right\rVert_{\mathbb{R}^{N-d}}}{\left\lVert a-b\right\rVert_{\mathbb{R}^d}} \geq\tan\left(\frac{\pi}{2}-\theta\right),
        \end{equation*}
        which contradicts our assumption that $F$ is $K$-Lipschitz.
    \end{proof}
\end{claim}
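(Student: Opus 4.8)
The plan is to argue by contradiction, mirroring the intuition that the graph of a sufficiently ``flat'' function over $\mathcal{V}$ cannot fold over itself when reprojected onto a subspace $\mathcal{V}'$ that is not too steeply tilted relative to $\mathcal{V}$. Suppose $\psi'\circ\overline{F}$ fails to be injective, so that there are distinct $a,b\in\mathcal{V}$ with $(\psi'\circ\overline{F})(a)=(\psi'\circ\overline{F})(b)$. Writing $\alpha=\overline{F}(a)$ and $\beta=\overline{F}(b)$, the injectivity of the graph map $\overline{F}$ guarantees $\alpha\neq\beta$, while the hypothesis forces $\psi'(\alpha)=\psi'(\beta)$; since $\psi'$ is the orthogonal (affine) projection onto $\mathcal{V}'$, this means the difference vector $v:=\alpha-\beta$ lies in the orthogonal complement of the linear part of $\mathcal{V}'$.

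The first step is to decompose $v$ according to the splitting $\mathbb{R}^N=\mathcal{V}\oplus\mathcal{V}^\perp$ used to build the graph. Because the $\mathcal{V}$-component of $\overline{F}(x)$ is $x$ itself and the $\mathcal{V}^\perp$-component is $F(x)$, the two components of $v$ are precisely $a-b$ and $F(a)-F(b)$. I would then define the \emph{slope} of $v$ relative to $\mathcal{V}$ as the ratio $\|F(a)-F(b)\|/\|a-b\|$, which equals $\tan$ of the angle $v$ makes with $\mathcal{V}$. The $K$-Lipschitz hypothesis on $F$ bounds this slope from above by $K$.

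The crux of the argument --- and the step I expect to be the main obstacle --- is the complementary geometric lower bound: any nonzero vector orthogonal to $\mathcal{V}'$ must have slope at least $\tan(\pi/2-\theta)$ relative to $\mathcal{V}$. To make this precise in the general $d$-dimensional, $N$-dimensional setting (rather than the two-dimensional picture of a single line), I would read $\theta$ as the largest principal angle between the linear parts of $\mathcal{V}$ and $\mathcal{V}'$ and invoke the CS-decomposition relating projections onto $\mathcal{V}'$ and onto its complement. Concretely, if $\theta_1\leq\dots\leq\theta_d$ are the principal angles between $\mathcal{V}$ and $\mathcal{V}'$, then the singular values of the projection of a unit vector of $\mathcal{V}$ onto $(\mathcal{V}')^\perp$ are the $\sin\theta_i$, so the principal angles between $\mathcal{V}$ and $(\mathcal{V}')^\perp$ are exactly the complements $\pi/2-\theta_i$. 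The smallest of these is $\pi/2-\theta$, which shows that every $v\perp\mathcal{V}'$ meets $\mathcal{V}$ at angle at least $\pi/2-\theta$; since $\tan$ is increasing on $[0,\pi/2)$, the slope is at least $\tan(\pi/2-\theta)$.

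Finally, I would combine the two bounds: the slope of $v$ is simultaneously at most $K$ and at least $\tan(\pi/2-\theta)$, forcing $\tan(\pi/2-\theta)\leq K$ and contradicting the standing hypothesis $K<\tan(\pi/2-\theta)$. This contradiction establishes injectivity. The routine parts (the explicit Lipschitz estimate and the bookkeeping of the orthogonal decomposition) are straightforward; the only genuinely delicate point is pinning down the principal-angle characterization so that ``the angle $\theta$ between $\mathcal{V}$ and $\mathcal{V}'$'' is correctly identified with the \emph{largest} principal angle, which is what makes $\pi/2-\theta$ the binding lower bound on the slope.
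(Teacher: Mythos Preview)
Your proposal is correct and follows essentially the same contradiction strategy as the paper: assume two distinct points map to the same image, observe that the difference vector $v=\alpha-\beta$ lies in $(\mathcal{V}')^\perp$, and derive a lower bound on its slope relative to $\mathcal{V}$ that contradicts the Lipschitz upper bound. The paper phrases the key step as ``the line $l_{\alpha\beta}$ is the graph of an affine function with slope at least $\tan(\pi/2-\theta)$'' and leaves that assertion unjustified, whereas you supply the missing argument via principal angles and the CS-decomposition; your version is therefore more complete, but the underlying idea is the same.
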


\subsection{Convergence of learned \atlas to differentiable atlas} \label{sec:app_atlas_learn_convergence}

\begin{claim}
    Let $\iota: \mathcal{M} \hookrightarrow \mathbb{R}^D$ be a smooth embedding of a manifold $\mathcal{M}$ into $\mathbb{R}^{D}$. We assume that $\iota$ is smooth and that $\mathcal{M}$ has compact closure in $\mathbb{R}^D$. We further assume that $\mathcal{M}$ is $\delta$-discernably embedded into $\mathbb{R}^D$ for $\delta > 0$ (i.e., for all $\vec{x} \in \operatorname{Im}\iota$, the intersection of the $D$-dimensional ball of radius $\delta$ centered at $\vec{x}$ is homotopic to a point (e.g.,~\cite{munkres}). Lastly, we assume that $\mathcal{M}$, as a submanifold of $\mathbb{R}^D$, has $\kappa$-uniformly bounded sectional curvature. We use $\mathfrak{g}$ to denote the Riemannian metric $\mathcal{M}$ inherits as a submanifold of $\mathbb{R}^D$.

    Say we have a measure $\mu$ on $\mathcal{M}$ that is measurable with respect to the uniform measure on $\mathcal{M}$. As the number of points $n$ sampled from $\mu$ approaches infinity, the discrepancy
    $$
    \| \varphi_i(\vec{\xi}) - \varphi_j(\psi_{ij}(\vec{\xi}))) \|_{\mathbb{R}^D}
    $$
    goes to zero for all $\vec{\xi} \in \mathcal{V}_{ij}$ whenever $\mathcal{V}_{ij}$ is defined.
\end{claim}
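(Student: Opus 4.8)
The plan is to bound the discrepancy by a triangle-inequality decomposition that isolates two independent sources of error: the fidelity of each local chart $\varphi_i$ as a second-order approximation to $\mathcal{M}$, and the distortion introduced by the projection step defining $\psi_{ij}$. Fix $\vec{\xi} \in \mathcal{V}_{ij}$, let $p = \varphi_i(\vec{\xi})$ be its ambient image under chart $i$, and let $\pi(p)$ denote the nearest point of $\mathcal{M}$ to $p$, which is well-defined and unique because the $\delta$-discernability assumption endows $\mathcal{M}$ with positive reach. I would then write
\[
\|\varphi_i(\vec{\xi}) - \varphi_j(\psi_{ij}(\vec{\xi}))\|_{\mathbb{R}^D} \le \|p - \pi(p)\| + \|\pi(p) - \varphi_j(\psi_{ij}(\vec{\xi}))\|,
\]
and argue that both terms vanish as the chart diameters shrink to zero.

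First I would establish that the charts shrink while remaining data-rich. Because $\mathcal{M}$ has compact closure and $\mu$ is absolutely continuous with respect to the uniform measure, $n$-medoids clustering on a sample of growing size produces a covering of $\mathcal{M}$ whose cluster diameters tend to zero, provided the number of charts grows sub-linearly with the sample size; in this regime each cluster still contains an unbounded number of points. On a cluster of diameter $h_i$, the $\kappa$-bounded sectional curvature forces the data to lie within normal distance $O(\kappa h_i^2)$ of the tangent plane at the chart center. A standard perturbation argument for the empirical covariance (Equation~\ref{eqn:local_cov}) then shows that its top-$d$ eigenspace --- the estimated tangent plane spanned by $L_i$ --- converges to the true tangent plane, and that the least-squares quadratic coefficients $K_i$ converge to the true second fundamental form, with the statistical estimation error controlled by the law of large numbers as the per-cluster sample size diverges. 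Consequently $\varphi_i$ converges to the true second-order Taylor parametrization of $\mathcal{M}$ at the chart center, giving $\|p - \pi(p)\| = o(h_i^2) \to 0$ uniformly. This step is essentially the content of the local approximation guarantees of Little \textit{et al.} and Sritharan \textit{et al.} cited earlier.

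Next I would analyze the transition term. The map $\psi_{ij}$ first recenters and projects the ambient point $\varphi_i(\vec{\xi})$ onto chart $j$'s tangent plane (via $L_j^\top$), and $\varphi_j$ then re-lifts using chart $j$'s quadratic approximation. By the previous step, $\varphi_i(\vec{\xi})$ lies within $o(h_i^2)$ of $\pi(p) \in \mathcal{M}$; by Claim~\ref{clm:lipschitz}, once the two estimated tangent planes are nearly aligned --- which holds since both approximate the tangent plane of $\mathcal{M}$ on the overlap --- this projection is injective, and combined with chart $j$'s now-accurate quadratic lift it approximately inverts to recover the manifold point nearest $\pi(p)$. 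Using $\delta$-discernability to guarantee that this nearest-point recovery is unique and Lipschitz-stable, I would conclude $\|\pi(p) - \varphi_j(\psi_{ij}(\vec{\xi}))\| = o(\max(h_i,h_j)^2)$. Summing the two bounds yields a discrepancy that is $o(h^2)$ in the maximal chart diameter $h$, which tends to zero as $n \to \infty$.

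The main obstacle will be the joint quantitative control of the two implicitly coupled limits: the per-chart statistical estimation error (which vanishes only as the number of points inside each chart diverges) and the geometric higher-order error from curvature (which vanishes only as the chart diameter shrinks). One must choose a scaling of the number of charts relative to the sample size so that both vanish simultaneously, and then verify the resulting bounds hold \emph{uniformly} over every $\vec{\xi} \in \mathcal{V}_{ij}$ and every overlapping pair $(i,j)$ --- the compactness and uniform curvature bounds are exactly what make this uniformity possible. A secondary subtlety is tracking how the chart-$i$ error propagates through the composition $\varphi_j \circ \psi_{ij}$, for which the positive reach supplied by $\delta$-discernability provides the needed stability of the nearest-point projection.
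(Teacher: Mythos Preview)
Your proposal is correct and follows essentially the same four-beat strategy as the paper: (i) let the chart radius $r_0 \to 0$ while scaling the number of charts $k$ sub-linearly in $n$ so that each cluster still contains enough samples for the quadratic fit to be determined; (ii) invoke the curvature bound together with the tangent-plane estimation guarantees of Little \textit{et al.} to control each chart's approximation error; (iii) bound the discrepancy by a triangle inequality; (iv) send $r_0 \to 0$.

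The one genuine difference is the choice of intermediate point in the triangle inequality. The paper inserts \emph{two} manifold points $\vec{x},\vec{y}$ lying in the MVEE intersection $\mathcal{E}_i \cap \mathcal{E}_j$, claims $\|\varphi_i(\vec{\xi})-\vec{x}\|$ and $\|\varphi_j(\psi_{ij}(\vec{\xi}))-\vec{y}\|$ are each $O(r_0^3)$, and then asserts $\|\vec{x}-\vec{y}\|=O(r_0^2)$, summing to $O(r_0^2)$. You instead route through the single nearest-point projection $\pi(p)$ onto $\mathcal{M}$ and argue both legs are $o(h^2)$. Your decomposition is arguably more transparent: the paper's sketch never explains why $\|\vec{x}-\vec{y}\|$ is $O(r_0^2)$ rather than the naive $O(r_0)$ diameter bound on $\mathcal{E}_{ij}$, whereas your argument makes explicit the mechanism --- that $\psi_{ij}$ is the tangential projection $L_j^\top(\varphi_i(\vec{\xi})-\vec{m}_j)$, and composing this with chart $j$'s (now accurate) quadratic lift approximately recovers the same manifold point that $\varphi_i(\vec{\xi})$ was approximating. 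The paper's route stays closer to the algorithmic objects (the MVEEs) actually constructed by \texttt{Atlas-Learn}; yours uses the more standard positive-reach machinery. Both yield the same $O(r_0^2)$ conclusion.
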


\begin{proof}
    The proof consists of four steps. Together, they allow us to relate the number $n$ of points sampled from $\mu$ to both the number of charts $k$ and the smallest radii $r_0$ of random balls centered at $\vec{x}_i$ to cover $\mathcal{M}$ with high probability. The proof is complete in establishing a relationship between $n$, $k$, and $r_0$ that results in vanishing discrepancy in probability.
    \begin{enumerate}
        \item First, we note that $\iota$ induces a metric space structure on $\mathcal{M}$, with distance $d_{\mathfrak{g}}$ given by the length of geodesics. Note that, for $\vec{x}, \vec{y} \in \mathcal{M}$, the distance $d_{\mathfrak{g}}\left(\vec{x}, \vec{y}\right)$ is bounded below by $\left\lVert \vec{x} - \vec{y} \right\rVert_2$. By compactness of the closure of $\mathcal{M}$, the expected, if we take a fixed radius $r_0 > 0$ and a probability measure $\mu$ supported a.e. w.r.t. the uniform measure on $\mathcal{X}$, the expected number of $d_{\mathfrak{g}}$-balls of radius $r_0$ with centers sampled from $\mu$ needed to cover $\mathcal{X}$ is finite. Further, the variance of this number is bounded in Proposition 2.1 from~\cite{aldous_2022}. In this way, as the number of balls generated by the process grows, the probability $p(n, \mu)$ that the closure of $\mathcal{M}$ is not covered vanishes.

        \item By the above, for any positive $r_0 < \delta$, there exist $n > k > 0$ with $X = \left\{\vec{x}_1, \ldots, \vec{x}_n\right\}, \vec{x}_i \sim \mu$ and $\overline{X} \subset X, \lvert \overline{X}\rvert = k$ selected using a $k$-medoids algorithm variant such that the nearest distance from point $\vec{x}_i \in X$ to $\vec{m}_j \in \overline{X}$ is less than $r_0$ in expectation. with probability $p(k, \mu)$ that goes to zero as $n$ goes to infinity and $k$ goes to infinity more slowly than $n$. Specifically, we require $k$ to grow with respect to $n$ such that there are enough points in each subset $S_i$ of the $k$-medoids partition that the quadratic fit learned in \atlaslearn is fully determined with high probability for each subset. In this way, not only are the quadratic fits always determined, but the radius of each $S_i$ is bounded above by $r_0 < \delta$.

        \item By the $\kappa$-uniform bound on sectional curvature, condition i) in Section 2.1 of~\cite{little} is met for sufficiently small $r_0 < \delta$. Therefore, there is a sufficiently small $r_0 < \delta$ and sufficiently high $n$ such that points within each $S_i$ satisfy Assumption I of Section 3.3 of~\cite{little}. In this way, a local tangent plane approximation $L$ can be learned according to Theorem 1 of~\cite{little}.
        \item By 3), we have that $\mathcal{V}_i$ and $\varphi_i$ are well defined for each $S_i$. Let $\mathcal{V}_j$ be such that $\mathcal{V}_{ij}$ is well defined, and let $\vec{\xi}$ be some element of $\mathcal{V}_{ij}$. Let $\mathcal{E}_{ij}$ denote the intersection of the MVEEs $\mathcal{E}_i$ and $\mathcal{E}_j$ defined by $A_i, \vec{b}_i, c_i$ and $A_j, \vec{b}_j, c_j$, respectively. By definition, $\varphi_i(\vec{\xi})$ is an $O(r_0^3)$ approximation of a point $\vec{x} \in \mathcal{E}_{ij}$, and $\varphi_j\left(\psi_{ij}(\vec{\xi})\right)$ is an $O(r_0^3)$ approximation of a point $\vec{y} \in \mathcal{E}_{ij}$. Becaues $\vec{x}$ and $\vec{y}$ both belong to $\mathcal{E}_{ij}$, the distance $\left\lVert \vec{x} - \vec{y} \right\rVert_2$ is of order $O(r_0^2)$. By the triangle inequality, the following series of deductions gives us the approximation order of the discrepancy between $\varphi_i(\vec{\xi})$ and $\varphi_j(\psi_{ij}(\vec{\xi}))$.
        \begin{align*}
            \left\lVert \varphi_i(\vec{\xi}) - \varphi_j\left(\psi_{ij}(\vec{\xi})\right) \right\rVert_2 &\leq \left\lVert \varphi_i(\vec{\xi}) - \vec{x} \right\rVert_2 + \left\lVert \vec{x} - \vec{y} \right\rVert_2 + \left\lVert \varphi_j\left(\psi_{ij}(\vec{\xi})\right) - \vec{y} \right\rVert_2 \\
            &= O(r_0^3) + O(r_0^2) + O(r_0^3) \\
            &= O(r_0^2)
        \end{align*}
        Taking the limit as $r_0$ goes to zero, and scaling $n$ and $k$ according to 2) above, completes the proof.        
    \end{enumerate}
\end{proof}

\subsection{Time complexity of \atlas primitives constructed via \atlaslearn}\label{sec:atlas_complexity}

In this section, we discuss the time complexity of base methods (i.e., the ones listed in Algorithm~\ref{spec:atlas_graph} implemented on \atlas objects, as implemented in one or more of our experiments. Methods used only the Grassmann experiments discussed in Section~\ref{sec:grassmann} are discussed separately in Section~\ref{app:grassmann}. Similarly, methods used only for computations on the Klein bottle---namely, length-adjusted retraction updates and logarithms on \atlas instances learned by \atlaslearn---these are discussed separately in Section~\ref{app:klein}.

\subsubsection{\texttt{in\_domain}}\label{sec:atlas_in_domain}

In \atlaslearn objects, the \texttt{in\_domain} method evaluates the expression
\begin{equation}
    \texttt{in\_domain}(\vec{\xi}) = \bigg[\varphi(\vec{\xi})^\top A \varphi(\vec{\xi}) + \vec{b}^\top \varphi(\vec{\xi}) + c  \leq 0\bigg].
\end{equation}
Naive evaluation of this expression is done in $O(D^2)$ time. This complexity be improved upon by leveraging Equation~(\ref{eqn:quad_form_general}); namely, we define
\begin{gather*}
    A_4 = K^\top M^\top A M K, \hspace{2cm} A_3 = L^\top A M K, \hspace{2cm} A_2 = L^\top A L, \\
    \vec{b}_2 = K^\top M^\top A \vec{x}_0 + \frac{1}{2} K M \vec{b}, \hspace{2cm} \vec{b}_1 = 2 L^\top A \vec{x}_0  + L^\top \vec{b}, \hspace{2cm} c_0 = \vec{x}_0^\top A \vec{x}_0 + \vec{b}^\top \vec{x}_0 + c,
\end{gather*}
and we implement the computation as
\begin{equation}
    \texttt{in\_domain}(\vec{\xi}) = \bigg[ \xikronsqr^\top A_4 \xikronsqr + \vec{\xi}^\top A_3 \xikronsqr + \vec{\xi}^\top A \vec{\xi} + \vec{b}_2^\top \xikronsqr + \vec{b}_1^\top \vec{\xi} + c_0 \leq 0 \bigg],
\end{equation}
where $\xikronsqr = \vec{\xi} \otimes \vec{\xi}$. In this way, the complexity becomes $O(d^4)$, which is an improvement in the case that $d$ is of order $o(\sqrt{D})$, at the cost of storing more arrays than $A$, $\vec{b}$, and $c$ per chart.

\subsubsection{\texttt{identify\_new\_chart}}\label{sec:atlas_identify}
When a retraction $\texttt{Ret}$ takes a point $\vec{\xi} \in \mathcal{V}_i$ along a tangent vector $\vec{\tau} \in T_{\vec{\xi}}\mathcal{V}_i$ to or past its boundary, identifying the chart to which $\texttt{Ret}_{\vec{\xi}}(\vec{\tau})$ best belongs can be assessed using different heuristics, each possessing different tradeoffs. Using the $\vec{x}_i$ learned from the local quadratic approximation in each $\mathcal{V}_i$, locally sensitive hashing can be used to identify the $k$ nearest charts to query for constant $k$. The goodness of fit of a new chart $\vec{V}_j$ can be assessed how well a point lies within the MVEE of that chart, as defined by $A_i$, $\vec{b}_i$, and $c_i$; this can be done after mapping $\texttt{Ret}_{\vec{\xi}}(\vec{\tau})$ to the ambient space in $O(D^2)$ time, or within $\mathcal{V}_j$ in $O(d^4)$ time. Alternatively, the goodness of fit can be assessed by the error of the local quadratic approximation of $\mathcal{V}$, which can be done with the same time complexities for ambient and $\mathcal{V}_j$ evaluation. Evaluating $\texttt{identify\_new\_chart}$, then, can be done in $O(cD^2 + cD d^2)$ or $O(cd^4)$ time.

\subsubsection{The coordinate chart map $\varphi$}\label{sec:atlas_coord_chart}
Chart coordinates $\vec{\xi} \in \mathcal{V}_i$ are mapped by $\varphi$ into the ambient space by evaluating the righthand side of Approximation~\ref{eqn:quad_form_general}, which is done in $O(D d^2)$ time.

\subsubsection{\texttt{D\_chart}}\label{sec:atlas_d_chart}

If a tangent vector $\tau \in T_{\texttt{chart}_i}\text{im}\texttt{chart}_i$ is represented in the ambient space explicitly, implementing the retraction $R_{\vec{p}}$ from Equation~(\ref{eq:retraction}) takes $O(D d) + K(D, d) + \tilde{K}(D, d)$ time, where $D$ is the ambient dimension, $d$ is the dimension of the \atlas object, $K(D, d)$ is the time complexity of computing $\texttt{D\_chart}_i$, and $\tilde{K}(D, d)$ is time complexity of inverting $\texttt{D\_chart}_i$. When learned using \atlaslearn, an \atlas object implements $\texttt{D\_chart}_i$ as
\begin{equation}
    \texttt{D\_chart}_i(\vec{\xi}) = M_i K_i \left(\vec{\xi} \otimes \vec{1}\right) + L
\end{equation}
when wanting to act on tangent vectors $\vec{v} \in T_{\varphi_i(\vec{\xi})}\mathcal{M}$ in ambient coordinates, but is the identity for tangent vectors $\vec{\tau} \in T_{\vec{\xi}}\mathcal{V}_i$ in chart coordinates (see Section~\ref{app:quad_approx_pt_cloud}). When acting on tangent vector $\vec{v} \in T_{\varphi_i(\vec{\xi})}\mathcal{M}$ in ambient coordinates\footnote{Here, we assume the product $M_i K_i$ is cached.}, $K(D, d) = O(D^2 (D - d))$, and $\tilde{K}(D, d) = O(D d^2)$\footnote{$\tilde{K}(D, d) = O(D d^2)$ here assuming the Moore-Penrose inverse is computed by singular value decomposition (SVD)~\cite{li_2019}.}. This is the time complexity relevant for implementing $\texttt{D\_chart}_i$ in the Dynamo vector field integration (Section~\ref{sec:bio}). When a tangent vector is represented implicitly by an element of the tangent space $T_{\vec{\xi}}\mathcal{V}$, then $\texttt{D\_chart}(\vec{\xi})$ is an implicit identity operation, and the retraction takes $O(d)$ time to compute for the vector addition. This representation of the tangent vector as an element of $T_{\vec{\xi}}\mathcal{V}$ is done in the RPB experiment (Section~\ref{sec:classification_patches}).

When there are too few samples for \atlaslearn to determine a unique quadratic fit within a neighborhood, but enough to determine a linear approximation, we learn $L$ and $\vec{x}_0$ as described in Algorithm~\ref{subsec:atlaslearn}, but set the trilinear form $K$ to the zero map. This is the case in running \atlaslearn with 30 charts on the Dynamo data in Section~\ref{sec:bio}. In this scenario, the runtime $K(D, d)$ of $\texttt{D\_chart}$ becomes $O((D - d)d)$, as the quadratic term is always zero.

\subsubsection{The transition map $\psi_{ij}$}
\label{sec:atlas-transition}
The transition map $\psi_{ij}$ is evaluated as the equation
\begin{equation}\label{eq:transition_map}
    \psi_{ij}(\vec{\xi}) = L_j^\top \left[\frac{1}{2}M_i K_i\left(\vec{\xi}\otimes \vec{\xi}\right) + \left(L_i \vec{\xi} + \vec{x}_{i}\right) - \vec{x}_{j}\right].
\end{equation}
When the products $L_j^\top M_i K_i$, $L_j^\top L_i$, and $L_j^\top \left(\vec{x}_i - \vec{x}_j\right)$ are precomputed, this evaluation can be done in $O(d^3)$ time. At first computation, the overhead times for computing these products are $O(D d (D - d))$, $O(D d^2)$, and $O(D d)$, respectively. When $L_j^\top M_i K_i$ is known to be the zero matrix, as may happen when there are too few points for a quadratic fit in a neighborhood, the time complexity reduces to $O(d^2)$.

\subsubsection{\texttt{transition\_vector}}\label{sec:atlas-transition-vector}
To map a tangent vector $\vec{\tau}$ from $T_{\vec{\xi}} \mathcal{V}_i$ to $T_{\vec{\eta}} \mathcal{V}_j$ for any $\vec{\eta} \in \mathcal{V}_j$, we evaluate

\begin{equation}\label{eq:transition_vector}
    \texttt{transition\_vector}(\vec{\xi}, \vec{\eta}, \vec{\tau}) = L_j^\top \left(M_i K_i \left(\vec{\xi} \otimes \vec{1}\right) + L_i\right) \vec{\tau}.
\end{equation}

If $L_j^\top M_i K_i$ and $L_j^\top L_i$ are precomputed, as discussed in Section~\ref{sec:atlas-transition}, this computation can be done in $O(d^3)$ time in the case of nonzero $K_i$ and $O(d^2)$ time otherwise.

\subsection{Approximating the Riemannian Retraction Logarithm}\label{sec:riem_log}

\begin{algorithm}[t]
    \caption{Approximate Riemannian logarithm}\label{alg:riem_log}
    \begin{algorithmic}
        \Require Dense subgraph $G_{\delta, \epsilon}=(V\subset\mathcal{M},E\subset V\times V)$\Comment{ See Sec.\ref{sec:app_naive_dist}}
        \Require Weight function $W:E\to\mathbb{R}$ given by na\"ive approx. dist. \Comment{Sec.~\ref{sec:app_naive_dist}}
        \Require Quad. approx. matrices $M_i,K_i,L_i,\vec{c}_i$ for all charts $i$
        \Require basepoint $\vec{\xi}$ in chart $i$
        \Require point $\vec{\xi}^*$ in chart $i^*$, of which to take Riemannian logarithm
        \State $\vec{x}\gets\frac{1}{2}M_iK_i\left(\vec{\xi}\otimes\vec{\xi}\right)+L_i\vec{\xi}+\vec{c}_i$ \Comment{Ambient representation of $\vec{\xi}$}
        \State $\vec{x}^*\gets\frac{1}{2}M_{i^*}K_{i*}\left(\vec{\xi}\otimes\vec{\xi}\right)+L_{i^*}\vec{\xi}+\vec{c}_{i^*}$ \Comment{Ambient representation of $\vec{\xi}^*$}
        \State $\vec{v}\gets\text{point in }V\text{ closest to }\vec{x}$
        \State $\vec{v}^*\gets\text{point in }V\text{ closest to }\vec{x}^*$
        \State $\left(\vec{v},\vec{v}_1,\ldots,\vec{v}_j,\vec{v}^*\right)\gets\text{shortest path from }\vec{v}\text{ to }\vec{v}^*\text{ in }(V,E,W)$
        \Comment{$\vec{\xi}^\prime$ stores the cumulative Riem. log to be returned}
        \Comment{$\vec{\xi}_{\text{dq}}$ is the next point of which to compute the Riem. log locally; ``dq''$=$``dequeue''}
        \State $\xi^\prime\gets\vec{0}$ \Comment{Initialize Riem. log. to zero}
        \State $\vec{\xi}_{\texttt{dq}}\gets\vec{\xi}^*$ \Comment{First point of which to take local log. is $\vec{\xi}^\prime$}
        \State $i_\texttt{dq}\gets\text{chart index of }\vec{\xi}^*$
        \For{$\tilde{v}\gets\vec{v}^*,\vec{v}_j,\ldots,\vec{v}_1,\vec{v}$}
            \State $\tilde{i}\gets\text{chart index of }\tilde{v}$
            \State $\tilde{\xi}\gets\text{representation of }\tilde{v}\text{ in chart }\tilde{i}$
            \If{$i_\texttt{dq}=\tilde{i}$}\Comment{If both points are in the same chart}
                \State $\vec{\xi}^\prime\gets\vec{\xi}^\prime+\left(\vec{\xi}_\texttt{dq}-\tilde{\xi}\right)$ \Comment{Increment cumulative log. by local log.}
                \State $\vec{\xi}_\texttt{dq}\gets\tilde{\xi}$ \Comment{The base of the local log. becomes operand of next local log.}
            \Else
                \Comment{We assume by density of $G_\text{NN}$ that there is a valid representative in chart $\tilde{i}$}
                \State $\vec{\xi}_\texttt{dq}\gets L_{\tilde{i}}^\top\left(\frac{1}{2}M_{i_\texttt{dq}}K_{i_\texttt{dq}}\left(\vec{\xi}_\texttt{dq}\otimes\vec{\xi}_\texttt{dq}\right)+L_{i_\texttt{dq}}\vec{\xi}_\texttt{dq}+\vec{c}_{i_\texttt{dq}}\right)$ \Comment{Rep. of $\vec{\xi}_\texttt{dq}$ in chart $\tilde{i}$}
                \State $\vec{\xi}^\prime\gets L_{\tilde{i}}^\top\left(\frac{1}{2}M_{i^\prime}K_{i^\prime}\left(\vec{\xi}^\prime\otimes\vec{\xi}^\prime\right)+L_{i^\prime}\vec{\xi}^\prime+\vec{c}_{i^\prime}\right)$ \Comment{Rep. of $\vec{\xi}^\prime$ in chart $\tilde{i}$}
                \State $\vec{\xi}^\prime\gets\vec{\xi}^\prime+\left(\vec{\xi}_\texttt{dq}-\tilde{\xi}\right)$ \Comment{Increment cumulative log. by local log.}
                \State $i_\texttt{dq}\gets\tilde{i}$
            \EndIf
        \EndFor
        \State $\vec{\xi}^\prime\gets\vec{\xi}^\prime-\vec{\xi}$ \Comment{Lastly, increment log. by $\mathbf{Log}^\mathbf{Ret}_{\vec{\xi}}\left(\vec{\xi}^\prime\right)$}
        \State\Return $\vec{\xi}^\prime$
    \end{algorithmic}
\end{algorithm}

When performing variants of Riemannian gradient descent on a loss function determined by manifold-valued data \cite{hosseini_and_sra}, the contribution of each datum to the Riemannian gradient of the loss function involves computing retraction logarithms. For the quasi-Euclidean retraction on \atlas, retraction logarithms can be computed easily. Specifically, for points $\vec{\xi}_1, \vec{\xi}_2$ belonging to the same compressed chart $\mathcal{V}$, the retraction logarithm of $\vec{\xi}_2$ at $\vec{\xi}_1$, or $\mathbf{Log}^{\mathbf{Ret}}_{\vec{\xi}_1}(\vec{\xi}_2)$, is simply $\vec{\xi}_2-\vec{\xi}_1$; for points $\vec{\xi}_1, \vec{\xi}_2$ that do not belong to the same compressed chart, we approximate $\mathbf{Log}^{\mathbf{Ret}}_{\vec{\xi}_1}(\vec{\xi}_2)$ using the shortest path between $\vec{\xi}_1$ and $\vec{\xi}_2$ on the nearest neighbor graph $G_{\delta,\epsilon}$ of densely sampled points from \atlas structure (Sec.~\ref{sec:app_naive_dist}), by iteratively summing and vector-transporting the edgewise retraction logarithms from the endpoint of the path back to the start (Algorithm~\ref{alg:riem_log}).

\section{Methods for performing online Grassmann Fr\'echet mean estimation}\label{app:grassmann}

Experiments were performed using one core on an Intel Xeon Gold-6448Y processor, 100GB RAM, and the Linux 4.18.0-305.3.1.el8.x86\_64 operating system.

\subsection{An \atlas representation of the (n,k)-Grassmannian}\label{sec:grass_example}

The \atlas structure is constructed from a conventional atlas of $\mathbf{Gr}_{n,k}$ derived from a cell complex presented by Ehresmann~\cite{ehresmann_1934}, which we adapt to our notation for convenience. Our \atlas has charts from the Ehresmann atlas and also permits \textit{ad hoc} creation of new coordinate charts centered at any point in the manifold. We use \textit{ad hoc} chart creation to maintain proximity to the origin in compressed charts and, hence, accuracy of quasi-Euclidean updates, which results in fast online learning.

In addition to the Ehresmann atlas, there exists another canonical representation of the Grassmann manifold as a quotient of the Stiefel matrices~\cite{bendokat_2011}. We find the Ehresmann atlas simpler to present, but also use the Stiefel construction (Algorithm~\ref{alg:grass_quasi_euclidean} and Fig.~\ref{fig:first_order_subroutines}) to enable direct comparisons with existing Riemannian optimization approaches on the Grassmann manifold that use it, such as Pymanopt \cite{pymanopt} and GiFEE \cite{chakraborty_gifme}. 

\subsection{Coordinate charts of the Ehresmann atlas}\label{sec:ehresmann}
We begin with some intuition for the Ehresmann atlas construction. The $(n,k)$-Grassmannian can be understood as the manifold of $n\times n$ orthogonal projection matrices of rank $k$. There are $\binom{n}{k}$ such matrices whose entries are all zero, save for exactly $k$ diagonal entries that are equal to one. These matrices are in one-to-one correspondence with sets of $k$ fixed integers $i_1,\ldots,i_k$ satisfying $1\leq i_1<\ldots<i_k\leq n$, and therefore with the permutations\footnote{For finite $S\subset\mathbb{R}$, the notation $\min_{(m)}S$ denotes the $m$th smallest element of $S$.}
\begin{align}\label{eqn:permutation_indices}
    \pi_{i_1,\ldots,i_k}:\{1,\ldots,n\}&\to\{1,\ldots,n\} \\
    j&\mapsto\left\{\begin{array}{lr}
        i_j, & j\leq k \\
        \min_{(j-k)}\left(\left\{1,\ldots,n\right\}\setminus\left\{i_1,\ldots,i_k\right\}\right), & j>k \nonumber
    \end{array}\right..
\end{align}
Let $P_{i_1,\ldots,i_k}$ be the $n\times n$ permutation matrix corresponding to $\pi_{i_1,\ldots,i_k}$. The aforementioned $\binom{n}{k}$ matrices take the form
\begin{equation*}
    P_{i_1,\ldots,i_k}\left(\frac{I_k}{\mathbf{0}_{n-k,k}}\right)\left(\frac{I_k}{\mathbf{0}_{n-k,k}}\right)^\top P_{i_1,\ldots,i_k}^\top.
\end{equation*} 
These $\binom{n}{k}$ points are the centerpoints of each coordinate chart in the Ehresmann atlas. Further, for every $P\in\mathbf{Gr}_{n,k}$, there exist a permutation $\pi_{i_1,\ldots,i_k}$ and a matrix $A\in\mathbb{R}^{(n-k)\times k}$ such that $P=P_{i_1,\ldots,i_k}\left(\frac{I_k}{A}\right)\left(P_{i_1,\ldots,i_k}\left(\frac{I_k}{A}\right)\right)^\dag$, where $\dag$ denotes the Moore-Penrose pseudoinverse.

For the coordinate charts in the Ehresmann atlas 
let  $\mathbf{colproj}:\mathbb{R}^{n\times k}\to\mathbb{R}^{n\times n}$ be the map that takes a matrix $A$ to the orthogonal projector onto the column space of $A$. 
The coordinate chart $\left(\mathcal{U}_{i_1,\ldots,i_k},\mathcal{V}_{i_1,\ldots,i_k},\varphi_{i_1,\ldots,i_k}\right)$ is then defined as:
\begin{itemize}
    \setlength\itemsep{0em}
    \item $\varphi_{i_1,\ldots,i_k}^{-1}:A\mapsto\mathbf{colproj}\left(P_{i_1,\ldots,i_k}\left(\frac{I_k}{A}\right)\right)$
    \item $\mathcal{V}_{i_1,\ldots,i_k}=\mathbb{R}^{(n-k)\times k},$ \quad and \quad $\mathcal{U}_{i_1,\ldots,i_k}=\text{im}\left(\varphi_{i_1,\ldots,i_k}^{-1}\right).$
\end{itemize}
For convenience, we denote the coordinate chart corresponding to the identity permutation as $\left(\mathcal{U}_0,\mathcal{V}_0,\varphi_0\right)$. In Claims~\ref{clm:grass_dist_0}~and~\ref{clm:grass_dist_other}~(Appendix \ref{app:misc_grass_comp}), we demonstrate that a point $\vec{\xi}$ in a compressed chart $\mathcal{V}$ of the Ehresmann atlas is closer to the center of another chart than to the center of $\mathcal{V}$ if any coordinate of $\vec{\xi}$ exceeds one (Section~\ref{app:grass_atlas_graph}). 
We use this result to determine when to generate new \emph{ad hoc} charts. The creation of \textit{ad hoc} coordinate charts allows for points to always be close to the origin within a coordinate chart, thereby reducing the error of quasi-Euclidean updates. In the case that $P\in\mathbf{Gr}_{n,k}$ is closer to the center of $\mathcal{V}$ than to the center of any Ehresmann chart, we say that $\mathcal{V}$ is the ``best'' Ehresmann chart for $P$.

\subsection{Ad hoc formation of coordinate charts}\label{sec:ad_hoc}
To perform \textit{ad hoc} chart formation, we begin with the case of generating charts centered at projection matrices belonging to $\mathcal{U}_0$. This case generalizes to projection matrices belonging to $\mathcal{U}_{i_1,\ldots,i_k}$ by simple conjugation by $P_{i_1,\ldots,i_k}$. The group $\mathbf{O}(n)$ of $n\times n$ orthogonal matrices acts transitively on $\mathbf{Gr}_{n,k}$ according to the action $Q:P\mapsto QPQ^\top$. Therefore, for all $A\in\mathcal{V}_0$, there exists an orthogonal matrix $Q_A$ such that
\begin{equation}\label{eqn:qa_action}
    \varphi^{-1}_0(A)=Q_A\left(\begin{array}{c|c}
        I_k & \mathbf{0}_{k, n-k} \\
        \hline
        \mathbf{0}_{n-k,k} & \mathbf{0}_{n-k,n-k}
    \end{array}\right)Q_A^\top.
\end{equation}

\begin{claim}\label{clm:Q_A}
    An orthogonal matrix $Q_A$ which satisfies Equation \ref{eqn:qa_action} is given by
    \begin{equation*}
        Q_A=\left(\begin{array}{c|c}
            \sqrt{\left(I_k+A^\top A\right)^{-1}} & -A^\top\sqrt{\left(I_{n-k}+AA^\top\right)^{-1}} \\
            \hline
            A\sqrt{\left(I_k+A^\top A\right)^{-1}} & \sqrt{\left(I_{n-k}+AA^\top\right)^{-1}}
        \end{array}\right).
    \end{equation*}
    \begin{proof}
        See Section \ref{app:misc_grass_comp}.
    \end{proof}
\end{claim}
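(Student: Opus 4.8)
The plan is to verify the two assertions packaged in Equation~\ref{eqn:qa_action} directly: that the displayed $Q_A$ is orthogonal, and that conjugating the rank-$k$ coordinate projector by it reproduces $\varphi_0^{-1}(A)$. First I would record an explicit block form of the target. Writing $B=\left(\frac{I_k}{A}\right)$, the chart definition of Section~\ref{sec:ehresmann} gives $\varphi_0^{-1}(A)=\mathbf{colproj}(B)$; since $B$ has full column rank $k$ and $B^\top B=I_k+A^\top A$, this is
\begin{equation*}
\varphi_0^{-1}(A)=B\left(B^\top B\right)^{-1}B^\top=\begin{pmatrix} \left(I_k+A^\top A\right)^{-1} & \left(I_k+A^\top A\right)^{-1}A^\top \\ A\left(I_k+A^\top A\right)^{-1} & A\left(I_k+A^\top A\right)^{-1}A^\top \end{pmatrix}.
\end{equation*}
Abbreviating $S:=\sqrt{\left(I_k+A^\top A\right)^{-1}}$ and $T:=\sqrt{\left(I_{n-k}+AA^\top\right)^{-1}}$, so that $Q_A=\left(\begin{smallmatrix} S & -A^\top T \\ AS & T\end{smallmatrix}\right)$, I would note at the outset the two facts driving everything: each of $S,T$ is \emph{symmetric} (a principal square root of a symmetric positive-definite matrix), and by construction $S^2=\left(I_k+A^\top A\right)^{-1}$ and $T^2=\left(I_{n-k}+AA^\top\right)^{-1}$.

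Next I would establish orthogonality by a $2\times 2$ block multiplication. Using $S^\top=S$ and $T^\top=T$, the four blocks of $Q_A^\top Q_A$ are $S^2+SA^\top AS$, $-SA^\top T+SA^\top T$, $-TAS+TAS$, and $TAA^\top T+T^2$. The two off-diagonal blocks vanish by inspection. Factoring $S$ out of the top-left block gives $S\left(I_k+A^\top A\right)S$, which collapses to $I_k$ on substituting $S=\left(I_k+A^\top A\right)^{-1/2}$, and symmetrically the bottom-right block is $T\left(I_{n-k}+AA^\top\right)T=I_{n-k}$. Hence $Q_A^\top Q_A=I_n$, and because $Q_A$ is square this already gives $Q_A\in\mathbf{O}(n)$.

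Finally I would compute the conjugation. Since $\mathrm{diag}(I_k,\mathbf{0})$ kills all but the first $k$ columns of $Q_A$, the product $Q_A\,\mathrm{diag}(I_k,\mathbf{0})\,Q_A^\top$ reduces to $C_1C_1^\top$, where $C_1=\left(\frac{S}{AS}\right)$ is the first block column. Multiplying out with $C_1^\top=\left(S\mid SA^\top\right)$ yields the block matrix with entries $S^2,\,S^2A^\top,\,AS^2,\,AS^2A^\top$, and replacing $S^2$ by $\left(I_k+A^\top A\right)^{-1}$ reproduces exactly the four blocks of $\varphi_0^{-1}(A)$ displayed above; thus $Q_A\,\mathrm{diag}(I_k,\mathbf{0})\,Q_A^\top=\varphi_0^{-1}(A)$, as claimed. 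The computation has no deep obstacle: the only point demanding care is the orthogonality step, where one must use that the principal square roots are symmetric and that the ``sandwich'' $S(I_k+A^\top A)S$ (resp. for $T$) telescopes to the identity; the conjugation identity then follows by pure block bookkeeping. If one preferred to rewrite the cross terms by moving $A$ across a square root, the push-through identity $A\,f(A^\top A)=f(AA^\top)\,A$ for analytic $f$ (here $f(x)=(1+x)^{-1/2}$) is available, but it is not needed for the argument above.
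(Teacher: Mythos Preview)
Your proof is correct and actually a bit slicker than the paper's. The paper verifies orthogonality by computing $Q_AQ_A^\top$ rather than $Q_A^\top Q_A$; with that ordering the off-diagonal blocks do \emph{not} cancel by inspection, and the paper invokes a separate push-through lemma $(I_k+A^\top A)^{-1}A^\top=A^\top(I_{n-k}+AA^\top)^{-1}$ (proved via Neumann series) together with two corollaries to reduce all four blocks. By choosing $Q_A^\top Q_A$ you arrange that the cross terms $-SA^\top T+SA^\top T$ and $-TAS+TAS$ vanish trivially, and the diagonal blocks telescope to identities directly from $S^2=(I_k+A^\top A)^{-1}$ and $T^2=(I_{n-k}+AA^\top)^{-1}$; this sidesteps the auxiliary lemmas entirely, as you note. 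The conjugation half is essentially the same in both proofs, though your $C_1C_1^\top$ framing is marginally cleaner than the paper's ``straightforward computation.'' The only thing the paper's detour buys is that the push-through identity is a reusable fact; for the claim at hand your route is shorter.
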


Claim \ref{clm:Q_A} allows us to 
change coordinate charts so that quasi-Euclidean updates approximate the exponential map more accurately, as done in Algorithm \ref{alg:grass_quasi_euclidean}. While this specific method 
only works for points in the Ehresmann chart $\varphi_0: \mathcal{V}_0 \to \mathcal{U}_0$, it generalizes to charts $\varphi_{i_1,\ldots,i_k}: \mathcal{V}_{i_1,\ldots,i_k} \to \mathcal{U}_{i_1,\ldots,i_k}$ by replacing the action in Equation \ref{eqn:qa_action} with the action
\begin{equation}\label{eqn:qa_action_general}
    \varphi^{-1}_{i_1,\ldots,i_k}(A)=Q_AP_{i_1,\ldots,i_k}\left(\begin{array}{c|c}
        I_k & \mathbf{0}_{k,n-k} \\
        \hline
        \mathbf{0}_{n-k,k} & \mathbf{0}_{n-k,n-k}
    \end{array}\right)P_{i_1,\ldots,i_k}^\top Q_A^\top.
\end{equation}
This creates coordinate charts $\varphi_{A,i_1,\ldots,i_k}: \mathcal{V}_{A,i_1,\ldots,i_k} \to \mathcal{U}_{A,i_1,\ldots,i_k},\varphi_{A,i_1,\ldots,i_k}$ for all $A\in\mathbb{R}^{(n-k)\times k}$ and all $P_{i_1,\ldots,i_k}$, with the transition map $\psi_{A,i_1,\ldots,i_k\to A^\prime,j_1,\ldots,j_k}$ taking $B\in\mathcal{V}_{i_1,\ldots,i_k}$ to
\begin{equation}\label{eqn:grass_transition_map}
    \left(\mathbf{0}_{n-k,k}\mid I_{n-k}\right)R\left(\left(I_k\mid\mathbf{0}_{k,n-k}\right)R\right)^{-1},
\end{equation}
where $R=P_{j_1,\ldots,j_k}^\top Q_{A^\prime}^\top Q_AP_{i_1,\ldots,i_k}\left(\frac{I_k}{B}\right)$.
Within any coordinate chart $\mathcal{V}_{A,i_1,\ldots,i_k}$, transition maps are invoked whenever any element of the coordinates $B$ exceeds one~(Section~\ref{app:grass_atlas_graph}), transitioning into a chart centered at $\varphi_{A,i_1,\ldots,i_k}\left(B\right)$.

\subsection{Transition maps in the \atlas representation of the Grassmannian}\label{app:grass_atlas_graph}

Here, we explain when we invoke transition maps between charts on the \atlas representation of the Grassmann manifold $\mathbf{Gr}_{n,k}$ given in Sec.~\ref{sec:grass_example}. For a given Ehressman chart, the adjacent charts are determined by the permutation Eq.~\ref{eqn:permutation_indices}, one existing for every dimension of the manifold. The condition under which we change charts within this \atlas representation is given in Claim~\ref{clm:when_transition}. For this reason, a transition boundary exists for each coordinate such that, if the coordinate exceeds an absolute value of one, the transition map is invoked.

\begin{claim}\label{clm:when_transition}
    An element $A$ of a fixed Ehressman chart is closer to the center of the chart than to the center of any other Ehressman chart if all of the elements of $A$ have absolute value less than one.
\end{claim}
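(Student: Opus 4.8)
The plan is to work with the chordal (Frobenius) distance between the rank-$k$ projectors that represent points of $\mathbf{Gr}_{n,k}$, and to reduce the geometric comparison to an entrywise comparison of the diagonal of a single projector. Write $P_A = \varphi_0^{-1}(A)$ for the projector attached to $A$ in the chart centered at $P_{\{1,\dots,k\}}$, and let $P_J$ denote the center of the Ehresmann chart indexed by a $k$-subset $J$. Since each $P_J$ is diagonal with ones exactly on $J$ and every projector here has rank $k$, we have $\|P_A-P_J\|_F^2 = 2k-2\,\mathrm{tr}(P_AP_J) = 2k-2\sum_{m\in J}(P_A)_{mm}$. Hence the statement ``$A$ is strictly closer to its own center than to the center of any other chart'' is equivalent to the purely combinatorial claim that the first $k$ diagonal entries of $P_A$ are its $k$ largest, i.e. $\sum_{m=1}^{k}(P_A)_{mm} > \sum_{m\in J}(P_A)_{mm}$ for every $J\neq\{1,\dots,k\}$. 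First I would observe that this follows once we prove the pairwise inequalities $(P_A)_{ii} > (P_A)_{k+l,k+l}$ for all $i\in\{1,\dots,k\}$, $l\in\{1,\dots,n-k\}$: grouping the equal numbers of indices that $J$ drops and adds, and bounding each dropped top-block diagonal below by the smallest and each added bottom-block diagonal above by the largest, turns any multi-index comparison into a sum of such pairwise ones. This also matches the per-coordinate transition rule, since adjacent charts differ by a single swap $j\leftrightarrow k+l$.

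Next I would compute the diagonal entries explicitly. Writing $\widetilde A = \left(\frac{I_k}{A}\right)$, $B=I_k+A^\top A$ and $C=I_{n-k}+AA^\top$, the projector is $P_A = \widetilde A\,B^{-1}\widetilde A^\top$, whose block form gives $(P_A)_{ii}=(B^{-1})_{ii}$ in the top block and $(P_A)_{k+l,k+l}=(AB^{-1}A^\top)_{ll}$ in the bottom block. The push-through identity $A B^{-1}A^\top = I_{n-k}-(I_{n-k}+AA^\top)^{-1}$ rewrites the bottom-block diagonal as $1-(C^{-1})_{ll}$, so the target pairwise inequality becomes the symmetric statement $(B^{-1})_{ii}+(C^{-1})_{ll}>1$. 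The natural tools are the variational identities $1/(B^{-1})_{ii}=\min_{x:\,x_i=1}\left(\|x\|^2+\|Ax\|^2\right)$ and $1/(C^{-1})_{ll}=\min_{y:\,y_l=1}\left(\|y\|^2+\|A^\top y\|^2\right)$, together with $\det B=\det C$, which express both diagonals through cofactors of the same Gram determinant.

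The crux, and the step I expect to be the main obstacle, is establishing $(B^{-1})_{ii}+(C^{-1})_{ll}>1$ from the entrywise hypothesis $\max_{p,q}|A_{pq}|<1$ alone. The difficulty is that crude spectral bounds are far too lossy: with $a_l$ the $l$th row and $c_i$ the $i$th column of $A$, one only gets $(P_A)_{ii}\ge 1/(1+\|c_i\|^2)$ and $(P_A)_{k+l,k+l}\le \|a_l\|^2/(1+\|a_l\|^2)$, whose combination would demand $\|a_l\|\,\|c_i\|<1$, which the hypothesis does not supply. A direct Cramer's-rule expansion in the $2\times2$ model case shows the honest answer is $(P_A)_{jj}-(P_A)_{k+l,k+l} = \left(1-A_{lj}^2+R\right)/\det B$, where $R$ collects interaction terms built from the remaining entries and the minors (determinants) of $A$; the real work is to show these cross terms cannot overwhelm the $1-A_{lj}^2$ contribution when every entry is bounded by one. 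I would attack this by reducing to the two coordinates $\{j,k+l\}$ via a Schur complement of $B$ against its remaining rows and columns, aiming to reinstate the clean one-dimensional threshold (where $(P_A)_{ii}>(P_A)_{k+l,k+l}\iff A_{lj}^2<1$) after controlling the Schur correction. I would flag at the outset that this entrywise-to-spectral passage is delicate, since the compared quantity genuinely depends on all of $A$ rather than on $A_{lj}$ alone, so the argument must either exploit the full projector structure or restrict attention to the adjacent single-swap charts that the data structure actually queries.
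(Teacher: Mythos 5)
Your reduction is correct as far as it goes, and it is in fact tight: comparing against the single-swap charts $J=(\{1,\dots,k\}\setminus\{i\})\cup\{k+l\}$ shows that the chordal-distance version of the claim is \emph{equivalent} to your pairwise inequalities $(B^{-1})_{ii}+(C^{-1})_{ll}>1$. But the step you flagged as the crux is not merely delicate --- it is false, and with it the claim itself fails for $\min(k,n-k)\geq 2$. Take $k=n-k=2$ and $A=t\left(\begin{array}{cc}1&1\\1&-1\end{array}\right)$ with $1/\sqrt{2}<t<1$, so every entry has absolute value $t<1$. Then $A^\top A=AA^\top=2t^2I_2$, hence $B=C=(1+2t^2)I_2$ and $(B^{-1})_{ii}+(C^{-1})_{ll}=2/(1+2t^2)<1$. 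Concretely, the diagonal of $P_A$ is $\left(\frac{1}{1+2t^2},\frac{1}{1+2t^2},\frac{2t^2}{1+2t^2},\frac{2t^2}{1+2t^2}\right)$: its two largest entries sit in the bottom block, so $P_A$ is strictly closer to the center $\mathrm{diag}(0,0,1,1)$ of the opposite Ehresmann chart (and even to the single-swap centers) than to its own center. The same conclusion holds for the geodesic (Jordan-angle) distance the paper uses: all principal angles to the own center equal $\arccos\left((1+2t^2)^{-1/2}\right)>\pi/4$, while those to $\mathrm{diag}(0,0,1,1)$ are the complementary angles. So no Schur-complement or interaction-term analysis can rescue the entrywise hypothesis; it simply does not control the relevant quantity. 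What does work is the spectral hypothesis $\|A\|_2<1$: then $\lambda_{\max}(B),\lambda_{\max}(C)<2$, so every diagonal entry of $B^{-1}$ and of $C^{-1}$ exceeds $1/2$, and your pairwise inequality follows at once. The entrywise and spectral conditions coincide exactly when $A$ is supported on a single entry or $\min(k,n-k)=1$, which are the only regimes in which the stated claim is true.

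For comparison, the paper's own proof takes a different and equally incomplete route: it cites Claims~\ref{clm:grass_dist_0} and~\ref{clm:grass_dist_other}, which compute the geodesic distances $|\arctan t|$ versus $|\mathrm{arccot}\,t|$ only for points of the form $\varphi_0\left(t\,\vec{e}_{n-k}\vec{e}_k^\top\right)$ --- matrices with a \emph{single} nonzero coordinate --- and then appeals to permutation symmetry among charts. That argument establishes the claim only along coordinate directions of the chart and silently extrapolates to general $A$; your counterexample-shaped worry about ``interaction terms'' is precisely where the extrapolation breaks. Your chordal reformulation is the more illuminating one, because it localizes the failure and identifies the correct fix: either restate the claim (and the transition rule in Algorithm~\ref{alg:grass_quasi_euclidean}) in terms of $\|A\|_2<1$, or restrict it to single-coordinate perturbations as in the paper's distance claims.
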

\begin{proof}
    The claim is proved for the Ehressman chart $\varphi_0: \mathcal{V}_0 \to \mathcal{U}_0$ by Claims~\ref{clm:grass_dist_0} and \ref{clm:grass_dist_other}. For the remaining Ehressman charts, the Claim is proved by observing we can conjugate elements of $\mathcal{U}_0$ by the permutation matrices used to define the remaining Ehressman charts.
\end{proof}

In the online Fr\'echet mean experiment, transition maps between coordinate charts are implemented according to Algorithm~\ref{alg:grass_trans_map}, whose runtime scales with complexity $O(n^3+n^2k+nk^2+k^3)$ but is constant with respect to the number of compressed charts traversed. Since the quasi-Euclidean updates have $O(nk)$ time complexity, while the first-order update schemes for GiFEE, MANOPT, and MANOPT-RET have $\Omega(nk)$ time-complexity, the ATLAS framework can outspeed the other schemes when it can make infrequent use of high-cost chart transitions.

\subsection{Miscellaneous subroutines}\label{app:grass_misc_subroutines}
The online Fr\'echet mean estimation on the \atlas structure(Algorithm~\ref{alg:grass_quasi_euclidean}) depends on the subroutines \texttt{Atlas\_Grassmann\_identify\_chart} (Algorithm~\ref{alg:grass_identify}), which identifies the closest Ehressman chart,  and \texttt{Atlas\_Grassmann\_ingest\_matrix} (Algorithm~\ref{alg:grass_ingest}), which gives the representation of  matrix in the current coordinate chart of the \atlas.

\begin{figure}[t]
    \centering
    \fbox{\begin{subfigure}{0.45\columnwidth}
        \caption{\texttt{GifeeLog}}
        \begin{algorithmic}
            \footnotesize
            \Require $\mathcal{X},\mathcal{Y}\in\mathbf{Gr}_{n,k}$
            \Require $X\in\mathbb{R}^{n\times k},\mathbf{colproj}(X)=\mathcal{X}$
            \Require $Y\in\mathbb{R}^{n\times k},\mathbf{colproj}(Y)=\mathcal{Y}$
            \State $A\gets\left(I-X\left(X^\top X\right)^{-1}X^\top\right)Y\left(X^\top Y\right)^{-1}$
            \State $U,\Sigma,V^\top\gets\texttt{ThinSVD}(A)$
            \State \Return $U,\Sigma,V^\top$
        \end{algorithmic}
    \end{subfigure}}
    \fbox{\begin{subfigure}{0.45\columnwidth}
        \caption{\texttt{ManoptLog}}
        \begin{algorithmic}
            \footnotesize
            \Require $\mathcal{X},\mathcal{Y}\in\mathbf{Gr}_{n,k}$
            \Require $X\in\mathbb{R}^{n\times k},\mathbf{colproj}(X)=\mathcal{X}$
            \Require $Y\in\mathbb{R}^{n\times k},\mathbf{colproj}(Y)=\mathcal{Y}$
            \State $A\gets\left(I-X\left(X^\top X\right)^{-1}X^\top\right)Y\left(X^\top Y\right)^{-1}$
            \State $U,\Sigma,V^\top\gets\texttt{ThinSVD}(A)$
            \State $\Theta\gets\arctan\Sigma$
            \State \Return $U\Theta V^\top$
        \end{algorithmic}
    \end{subfigure}} \\
    \fbox{\begin{subfigure}{0.45\columnwidth}
        \caption{\texttt{GifeeExp}}
        \begin{algorithmic}
            \footnotesize
            \Require $\mathcal{X}\in\mathbf{Gr}_{n,k}$
            \Require $X\in\mathbb{R}^{n\times k},\mathbf{colproj}(X)=\mathcal{X}$
            \Require $U,\Sigma,V^\top$ from \texttt{GifeeLog}
            \Require iteration number $i>0$
            \State $\Theta\gets\arctan\Sigma$
            \State $\tilde{Z}\gets XV\cos(\Theta/i)+U\sin(\Theta/i)$
            \State $Z,R\gets$ QR decomposition of $\tilde{Z}$
            \State \Return $Z$
        \end{algorithmic}
    \end{subfigure}}
    \fbox{\begin{subfigure}{0.45\columnwidth}
        \caption{\texttt{ManoptExp}}
        \begin{algorithmic}
            \footnotesize
            \Require $\mathcal{X}\in\mathbf{Gr}_{n,k}$
            \Require $X\in\mathbb{R}^{n\times k},\mathbf{colproj}(X)=\mathcal{X}$
            \Require $L$ from \texttt{ManoptLog}
            \Require iteration number $i>0$
            \State $U,\Sigma,V^\top\gets\texttt{ThinSVD}(L/i)$
            \State $\tilde{Z}\gets XV\cos(\Sigma) V^\top+U\sin(\Sigma) V^\top$
            \State $Z,R\gets$ QR decomposition of $\tilde{Z}$
            \State \Return $Z$
        \end{algorithmic}
    \end{subfigure}} \\
    \fbox{\begin{subfigure}{0.45\columnwidth}
        \caption{\texttt{ManoptRet}}
        \begin{algorithmic}
            \footnotesize
            \Require $\mathcal{X}\in\mathbf{Gr}_{n,k}$
            \Require $X\in\mathbb{R}^{n\times k},\mathbf{colproj}(X)=\mathcal{X}$
            \Require $L$ from \texttt{ManoptLog}
            \Require iteration number $i>0$
            \State $U,\Sigma,V^\top\gets\texttt{ThinSVD}(X+L/i)$
            \State $\tilde{Z}\gets UV^\top$
            \State $Z,R\gets$ QR decomposition of $\tilde{Z}$
            \State \Return $Z$
        \end{algorithmic}
    \end{subfigure}}

    \caption{\textbf{Riemannian logarithm and retraction algorithms used by the non-ATLAS first-order update schemes in Sec.~\ref{sec:grassmann}.}}\label{fig:first_order_subroutines}
\end{figure}

\begin{algorithm}[t]
    \caption{\texttt{QA\_from\_A} ($Q_A$ for $A\in\mathcal{V}_0$ according to Claim \ref{clm:Q_A})}\label{alg:qa_from_a}
    \begin{algorithmic}
        \Require $A\in\mathbb{R}^{n-k,k}$
        \State $Q_A\gets\left(\begin{array}{c|c}
            \sqrt{\left(I_k+A^\top A\right)^{-1}} & -A^\top\sqrt{\left(I_{n-k}+AA^\top\right)^{-1}} \\
            \hline
            A\sqrt{\left(I_k+A^\top A\right)^{-1}} & \sqrt{\left(I_{n-k}+AA^\top\right)^{-1}}
        \end{array}\right)$ \Comment{$O(n^3+n^2k+nk^2+k^3)$ time}
        \State \Return $Q_A$
    \end{algorithmic}
\end{algorithm}

\begin{algorithm}[t]
    \caption{\texttt{Atlas\_Grassmann\_identify\_chart} (identify chart in Ehresmann atlas whose center is closest to $\mathbf{colspan}X$)}\label{alg:grass_identify}
    \begin{algorithmic}
        \Require $X\in\mathbb{R}^{n\times k}$, full rank
        \State $P\gets X\left(X^\top X\right)^{-1}X^\top$ \Comment{$P=\mathbf{colproj}X$; $O(n^2k+nk^2+k^3)$ time}
        \For{$j\in\{1,\ldots,k\}$} \Comment{$O(nk)$ time}
            \State $i_j\gets i\text{ such that }P_{ii}\text{ is }j\text{th largest diagonal entry of }P$
            \State $i_1,\ldots,i_k\gets i_1,\ldots,i_k$ in increasing order
        \EndFor
        \State \Return $i_1,\ldots,i_k$
    \end{algorithmic}
\end{algorithm}

\begin{algorithm}[t]
    \caption{\texttt{Atlas\_Grassmann\_ingest\_matrix} (ingest Grassmann element represented as full-rank matrix into Ehresmann chart)}\label{alg:grass_ingest}
    \begin{algorithmic}
        \Require $X\in\mathbb{R}^{n\times k}$, full rank
        \Require $1\leq i_1\leq\ldots\leq i_k\leq n$ specifying Ehresmann chart
        \State $X_U\gets$ restriction of $X$ to rows in $\{i_1,\ldots,i_k\}$ \Comment{$O(1)$ time}
        \State $X_L\gets$ restriction of $X$ to rows not in $\{i_1,\ldots,i_k\}$ \Comment{$O(1)$ time}
        \State $A\gets X_LX_U^{-1}$ \Comment{$A=\varphi_{i_1,\ldots,i_k}\left(\mathbf{colproj}\left(X\right)\right)$; $O(nk^2+k^3)$}
        \State \Return $A$
    \end{algorithmic}
\end{algorithm}

\begin{algorithm}[t]
    \caption{\texttt{Atlas\_Grassmann\_transition\_map} (transition map on Grassmann \atlas, used in Alg.~\ref{alg:grass_quasi_euclidean})}\label{alg:grass_trans_map}
    \begin{algorithmic}
        \Require $A\in\mathbb{R}^{(n-k)\times k}$, permutation indices $i_1,\ldots,i_k$
        \Require $Q_A$ \Comment{output by \texttt{QA\_from\_A}$(A)$, Alg.~\ref{alg:qa_from_a}}
        \State $P\gets\texttt{permutation\_matrix}\left(i_1,\ldots,i_k\right)$ \Comment{$O(n)$ time}
        \State $Y\gets Q_AP\left(\frac{I_k}{A}\right)$ \Comment{$O(n^3+n^2k)$ time}
        \State $i_1,\ldots,i_k\gets\texttt{ATLAS\_identify\_chart}\left(Y\right)$ \Comment{Alg.~\ref{alg:grass_identify}; $O(n^2k+nk^2+k^3)$ time}
        \State $\tilde{A}\gets\texttt{ATLAS\_ingest\_matrix}\left(Y,i_1,\ldots,i_k\right)$ \Comment{Alg.~\ref{alg:grass_ingest}; $O(nk^2+k^3)$ time}
        \State $\tilde{Q}_A\gets\texttt{QA\_from\_A}\left(\tilde{A}\right)$ \Comment{Alg.~\ref{alg:qa_from_a}; $O(n^3+n^2k+nk^2+k^3)$ time}
        \State $Q_A\gets P\tilde{Q}_AP^\top$ \Comment{$O(n^3)$ time}
        \State $A\gets\mathbf{0}_{n-k,k}$ \Comment{$O(1)$ time}
        \State \Return $A,Q_A$
    \end{algorithmic}
\end{algorithm}


\begin{algorithm}[h]
    \caption{\atlasfrechet \\
    Online Fr\'echet mean estimation on $\mathbf{Gr}_{n,k}$ using quasi-Euclidean updates on \atlas
    } \label{alg:grass_quasi_euclidean}
    \begin{algorithmic}[1]
        \Require Probability distribution $\mathcal{D}$ on $\mathbf{Gr}_{n,k}$
        \Require Fr\'echet stream of samples $X_1,X_2,\ldots\sim\mathcal{D}$, sampled as Stiefel matrices
        \State $i_1,\ldots,i_k\gets\texttt{ATLAS\_identify\_chart}\left(X_1\right)$ \Comment{Best Ehresmann chart (Sec. \ref{sec:ehresmann}); Alg. \ref{alg:grass_identify}}
        \State $A\gets\texttt{ATLAS\_ingest\_matrix}\left(X_1,i_1,\ldots,i_k\right)$ \Comment{$A=\varphi_{i_1,\ldots,i_k}\left(\mathbf{colproj}\left(X_1\right)\right)$; Alg. \ref{alg:grass_ingest}}
        \State $Q_A\gets I_n$ \Comment{$Q_A$ is used to define the map (\ref{eqn:qa_action})}
        \State $Q_{A,U}\gets \left(I_k\mid\mathbf{0}_{k,n-k}\right)^\top$ \Comment{restriction of $Q_A$ to columns in $\{i_1,\ldots,i_k\}$}
        \State $Q_{A,L}\gets \left(I_{n-k}\mid\mathbf{0}_{n-k,k}\right)^\top$ \Comment{restriction of $Q_A$ to columns not in $\{i_1,\ldots,i_k\}$}
        \State $n\gets 1$
        \While{streaming}
            \State $n\gets n+1$
            \State $\tilde{A}\gets Q_{A,L}^\top X_n\left(Q_{A,U}^\top X_n\right)^{-1}$ \Comment{$\tilde{A}=\varphi_{i_1,\ldots,i_k}\left(Q_A^\top\mathbf{colproj}\left(X_n\right)Q_A\right)$; Eq. (\ref{eqn:qa_action_general})} \label{line:grass_logarithm}
            \State $A\gets A+\left(\tilde{A}-A\right)/n$\Comment{Update online Fr\'echet mean estimator}
            \If{any entry $a$ of $A$ violates $\lvert a\rvert<1$}\Comment{change chart if necessary; Claim \ref{clm:grass_dist_0}}
                \State $A,Q_A\gets\texttt{Atlas\_Grassmann\_transition\_map}(A,i_1,\ldots,i_k,Q_A)$ \Comment{Alg. \ref{alg:grass_trans_map}}
                \State $Q_{A,U}\gets$ restriction of $Q_A$ to columns in $\{i_1,\ldots,i_k\}$
                \State $Q_{A,L}\gets$ restriction of $Q_A$ to columns not in $\{i_1,\ldots,i_k\}$
            \EndIf
        \EndWhile
        \State \Return $A,i_1,\ldots,i_k,Q_A$
    \end{algorithmic}
\end{algorithm}

\subsection{Non-uniform sampling of the Grassmann manifold by geodesic power scaling}\label{sec:geodesic_power_distribution}

Because of the lack of existing benchmark datasets, we introduce the  \emph{geodesic power distribution}
$\mathbf{GPD}(\mathcal \mathcal{X} , p)$ with Fr\'echet mean  $\mathcal X \in\mathbf{Gr}_{n,k}$ and scaling exponent $p > 1$, which inversely controls the entropy of the distribution (formally defined in Section~\ref{sec:geodesic_power_distribution}). The $\mathbf{GPD}$ is a natural, efficiently samplable distribution, which guarantees existence and uniqueness of the Fr\'echet mean $\mathcal{X}$.
For the Grassmann experiments, points for online sampling were generated by sampling randomly from the Grassmann manifold. To generate points from a distribution with a well-defined Fr\'echet mean, such as the uniform measure on balls of a fixed radius on the Grassmannian, we used the following method, which we call \textit{geodesic power scaling}. 

Fix $p>1$ and $\mathcal{X}\in\mathbf{Gr}_{n,k}$. Points are sampled from the \textit{geodesic power distribution} $\mathbf{GPD}(\mathcal{X}, p)$ as follows:
\begin{enumerate}
    \setlength\itemsep{0em}
    \item A point $\mathcal{Y}$ is sampled uniformly from $\mathbf{Gr}_{n,k}$.
    \item The Grassmann distance $\delta$ between $\mathcal{X}$ and $\mathcal{Y}$ is computed.
    \item A new point $\mathcal{Y}^\prime$ is computed as $\mathcal{Y}^\prime=\exp_\mathcal{X}\left(\left(\frac{\delta}{\delta_{\max}}\right)^p\log_\mathcal{X}\mathcal{Y}\right)$, where $\delta_{\max}$ is the largest possible Grassmann distance between two points on $\mathbf{Gr}_{n,k}$.\footnote{$\delta_{\max}=\frac{\pi}{2}\sqrt{\max\{k,n-k\}}$} 
\end{enumerate}

Whenever we sample from $\mathbf{GPD}(\mathcal{X},p)$, we assume that the sample takes the form of a matrix $X\in\mathbb{R}^{n\times k}$ such that $\mathcal{X}$ is the column space of $X$. Increasing the scaling exponent $p$ reduces the entropy of the distribution by concentrating probability around $\mathcal{X}$; as $p\to\infty$, $\mathbf{GPD}(\mathcal{X},p)$ approaches the Dirac delta based at $\mathcal{X}$. Note that the distribution is invariant under action by the special orthogonal group, as long as the action fixes $\mathcal{X}$, giving the distribution ``rotational symmetry'' about $\mathcal{X}$. Moreover, $\mathbf{GPD}(\mathcal{X},p)$ has unique population Fr\'echet mean $\mathcal{X}$ for all $p>1$.

A $\mathbf{GPD}$ satisfies the $L^2$-moment constraint for $p>1$; while it does not satisfy the support constraint, the probability density function $\mathbf{GPD}(\mathcal{X},p)$ is close to zero for most points $\mathcal{X}\in\mathbf{Gr}_{n,k}$ for sufficiently high $p$. For the sake of fair comparison, points are sampled from $\mathbf{GPD}\left(\mathcal{X},p\right)$ not as orthogonal projection matrices $P$, but as Stiefel matrices $X$ satisfying $P=XX^\top$. This is done by implementing the procedure of sampling from $\mathbf{GPD}\left(\mathcal{X},p\right)$ (Section~\ref{sec:geodesic_power_distribution}) in Pymanopt, which represents elements of $\mathbf{Gr}_{n,k}$ in terms of Stiefel matrices.

\subsection{Theoretical results for the Grassmann manifold}

\subsubsection{Ingesting columnspanning matrix into the \atlas representation of the Grassmannian}\label{sec:ingestion}
Say $X\in\mathbb{R}^{n\times k}$ has full rank. Thinking of $\mathbf{Gr}_{n,k}$ as the manifold of $n\times n$ orthogonal projection matrices of rank $k$, we know that the columnspace of $X$ is uniquely represented in $\mathbf{Gr}_{n,k}$ by $X\left(X^\top X\right)^{-1}X^\top$. Finding the chart to which $X$ belongs is tantamount to finding the ``centerpoint'' projection matrix to which $X\left(X^\top X\right)^{-1}X^\top$ is closest. This, in turn, is equivalent to finding the centerpoint projection matrix with which $X\left(X^\top X\right)^{-1}X^\top$ has the highest Frobenius inner product. This is accomplished by finding the $k$ largest diagonal entries of $X\left(X^\top X\right)^{-1}X^\top$, as demonstrated by the following series of deductions.
\begin{align*}
    \left\langle X\left(X^\top X\right)^{-1}X^\top,\sum_{j=1}^k\vec{e}_{i_j}\vec{e}_{i_j}^\top\right\rangle_{\text{Fr}}&=\mathbf{Tr}\left[X\left(X^\top X\right)^{-1}X^\top\sum_{j=1}^k\vec{e}_{i_j}\vec{e}_{i_j}^\top\right] \\
    &=\sum_{j=1}^k\mathbf{Tr}\left[X\left(X^\top X\right)X^\top\left(\vec{e}_{i_j}\vec{e}_{i_j}^\top\right)\right] \\
    &=\sum_{j=1}^k\left[X\left(X^\top X\right)^{-1}X^\top\right]_{i_ji_j}
\end{align*}
Note that the diagonal entries of $X\left(X^\top X\right)^{-1}X^\top$ are guaranteed to be nonnegative by the positive-semidefiniteness of $X\left(X^\top X\right)^{-1}X^\top$.

More generally, let $Q$ and $P$ be orthogonal matrices such that
\begin{equation*}
    Q\left(\begin{array}{c|c}
        I_k & \mathbf{0}_{k,n-k} \\
        \hline
        \mathbf{0}_{k,n-k} & \mathbf{0}_{n-k,n-k}
    \end{array}\right)Q^\top=P.
\end{equation*}
We ingest a full-rank matrix $X\in\mathbb{R}^{n\times k}$ into the chart centered at $P$ by the map
\begin{equation}\label{eqn:ingestion_general}
    X\mapsto X_LX_U^{-1},
\end{equation}
where $X_U,X_L$ are given by
\begin{equation*}
    X_U=\left(I_k\mid\mathbf{0}_{k,n-k}\right)Q^\top X,\hspace{2cm}X_L=\left(\mathbf{0}_{n-k,k}\mid I_{n-k}\right)Q^\top X.
\end{equation*}

\subsubsection{Distances on the Grassmann manifold} \label{app:misc_grass_comp}
\begin{claim}\label{clm:grass_dist_0}
    Let $\varphi_0$ be the Ehresmann coordinate chart map (Sec.~\ref{sec:grass_example}). For $t\in\mathbb{R}$, we define $V_t:=\varphi_0\left(t\vec{e}_{n-k}\vec{e}_k\right)=\left(\frac{I_k}{t\vec{e}_{n-k}\vec{e}_k^\top}\right)$. The Grassmann distance between $\mathbf{colproj}\left(V_t\right)$ and $\mathbf{colproj}(U)$, where $U:=\varphi_0\left(\mathbf{0}_{n-k,k}\right)=\left(\frac{I_k}{\mathbf{0}_{n-k,k}}\right)$, is equal to $\left\lvert\arctan t\right\rvert$.
    \end{claim}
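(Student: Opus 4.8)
The plan is to evaluate the geodesic distance on $\mathbf{Gr}_{n,k}$ directly through its \emph{principal angles}. Recall that if $\mathcal{X},\mathcal{Y}$ are two $k$-dimensional subspaces represented by matrices $X,Y\in\mathbb{R}^{n\times k}$ with orthonormal columns, their principal angles $0\le\theta_1\le\cdots\le\theta_k\le\frac{\pi}{2}$ satisfy $\cos\theta_i=\sigma_i$, where $\sigma_1\ge\cdots\ge\sigma_k$ are the singular values of $X^\top Y$, and the Grassmann (canonical invariant) distance is $d(\mathcal{X},\mathcal{Y})=\bigl(\sum_{i=1}^k\theta_i^2\bigr)^{1/2}$. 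So it suffices to determine these singular values for the two subspaces at hand.

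First I would write down orthonormal bases. The columns of $U=\left(\frac{I_k}{\mathbf{0}_{n-k,k}}\right)$ are already $\vec{e}_1,\ldots,\vec{e}_k$, so $U$ itself has orthonormal columns. For $V_t$, columns $1,\ldots,k-1$ are $\vec{e}_1,\ldots,\vec{e}_{k-1}$, while the $k$-th column is $\vec{e}_k+t\,\vec{e}_n$ (the single entry $t$ in $t\,\vec{e}_{n-k}\vec{e}_k^\top$ sits in the bottom block's last row, i.e.\ ambient row $n$, and column $k$). Normalizing the last column, an orthonormal basis of the range of $V_t$ is $Y=\bigl(\vec{e}_1,\ldots,\vec{e}_{k-1},(\vec{e}_k+t\,\vec{e}_n)/\sqrt{1+t^2}\bigr)$.

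Next I would compute $U^\top Y$. Since $k<n$, the vector $\vec{e}_n$ is orthogonal to each of $\vec{e}_1,\ldots,\vec{e}_k$, so $U^\top Y$ is the diagonal matrix $\mathrm{diag}\bigl(1,\ldots,1,\tfrac{1}{\sqrt{1+t^2}}\bigr)$, with $k-1$ ones. Its singular values are therefore $1$ with multiplicity $k-1$ and $\tfrac{1}{\sqrt{1+t^2}}$ once; the corresponding principal angles are $\theta_1=\cdots=\theta_{k-1}=0$ and $\theta_k=\arccos\tfrac{1}{\sqrt{1+t^2}}=\arctan\lvert t\rvert$. Substituting into the distance formula gives $d\bigl(\mathbf{colproj}(V_t),\mathbf{colproj}(U)\bigr)=\bigl((k-1)\cdot 0+(\arctan\lvert t\rvert)^2\bigr)^{1/2}=\lvert\arctan t\rvert$, as claimed.

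This computation is entirely routine, so I do not anticipate a real obstacle. The only points requiring a moment's care are (i) confirming the location of the nonzero entry of $V_t$, so that the last basis vector is $\vec{e}_k+t\,\vec{e}_n$ rather than something else; (ii) the harmless trigonometric identity $\arccos(1/\sqrt{1+t^2})=\arctan\lvert t\rvert$ together with the oddness of $\arctan$ to rewrite $\arctan\lvert t\rvert=\lvert\arctan t\rvert$; and (iii) recording the standing assumption $k<n$, which is what guarantees $\vec{e}_n\perp\mathrm{span}(\vec{e}_1,\ldots,\vec{e}_k)$ and hence the diagonality of $U^\top Y$.
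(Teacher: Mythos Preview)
Your proof is correct and follows essentially the same route as the paper: compute the principal (Jordan) angles between the two subspaces and read off the distance as the $\ell_2$-norm of the angle vector. The only cosmetic difference is that the paper applies the general formula $\cos^2\theta_i \in \mathrm{spec}\bigl((U^\top U)^{-1}U^\top V_t(V_t^\top V_t)^{-1}V_t^\top U\bigr)$ for non-orthonormal frames (citing Neretin), whereas you first orthonormalize $V_t$ and then compute the singular values of $U^\top Y$; your version avoids the matrix inverse and is marginally cleaner, but the substance is identical.
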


\begin{proof}
        The square of the Grassmann distance $\mathbf{dist}_{\mathbf{Gr}}$ between two projection matrices $P,Q\in\mathbf{Gr}_{n,k}$ is given by the sum of squares of Jordan angles between their subspaces. From Lemma 5 in \cite{neretin}, if matrices $U_P,U_Q\in\mathbb{R}^{n\times k}$ satisfy $P=\mathbf{colproj}U_P$ and $Q=\mathbf{colproj}U_P$, the squares of cosines of the Jordan angles between $P$ and $Q$ are the eigenvalues of the matrix $\left(U_P^\top U_P\right)^{-1}U_P^\top U_Q\left(U_Q^\top U_Q\right)^{-1}U_Q^\top U_P$. Therefore, the following series of deductions holds.
        \begin{align*}\mathbf{dist}_{\mathbf{Gr}}\big(\varphi_0\left((t\vec{e}_k\vec{e}_k^\top)\right),\varphi_0\left(\mathbf{0}_{n-k,k}\right)\big)&=\sqrt{\mathbf{Tr}\left[\arccos\left(\sqrt{\left(U^\top U\right)^{-1}U^\top V_t\left(V_t^\top V_t\right)^{-1}V_t^\top U}\right)^2\right]} \\     &=\sqrt{\mathbf{Tr}\left[\arccos\left(\sqrt{\left(I_k+t^2\vec{e}_k\vec{e}_k^\top\right)^{-1}}\right)^2\right]} \\
            &=\sqrt{\mathbf{Tr}\left[\arccos\left(\sqrt{I_k-\frac{t^2}{1+t^2}\vec{e}_k\vec{e}_k^\top}\right)^2\right]} \\
            &=\sqrt{\mathbf{Tr}\left[\left(\begin{array}{c|c}
                \mathbf{0}_{k-1,k-1} & \mathbf{0}_{k-1,1} \\
                \hline
                \mathbf{0}_{1,k-1} & \arccos\left(\sqrt{\frac{1}{1+t^2}}\right)
            \end{array}\right)^2\right]} \\
            &=\left\lvert\arctan t\right\rvert
        \end{align*}
    \end{proof}

\begin{claim}\label{clm:grass_dist_other}
    Let $Q:=\left(\begin{array}{c|c}
        \mathbf{0}_{n-1,1} & I_{n-1} \\
        \hline
        1 & \mathbf{0}_{1,n-1}
    \end{array}\right)$ be the permutation matrix which moves up the indices of row vectors. Further, let $U,V_t$ be as in Claim~\ref{clm:grass_dist_0}, and let $U_\varnothing=QU$. The Grassmann distance between $\mathbf{col}\left(V_t\right)$ and $\mathbf{col}U_\varnothing$ is equal to $\lvert\arccot t\rvert$.
\end{claim}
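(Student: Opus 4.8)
The plan is to reproduce the template of Claim~\ref{clm:grass_dist_0} verbatim, invoking Lemma~5 of~\cite{neretin}: if $U_P, U_Q \in \mathbb{R}^{n\times k}$ satisfy $\mathbf{colproj}(U_P)=P$ and $\mathbf{colproj}(U_Q)=Q$, then the squared cosines of the Jordan angles between $P$ and $Q$ are the eigenvalues of $\left(U_P^\top U_P\right)^{-1}U_P^\top U_Q\left(U_Q^\top U_Q\right)^{-1}U_Q^\top U_P$, and $\mathbf{dist}_{\mathbf{Gr}}$ is the root-sum-of-squares of the angles themselves. First I would take $U_P = U_\varnothing = QU$ and $U_Q = V_t$ as the spanning matrices. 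Since $Q$ is orthogonal and $U$ has orthonormal columns, $U_\varnothing^\top U_\varnothing = I_k$, while a direct computation gives $V_t^\top V_t = I_k + t^2 \vec{e}_k\vec{e}_k^\top$, whose inverse is $I_k - \frac{t^2}{1+t^2}\vec{e}_k\vec{e}_k^\top$.

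The one genuinely new computation is the cross term $U_\varnothing^\top V_t$. Here I would use that $Q$ shifts row indices cyclically upward, sending $\vec{e}_1 \mapsto \vec{e}_n$ and $\vec{e}_j \mapsto \vec{e}_{j-1}$ for $j \ge 2$, so that $\mathbf{col}(U_\varnothing) = \mathrm{span}(\vec{e}_1, \ldots, \vec{e}_{k-1}, \vec{e}_n)$ whereas $\mathbf{col}(V_t) = \mathrm{span}(\vec{e}_1, \ldots, \vec{e}_{k-1}, \vec{e}_k + t\vec{e}_n)$. Reading off the inner products of the corresponding columns yields a sparse matrix with a unit subdiagonal block and a single entry $t$ in its corner. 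Substituting into the product of Lemma~5, and using the diagonal forms of $U_\varnothing^\top U_\varnothing$ and $(V_t^\top V_t)^{-1}$, collapses the whole expression to the diagonal matrix $\mathrm{diag}\!\left(\frac{t^2}{1+t^2}, 1, \ldots, 1\right)$.

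Finally, I would apply the scalar functions exactly as in Claim~\ref{clm:grass_dist_0}: the matrix square root, then $\arccos$, then squaring and taking the trace, leaves only the contribution $\left(\arccos\frac{|t|}{\sqrt{1+t^2}}\right)^2$ from the single nontrivial eigenvalue, the other $k-1$ eigenvalues being $1$ and hence contributing angle $0$. Taking the outer square root gives $\mathbf{dist}_{\mathbf{Gr}} = \arccos\frac{|t|}{\sqrt{1+t^2}} = |\arccot t|$, since the angle $\theta$ with $\cos\theta = \frac{|t|}{\sqrt{1+t^2}}$ satisfies $\cot\theta = |t|$.

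The main obstacle I anticipate is purely bookkeeping: correctly tracking how the cyclic permutation $Q$ relabels coordinates when forming $U_\varnothing^\top V_t$, where an off-by-one or transpose slip would corrupt the cross term. As an independent sanity check, and arguably a cleaner route, I would note geometrically that both subspaces contain the common $(k-1)$-dimensional block $\mathrm{span}(\vec{e}_1, \ldots, \vec{e}_{k-1})$, which is orthogonal to each of the two remaining directions; this forces $k-1$ of the Jordan angles to vanish and reduces the problem to the single angle between the unit vectors $\frac{1}{\sqrt{1+t^2}}(\vec{e}_k + t\vec{e}_n)$ and $\vec{e}_n$ in the orthogonal complement. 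Their inner product is $\frac{t}{\sqrt{1+t^2}}$, which immediately recovers $\arccos\frac{|t|}{\sqrt{1+t^2}} = |\arccot t|$.
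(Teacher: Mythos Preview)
Your proposal is correct and follows essentially the same approach as the paper: both invoke Neretin's Lemma~5, compute the identical product $(U_\varnothing^\top U_\varnothing)^{-1}U_\varnothing^\top V_t(V_t^\top V_t)^{-1}V_t^\top U_\varnothing$, obtain the same block form for the cross term $U_\varnothing^\top V_t$, and collapse it to $\mathrm{diag}\!\left(\tfrac{t^2}{1+t^2},1,\ldots,1\right)$ before applying $\arccos$ of the square root. Your additional geometric sanity check---splitting off the common $(k-1)$-dimensional span $\mathrm{span}(\vec{e}_1,\ldots,\vec{e}_{k-1})$ and reducing to the angle between $\vec{e}_n$ and $\tfrac{1}{\sqrt{1+t^2}}(\vec{e}_k+t\vec{e}_n)$---is not in the paper and is a cleaner alternative derivation, but the primary route you outline is the paper's own.
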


\begingroup
\allowdisplaybreaks
\begin{proof}
        Following the discussion of Jordan angles in \cite{neretin}, we get:\\
        
        \begin{align*}\mathbf{dist}_{\mathbf{Gr}}\left(\mathbf{col}V_t,\mathbf{col}U_\varnothing\right)&=\left(\mathbf{Tr}\left[\arccos\left(\left[\left(U_\varnothing^\top U_\varnothing\right)^{-1}U_\varnothing^\top V_t\left(V_t^\top V_t\right)^{-1}V_t^\top U_\varnothing\right]^{1/2}\right)^2\right]\right)^{1/2} \\
            &=\Bigg(\mathbf{Tr}\Bigg[\arccos\bigg(\bigg[\left(I_k\mid\mathbf{0}_{k,n-k}\right)Q^\top\left(\frac{I_k}{t\vec{e}_{n-k}\vec{e}_k^\top}\right)\left(I_k-\frac{t^2}{1+t^2}\vec{e}_k\vec{e}_k^\top\right) \\
            &\hspace{1cm}\cdot\left(I_k\mid t\vec{e}_k\vec{e}_{n-k}^\top\right)Q\left(\frac{I_k}{\mathbf{0}_{n-k,k}}\right)\bigg]^{1/2}\bigg)^2\Bigg]\Bigg)^{1/2} \\
            &=\Bigg(\mathbf{Tr}\Bigg[\arccos\bigg(\bigg[\left(\begin{array}{c|c}
                \mathbf{0}_{1,k-1} & t \\
                \hline
                I_{k-1} & \mathbf{0}_{k-1,1}
            \end{array}\right)\left(I_k-\frac{t^2}{1+t^2}\vec{e}_k\vec{e}_k^\top\right) \\
            &\hspace{1cm}\cdot\left(\begin{array}{c|c}
                \mathbf{0}_{k-1,1} & I_{k-1} \\
                \hline
                t & \mathbf{0}_{1,k-1}
            \end{array}\right)\bigg]^{1/2}\bigg)^2\Bigg]\Bigg)^{1/2} \\
            &=\left(\mathbf{Tr}\left[\arccos\left(\left[\left(\begin{array}{c|c}
                t^2 & \mathbf{0}_{1,k-1} \\
                \hline
                \mathbf{0}_{k-1,1} & I_{k-1}
            \end{array}\right)-\frac{t^4}{1+t^2}\vec{e}_1\vec{e}_1^\top\right]^{1/2}\right)^2\right]\right)^{1/2} \\
            &=\left(\mathbf{Tr}\left[\arccos\left(\left(\begin{array}{c|c}
                \frac{t^2}{1+t^2} & \mathbf{0}_{1,k-1} \\
                \hline
                \mathbf{0}_{k-1,1} & I_{k-1}
            \end{array}\right)^{1/2}\right)^2\right]\right)^{1/2} \\
            &=\left(\mathbf{Tr}\left[\left(\begin{array}{c|c}
                \arccos\left(\sqrt{\frac{t^2}{1+t^2}}\right) & \mathbf{0}_{1,k-1} \\
                \hline
                \mathbf{0}_{k-1,1} & \mathbf{0}_{k-1,k-1}
            \end{array}\right)^2\right]\right)^{1/2} \\
            &=\lvert\arccot t\rvert
        \end{align*}
    \end{proof}
\endgroup

\begin{proof}(of Claim~\ref{clm:Q_A}) 
\label{prf:Q_A}
        We first show that $Q_A$ is orthogonal. Observe that $Q_AQ_A^\top$ is equal to
        \begin{align*}
            \left(\begin{array}{c|c}
                \left(I_k+A^\top A\right)^{-1}+A^\top\left(I_{n-k}+AA^\top\right)^{-1}A & \left(I_k+A^\top A\right)^{-1}A^\top-A^\top\left(I_{n-k}+AA^\top\right)^{-1} \\
                \hline
                A\left(I_k+A^\top A\right)^{-1}-\left(I_{n-k}+AA^\top\right)^{-1}A & A\left(I_k+A^\top A\right)^{-1}A^\top+\left(I_{n-k}+AA^\top\right)^{-1}
            \end{array}\right).
        \end{align*}
        To show $Q_A$ is orthogonal, then, it suffices to show that:
        \begin{enumerate}[label=\arabic*)]
            \setlength\itemsep{0em}
            \item $\left(I_k+A^\top A\right)^{-1}A^\top-A^\top\left(I_{n-k}+AA^\top\right)^{-1}=\mathbf{0}_{k,n-k}$;
            \item $\left(I_k+A^\top A\right)^{-1}+A^\top\left(I_{n-k}+AA^\top\right)^{-1}A=I_k$; and
            \item $A\left(I_k+A^\top A\right)^{-1}A^\top+\left(I_{n-k}+AA^\top\right)^{-1}=I_{n-k}$.
        \end{enumerate}
        These are shown in Lemmas, \ref{lem:Q_A_lem_2}, \ref{lem:Q_A_lem_1}, and \ref{lem:Q_A_lem_3}, respectively. \\
        
        It remains to show that $Q_A\left(\begin{array}{c|c}
            I_k & \mathbf{0}_{k,n-k} \\
            \hline
            \mathbf{0}_{n-k,k} & \mathbf{0}_{n-k,n-k}
        \end{array}\right)Q_A^\top=\tilde{\varphi}_0(A)$. By definition of $\varphi_0$, 
        \begin{equation*}
            \varphi_0(A)=\left(\begin{array}{c|c}
                \left(I+A^\top A\right)^{-1} & \left(I+A^\top A\right)^{-1}A^\top \\
                \hline
                A\left(I+A^\top A\right)^{-1} & A\left(I+A^\top A\right)^{-1}A^\top
            \end{array}\right),
        \end{equation*}
        and so completing the proof is a straightforward computation.
\end{proof}

\begin{lemma}\label{lem:Q_A_lem_2}
    \begin{equation*}
        \left(I_k+A^\top A\right)^{-1}A^\top-A^\top\left(I_{n-k}+AA^\top\right)^{-1}=\mathbf{0}_{k,n-k}
    \end{equation*}
    \end{lemma}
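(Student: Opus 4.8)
The plan is to prove the equivalent identity
\[
(I_k+A^\top A)^{-1}A^\top = A^\top(I_{n-k}+AA^\top)^{-1},
\]
from which the lemma follows immediately by moving one term to the other side. Before manipulating inverses, I would first record that both $I_k+A^\top A$ and $I_{n-k}+AA^\top$ are invertible: the matrices $A^\top A$ and $AA^\top$ are positive semidefinite, so adding the identity produces positive definite (hence invertible) matrices of sizes $k\times k$ and $(n-k)\times(n-k)$, respectively. This is what justifies the appearance of both inverses.

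The crux of the argument is the elementary \emph{push-through} identity
\[
A^\top(I_{n-k}+AA^\top) = (I_k+A^\top A)A^\top,
\]
which holds because both sides expand to $A^\top + A^\top A A^\top$ (note that $A^\top$ has shape $k\times(n-k)$, so both products are well defined and land in $\mathbb{R}^{k\times(n-k)}$). Left-multiplying this equation by $(I_k+A^\top A)^{-1}$ and right-multiplying by $(I_{n-k}+AA^\top)^{-1}$ then cancels the two factors on opposite sides and yields exactly the displayed equivalent identity; subtracting the right-hand side from the left gives $\mathbf{0}_{k,n-k}$, as claimed.

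There is no genuine obstacle here: the entire lemma reduces to the commutation relation $A^\top(I+AA^\top)=(I+A^\top A)A^\top$, and the only points requiring any care are bookkeeping ones, namely confirming the dimensional compatibility of the blocks and the invertibility of the two symmetric factors so that the cancellation is legitimate. I would therefore keep the write-up short, stating the push-through identity, noting invertibility in one line, and performing the two multiplications.
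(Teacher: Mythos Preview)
Your proof is correct. The paper takes a different route: it writes $(I_{n-k}+AA^\top)^{-1}$ as the Neumann series $\sum_{j\ge 0}(-1)^j(AA^\top)^j$, pulls $A^\top$ through the sum termwise to obtain $\sum_{j\ge 0}(-1)^j(A^\top A)^jA^\top$, and re-sums to $(I_k+A^\top A)^{-1}A^\top$. Your push-through argument $A^\top(I_{n-k}+AA^\top)=(I_k+A^\top A)A^\top$ followed by left- and right-multiplication by the inverses is both more elementary and more robust: it requires only the invertibility you explicitly verify, whereas the Neumann series only converges when $\|AA^\top\|<1$, so the paper's computation is, strictly speaking, a formal one that would need an extra word (e.g., density or polynomial identity) to cover general $A$. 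Your version sidesteps that issue entirely.
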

    \begin{proof}
        Using a Neumann series representation of the matrix inverse, e.g.,~\cite{stewart}, the lemma is proved by the following series of deductions.
        \begin{align*}
            A^\top\left(I_{n-k}+AA^\top\right)^{-1}&=A^\top\sum_{j=0}^\infty(-1)^j\left(AA^\top\right)^j \\
            &=\left(\sum_{j=0}^\infty(-1)^j\left(A^\top A\right)^j\right)A^\top \\
            &=\left(I_k+A^\top A\right)^{-1}A^\top
        \end{align*}
    \end{proof}

\begin{lemma}\label{lem:Q_A_lem_1}
    \begin{equation*}
        \left(I_k+A^\top A\right)^{-1}+A^\top\left(I_{n-k}+AA^\top\right)^{-1}A=I_k
    \end{equation*}
    \end{lemma}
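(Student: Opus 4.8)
The plan is to piggyback on Lemma~\ref{lem:Q_A_lem_2}, which has just been established and already contains the essential algebraic content. That lemma gives the commutation identity $\left(I_k+A^\top A\right)^{-1}A^\top=A^\top\left(I_{n-k}+AA^\top\right)^{-1}$. Right-multiplying both sides of this identity by $A$ yields
\begin{equation*}
A^\top\left(I_{n-k}+AA^\top\right)^{-1}A=\left(I_k+A^\top A\right)^{-1}A^\top A,
\end{equation*}
which rewrites the troublesome middle term of the claim entirely in terms of $\left(I_k+A^\top A\right)^{-1}$.

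With this substitution in hand, the second step is a one-line factorization. The left-hand side of the lemma becomes
\begin{equation*}
\left(I_k+A^\top A\right)^{-1}+\left(I_k+A^\top A\right)^{-1}A^\top A=\left(I_k+A^\top A\right)^{-1}\left(I_k+A^\top A\right)=I_k,
\end{equation*}
completing the argument. This route has the advantage of using no infinite series directly, so no convergence hypothesis on the spectral radius of $A^\top A$ is needed: the analytic subtleties are all quarantined inside the proof of Lemma~\ref{lem:Q_A_lem_2}.

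If one instead prefers a self-contained argument that mirrors the Neumann-series style of the preceding lemma, the alternative is to expand both inverses as $\left(I_k+A^\top A\right)^{-1}=\sum_{j\ge 0}(-1)^j\left(A^\top A\right)^j$ and $\left(I_{n-k}+AA^\top\right)^{-1}=\sum_{j\ge 0}(-1)^j\left(AA^\top\right)^j$, use $A^\top\left(AA^\top\right)^j=\left(A^\top A\right)^jA^\top$ to collapse the second summand to $\sum_{j\ge 0}(-1)^j\left(A^\top A\right)^{j+1}$, and observe that the two series telescope against each other to leave only the $j=0$ term $I_k$. The only care required there is the convergence caveat noted above, which can be removed afterward by analytic continuation in the entries of $A$ since both sides are rational functions of $A$.

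The main obstacle is essentially nonexistent once Lemma~\ref{lem:Q_A_lem_2} is granted; the only thing to watch is that the commutation identity is applied in the correct order (right-multiplication by $A$, not left), so that the factor $\left(I_k+A^\top A\right)^{-1}$ sits on the left and cancels cleanly against $\left(I_k+A^\top A\right)$. Lemma~\ref{lem:Q_A_lem_3} will then follow by the symmetric computation with the roles of $A$ and $A^\top$ (equivalently, $k$ and $n-k$) interchanged.
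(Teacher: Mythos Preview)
Your proposal is correct and matches the paper's proof essentially line for line: both invoke Lemma~\ref{lem:Q_A_lem_2} to rewrite $A^\top\left(I_{n-k}+AA^\top\right)^{-1}A$ as $\left(I_k+A^\top A\right)^{-1}A^\top A$, then factor out $\left(I_k+A^\top A\right)^{-1}$ and cancel. Your additional Neumann-series alternative and the remark about quarantining convergence issues inside Lemma~\ref{lem:Q_A_lem_2} are nice observations, but the core argument is the same as the paper's.
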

    \begin{proof}
        By Lemma \ref{lem:Q_A_lem_2}, the following series of deductions holds.
        \begin{align*}
            \left(I_k+A^\top A\right)^{-1}+A^\top\left(I_{n-k}+AA^\top\right)^{-1}A&=\left(I_k+A^\top A\right)^{-1}+\left(I_k+A^\top A\right)^{-1}A^\top A \\
            &=\left(I_k+A^\top A\right)\left(I_k+A^\top A\right)^{-1} \\
            &=I_k
        \end{align*}
    \end{proof}

\begin{lemma}\label{lem:Q_A_lem_3}
    \begin{equation*}
        A\left(I_k+A^\top A\right)^{-1}A^\top+\left(I_{n-k}+AA^\top\right)^{-1}=I_{n-k}
    \end{equation*}
    \begin{proof}
        A proof for this Lemma is easily recreated from the method used to prove Lemma \ref{lem:Q_A_lem_1}.
    \end{proof}
\end{lemma}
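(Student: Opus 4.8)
The plan is to mirror the argument used for Lemma~\ref{lem:Q_A_lem_1}, now exploiting the symmetry of the Gram-type matrices $I_k + A^\top A$ and $I_{n-k} + AA^\top$. The key preliminary step is to obtain the ``transposed'' form of Lemma~\ref{lem:Q_A_lem_2}. Since both matrices being inverted are symmetric, taking the transpose of the identity
\[
\left(I_k+A^\top A\right)^{-1}A^\top = A^\top\left(I_{n-k}+AA^\top\right)^{-1}
\]
immediately yields the companion identity
\[
A\left(I_k+A^\top A\right)^{-1} = \left(I_{n-k}+AA^\top\right)^{-1}A.
\]

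With this companion identity in hand, I would right-multiply both sides by $A^\top$ to get $A\left(I_k+A^\top A\right)^{-1}A^\top = \left(I_{n-k}+AA^\top\right)^{-1}AA^\top$, which is precisely the first summand of the claimed equation rewritten with a common left factor. Substituting this into the left-hand side of the lemma and factoring out $\left(I_{n-k}+AA^\top\right)^{-1}$ then gives
\[
\left(I_{n-k}+AA^\top\right)^{-1}AA^\top + \left(I_{n-k}+AA^\top\right)^{-1} = \left(I_{n-k}+AA^\top\right)^{-1}\left(AA^\top + I_{n-k}\right) = I_{n-k},
\]
which completes the argument.

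There is no genuine obstacle here; the only point requiring a moment's care is justifying the transposition, i.e., confirming that $I_k + A^\top A$ and $I_{n-k} + AA^\top$ are symmetric so that their inverses are symmetric as well. This is immediate from $(A^\top A)^\top = A^\top A$ and $(AA^\top)^\top = AA^\top$. Alternatively, one could bypass the transposition entirely and derive the companion identity directly by repeating the Neumann-series manipulation of Lemma~\ref{lem:Q_A_lem_2} with the roles of $A$ and $A^\top$ interchanged, which is the ``same method'' the author alludes to in deferring to the proof of Lemma~\ref{lem:Q_A_lem_1}.
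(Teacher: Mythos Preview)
Your proposal is correct and follows essentially the same route the paper intends: obtain the companion identity $A(I_k+A^\top A)^{-1}=(I_{n-k}+AA^\top)^{-1}A$ (either by transposing Lemma~\ref{lem:Q_A_lem_2} or by rerunning its Neumann-series argument with $A$ and $A^\top$ swapped), then factor exactly as in the proof of Lemma~\ref{lem:Q_A_lem_1}.
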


\begin{lemma}\label{len:A_commutes_with_root}
    $$\begin{array}{c}
        A\sqrt{I_k+A^\top A}=\sqrt{I_{n-k}+AA^\top}A, \\
        A^\top\sqrt{I_{n-k}+AA^\top}=\sqrt{I_k+A^\top A}A^\top
    \end{array}$$
    \begin{proof}
        This proof relies on a Neumann series representation of the square root of a matrix, e.g.,~\cite{stewart}.
        \begin{align*}
            A\sqrt{I_k+A^\top A}&=A\left(I_k-\sum_{j=1}^\infty\left\lvert\binom{1/2}{j}\right\rvert\left(I_k-I_k-A^\top A\right)^j\right) \\
            &=A\left(I_k-\sum_{j=1}^\infty\left\lvert\binom{1/2}{j}\right\rvert\left(-A^\top A\right)^j\right) \\
            &=A-\sum_{j=1}^\infty\left\lvert\binom{1/2}{j}\right\rvert A\left(-A^\top A\right)^j \\
            &=A-\sum_{j=1}^\infty\left\lvert\binom{1/2}{j}\right\rvert\left(-AA^\top\right)^jA \\
            &=\left(I_{n-k}-\sum_{j=1}^\infty\left\lvert\binom{1/2}{j}\right\rvert\left(-AA^\top\right)^j\right)A \\
            &=\sqrt{I_{n-k}+AA^\top}A
        \end{align*}
        The remainder of the claim is proved by replacing $I_k$ with $I_n$ and $A$ with $A^\top$.
    \end{proof}
\end{lemma}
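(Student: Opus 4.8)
The plan is to prove the first identity and obtain the second for free by transposition, reducing the matrix statement to a trivial scalar identity on the singular values of $A$ via the singular value decomposition (SVD). This route has the advantage of requiring no norm bound on $A$, and so sidesteps the convergence hypothesis implicit in the binomial/Neumann-series expansion of the square root (the series $\sqrt{I+M}=\sum_j\binom{1/2}{j}M^j$ converges only when $\|M\|<1$, whereas $A^\top A$ may have arbitrarily large eigenvalues). Write $A = U\Sigma V^\top$ with $U\in\mathbb{R}^{(n-k)\times(n-k)}$ and $V\in\mathbb{R}^{k\times k}$ orthogonal and $\Sigma\in\mathbb{R}^{(n-k)\times k}$ the rectangular matrix of singular values $\sigma_i\geq 0$.

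First I would diagonalize the two symmetric matrices under the roots. Since $A^\top A = V(\Sigma^\top\Sigma)V^\top$ and $AA^\top = U(\Sigma\Sigma^\top)U^\top$, and the matrix square root commutes with orthogonal conjugation, we have $\sqrt{I_k+A^\top A}=V\sqrt{I_k+\Sigma^\top\Sigma}\,V^\top$ and $\sqrt{I_{n-k}+AA^\top}=U\sqrt{I_{n-k}+\Sigma\Sigma^\top}\,U^\top$, where the inner roots act entrywise on the rectangular-diagonal matrices $\Sigma^\top\Sigma$ and $\Sigma\Sigma^\top$. Substituting into the first identity, the inner $V^\top V$ and $U^\top U$ collapse to identities, and after stripping the outer factors $U$ (on the left) and $V^\top$ (on the right) the whole claim reduces to the single identity
\begin{equation*}
\Sigma\sqrt{I_k+\Sigma^\top\Sigma} = \sqrt{I_{n-k}+\Sigma\Sigma^\top}\,\Sigma
\end{equation*}
between $(n-k)\times k$ matrices. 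Both sides are rectangular-diagonal with $(i,i)$ entry $\sigma_i\sqrt{1+\sigma_i^2}$, so the identity is immediate. The second identity then follows by transposing the first, using $(\sqrt{M})^\top=\sqrt{M}$ for symmetric positive-definite $M$ together with $(XY)^\top=Y^\top X^\top$.

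The only genuine hazard is the rectangular bookkeeping: when $n-k\neq k$, the matrices $\Sigma^\top\Sigma$ and $\Sigma\Sigma^\top$ live in different dimensions and $\Sigma$ carries zero-padded rows or columns, so one must confirm that multiplying the diagonal factor $\sqrt{1+\sigma_i^2}$ on the left versus the right of the non-square $\Sigma$ produces the same matrix. This is the step I would verify most carefully, but it amounts only to the entrywise observation that $\sigma_i\sqrt{1+\sigma_i^2}=\sqrt{1+\sigma_i^2}\,\sigma_i$ and that the padded zero entries vanish identically on both sides. If one preferred a series-free argument staying in the original coordinates, an equivalent path is to first establish the polynomial commutation $A\,(A^\top A)^j=(AA^\top)^j\,A$ by a one-line induction and then invoke it for the function $t\mapsto\sqrt{1+t}$; the SVD argument above is essentially the coordinate-free version of that idea.
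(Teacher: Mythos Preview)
Your proof is correct and takes a genuinely different route from the paper. The paper expands $\sqrt{I_k+A^\top A}$ as a binomial (Neumann) series and then pushes $A$ through each term using the polynomial commutation $A(A^\top A)^j=(AA^\top)^jA$. Your SVD argument instead diagonalizes both sides simultaneously and reduces the claim to the scalar identity $\sigma_i\sqrt{1+\sigma_i^2}=\sqrt{1+\sigma_i^2}\,\sigma_i$ on the singular values. The main advantage of your approach, which you already identify, is that it needs no spectral bound on $A$: the binomial series $\sqrt{I+M}=\sum_j\binom{1/2}{j}M^j$ converges only for $\|M\|<1$, a hypothesis the paper's proof silently assumes but which is not guaranteed for $M=A^\top A$ in the Grassmannian setting. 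Your rectangular bookkeeping is also fine: the padded rows or columns of $\Sigma$ are annihilated on both sides, and the surviving diagonal entries match. The alternative you sketch at the end (induction on $A(A^\top A)^j=(AA^\top)^jA$ followed by functional calculus for $t\mapsto\sqrt{1+t}$) is essentially the paper's argument done rigorously, replacing the divergent series by the continuous functional calculus for self-adjoint matrices.
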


\begin{claim}\label{clm:grass_exp_approx}
    Let $\mathfrak{g}$ be the Riemannian metric on $\mathbf{Gr}_{n,k}$ inherited from the Euclidean metric in $\mathbb{R}^{n \times n}$. The retraction on \atlasgrass objects approximats geodesics $\gamma:[0, 1] \to \mathbf{Gr}_{n,k}$ with initial conditions $\gamma(0) = \varphi(\vec{0}), \dot{\gamma}(0) = \vec{\tau}$ with error of order $O\left(\left\lVert \vec{\tau} \right\rVert_{\mathfrak{g}}^3 \right)$.
    \begin{proof}
    
        From Claim~\ref{clm:grass_dist_0}, we know that for charts $\varphi: \mathcal{V} \to \mathcal{U}$ in \atlasgrass, lengths of paths constrained to coordinate axes in $\mathcal{V}$ are preserved by the automorphism
        \begin{align*}
            \alpha: \mathcal{V} &\to \mathcal{V} \\
            \vec{\xi} &\mapsto A \vec{\xi},
        \end{align*}
        where $A$ is a diagonal matrix whose nonzero entries are in $\{-1, 1\}$. For this reason, the partial derivatives $g_{\mu \mu, \nu}$ are equal to their additive inverses at $\vec{\xi} = \vec{0}$, meaning they must be zero. By the parallelogram law, the partial derivatives $g_{\mu \nu, \lambda}$ must also vanish at $\vec{\xi} = \vec{0}$. Because these partial derivatives all vanish, the Christoffel symbols $\Gamma^\lambda_{\mu \nu}$ must also vanish at $\vec{\xi} = \vec{0}$. 
        
        We can exploit this fact by considering the third-order Taylor expansion of the geodesic $\gamma$:
        \begin{equation}\label{eq:ret_taylor_approx}
            \gamma(t) = \gamma(0) + t \dot{\gamma}(0) + \frac{t^2}{2} \ddot{\gamma}(0) + \frac{t^3}{6} \dddot{\gamma}(0) + O(t^4).
        \end{equation}
        The constant and first-order terms are given by the initial conditions $\gamma(0) = \varphi(\vec{0})$ and $\dot{\gamma}(0) = \vec{\tau}$. The second-order term is given by the geodesic equation
        \begin{equation}
            \ddot{\gamma}^i + \Gamma^i_{jk} \dot{\gamma}^j \dot{\gamma}^k = 0.
        \end{equation}
        As we know that $\Gamma^i_{jk}$ vanishes at $t = 0$, and so $\ddot{\gamma}(0) = \vec{0}$.
        So, the approximation error of the \atlasgrass retraction at $\vec{\xi} = \vec{0}$ must have $O\left(\left\lVert \vec{\tau} \right\rVert_{\mathfrak{g}}^3 \right)$ approximation error.
    \end{proof}
\end{claim}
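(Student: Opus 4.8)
The plan is to upgrade the generic first-order guarantee of a retraction to a third-order one by matching the \emph{second}-order (acceleration) term of the geodesic at the chart center. By the defining properties of a retraction (Equation~\ref{eq:retraction}), the curves $t \mapsto R_{\varphi(\vec{0})}(t\vec{\tau})$ and $t \mapsto \gamma(t)$ already agree to first order at $t=0$. Writing both in the chart coordinates of $\mathcal{V}$, the quasi-Euclidean update is the \emph{straight line} $c(t) = t\,[D\varphi(\vec{0})]^{-1}\vec{\tau}$, whose chart-coordinate acceleration $\ddot{c}(t)$ vanishes identically. The geodesic $\hat{\gamma} = \varphi^{-1}\circ\gamma$, on the other hand, obeys $\ddot{\hat{\gamma}}^i(0) = -\Gamma^i_{jk}(\vec{0})\,\dot{\hat{\gamma}}^j(0)\dot{\hat{\gamma}}^k(0)$. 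Thus the two curves agree to second order at the center precisely when the Christoffel symbols $\Gamma^i_{jk}(\vec{0})$ vanish, and the whole claim reduces to establishing this vanishing.

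To show $\Gamma^i_{jk}(\vec{0}) = 0$, I would exploit the symmetry of the Ehresmann chart about its center. The key observation is that conjugation by the orthogonal matrix $\mathrm{diag}(I_k, -I_{n-k}) \in \mathbf{O}(n)$ is an isometry of $\mathbf{Gr}_{n,k}$ that fixes the center and, in the chart coordinates $A \in \mathbb{R}^{(n-k)\times k}$ induced by Equation~\ref{eqn:qa_action}, acts as the geodesic symmetry $A \mapsto -A$. Since $\mathbf{O}(n)$ acts by isometries of the inherited metric, invariance of $\mathfrak{g}$ under this map forces each metric component $g_{ab}$ to be an \emph{even} function of $\vec{\xi}$; hence every first-order partial derivative $g_{ab,c}$ vanishes at $\vec{\xi} = \vec{0}$, and with it every Christoffel symbol. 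Alternatively, following the structure sketched in the statement's proof, one can use the single-coordinate sign reflections validated by Claim~\ref{clm:grass_dist_0} to kill the diagonal derivatives $g_{\mu\mu,\nu}(\vec{0})$ and then recover the off-diagonal derivatives by polarization; the even-function argument is cleaner because it disposes of all components simultaneously.

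With $\Gamma(\vec{0}) = 0$ in hand, I would assemble the estimate through the third-order Taylor expansion of $\hat{\gamma}$ at $t=0$: the constant and linear terms match $c(t)$ by construction, the quadratic term is $\frac{t^2}{2}\ddot{\hat{\gamma}}(0) = \vec{0}$, and the first surviving discrepancy is cubic, so $\|c(t) - \hat{\gamma}(t)\| = O(t^3)$ in chart coordinates. Because $D\varphi(\vec{0})$ is a fixed linear isomorphism, the chart-coordinate norm and $\|\cdot\|_{\mathfrak{g}}$ differ only by constants absorbed into the big-$O$; rescaling so that the step size is carried by $\vec{\tau}$ and mapping back through the smooth chart $\varphi$ yields the stated $O(\|\vec{\tau}\|_{\mathfrak{g}}^3)$ error.

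The main obstacle I anticipate is the rigorous justification that the relevant reflection and geodesic-symmetry maps are genuine isometries of the \emph{inherited} Riemannian metric, rather than merely distance-to-center-preserving maps as directly furnished by Claim~\ref{clm:grass_dist_0}. Committing to the conjugation-by-$\mathrm{diag}(I_k, -I_{n-k})$ description makes this transparent and simultaneously settles the off-diagonal metric derivatives that the polarization step would otherwise have to handle separately.
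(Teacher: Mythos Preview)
Your proposal is correct and follows the same overall architecture as the paper: establish that the Christoffel symbols vanish at the chart center via a reflection-symmetry argument, then read off the $O(\|\vec\tau\|_{\mathfrak g}^3)$ bound from the Taylor expansion of the geodesic in chart coordinates.

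The one substantive difference is in how the vanishing of $\partial_c g_{ab}(\vec 0)$ is obtained. The paper uses the single-coordinate sign reflections furnished by Claim~\ref{clm:grass_dist_0} to kill the diagonal derivatives $g_{\mu\mu,\nu}(\vec 0)$ and then invokes the parallelogram law to extend this to the off-diagonal $g_{\mu\nu,\lambda}(\vec 0)$. Your route---conjugation by $\mathrm{diag}(I_k,-I_{n-k})\in\mathbf O(n)$, which in the Ehresmann chart acts as $A\mapsto -A$---is cleaner: it is manifestly an isometry of the inherited metric (conjugation by an orthogonal matrix preserves the Frobenius form on $\mathbb R^{n\times n}$), and it forces every $g_{ab}$ to be an even function of $\vec\xi$ in one stroke, so no separate polarization step is needed. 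This also addresses the point you flag at the end: Claim~\ref{clm:grass_dist_0} by itself only controls distances to the center along coordinate axes, whereas your conjugation argument directly certifies that the reflection is a Riemannian isometry.
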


\section{\atlas methods applied to the $k_0$ Klein bottle} \label{app:klein}
\begin{figure}[h]
    \centering
    \centerline{\noindent}
    \begin{overpic}[width=0.29\columnwidth, trim=48 24 34 54, clip]{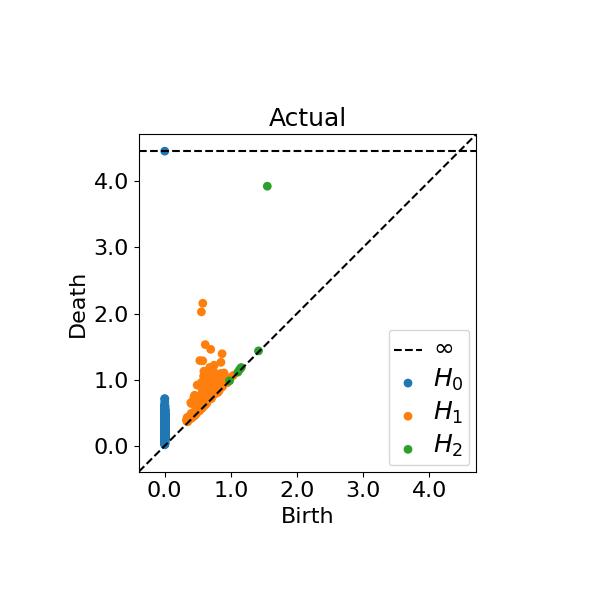}\put(5,95){\textbf{A}}\end{overpic}
    \begin{overpic}[width=0.29\columnwidth, trim=48 24 34 54, clip]{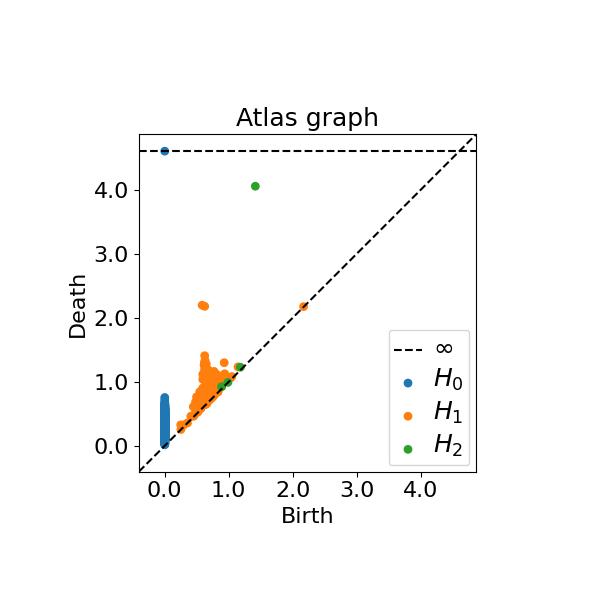}\put(5,95){\textbf{B}}\end{overpic}
    \begin{overpic}[width=0.3\columnwidth, trim=20 50 100 45, clip]{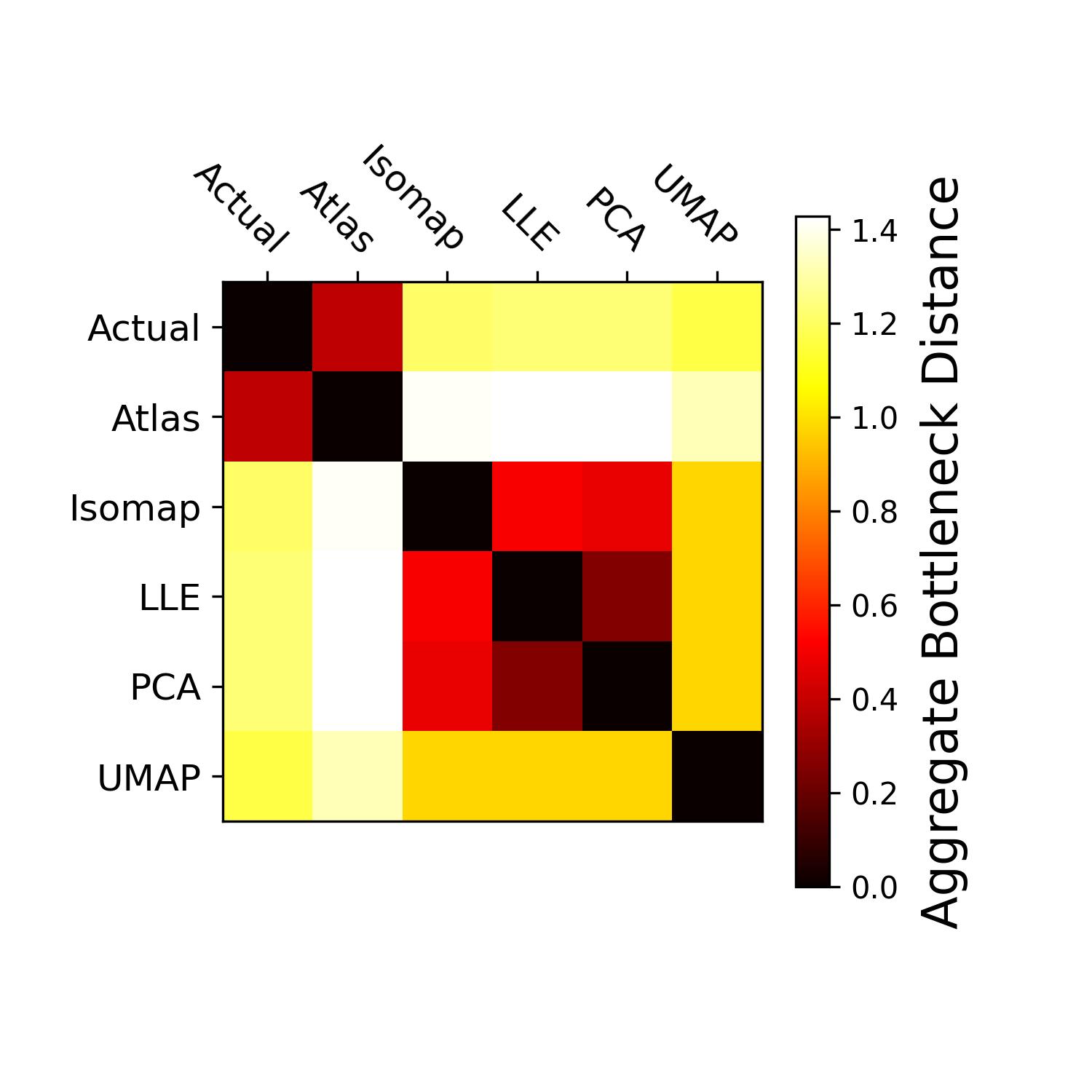}\put(5,95){\textbf{C}}\end{overpic}
    \includegraphics[width=0.1\columnwidth, trim=260 0 0 0, clip]{graphics/klein/bottleneck_heatmap.jpg}
    
    \caption{\textbf{Our \atlas representation of the high-contrast image patches manifold preserves its H1 and H2 homology groups.} \textbf{A)} Representations of the H0, H1, and H2 homology groups of the $k_0$ Klein bottle computed using Vietoris-Rips persistent homology on 640 points sampled uniformly in polar coordinates and mapped to the $k_0$ geometry. \textbf{B)} Representations of the H0, H1, and H2 homology groups of an 64-chart \atlaslearn instance computed using Vietoris-Rips persistent homology on 640 points, sampled by taking 10 points from each chart using the method \texttt{atlas\_grid\_sample}. \textbf{C)} The aggregate bottleneck distance between Vietoris-Rips persistence diagrams generated by different 2D representions using 640 sampled points}\label{fig:h2_homology}
\end{figure}

\subsection{Persistence diagram computation (Fig. \ref{fig:h2_homology}A,B)}\label{sec:persistent_homology}
We compute persistent homology on point clouds generated from our \atlaslearn representation of high-contrast image patches, as well as on points sampled from a parametrization from Sritharan \textit{et al.} \cite{sritharan} of the same manifold (Fig.~\ref{fig:h2_homology}).

Point samples from \atlas were generated by performing the following process for each of the 64 charts $(\mathcal{V},\varphi)$ in our \atlas representation:
\begin{itemize}
    \setlength\itemsep{0em}
    \item For chart $\mathcal{V}$ of radius $r$, a point $\tv$ is sampled uniformly from the ball of radius $r$ in $\mathcal{V}$;
    \item $\tv$ is rejected if it does not belong to the same coordinate chart;
    \item if $\tv$ is not rejected, return $\tilde{\varphi}^{-1}\left(\tv\right)\in\mathbb{R}^9$.  
\end{itemize}

For the Sritharan parametrization of the Klein bottle, a point is sampled by sampling coordinates $(\theta,\phi)$ from the uniform measure on $[0,\pi]\times [0,2\pi]$, mapping the coordinates into $\mathbb{R}^{3\times 3}$ by the restriction of the map $k_{\theta,\phi}$ to $\left\{-1,0,1\right\}\times\left\{-1,0,1\right\}$, and reshaping the result as an element of $\mathbb{R}^9$. 

Ten points were sampled for each of the 64 charts in the \atlas, for a total of 640 points, and $640$ points were sampled from Sritharan's parametrization of the Klein bottle. 

Persistent homology was computed on the samples by Vietoris-Rips complex using the Python package Ripser~\cite{ripser}. 


\subsection{Pairwise aggregate bottleneck distance computation (Fig.~\ref{fig:h2_homology}C)}\label{app:bottleneck_heatmap}
Let $P_0,P_1,P_2$ be persistence diagrams generated from data $X$ corresponding to $H_0$, $H_1$, and $H_2$ features, respectively. Let $Q_0,Q_1,Q_2$ be generated similarly from data $Y$. We define the \textit{aggregate bottleneck distance} between diagram triplets $\left(P_0,P_1,P_2\right)$ and $\left(Q_0,Q_1,Q_2\right)$ as $\sqrt{\sum_{i=0}^2d^2_\text{B}\left(P_i,Q_i\right)}$, where $d_\text{B}$ denotes the bottleneck distance, e.g.,~\cite{oudot}. This distance function 
reflects the bottleneck distance between persistence diagrams in each dimension, making it a natural way to measure the preservation of $H_0$, $H_1$, and $H_2$ features by different dimensionality reduction measures.

To create the heatmap in Fig.~\ref{fig:h2_homology}C, the same points were used as described in Section~\ref{sec:persistent_homology}. PCA (restricted to the top two or five principal components) and a two-dimensional UMAP were each computed on the points sampled from the Sritharan parametrization. We excluded $t$-SNE from the bottleneck distance computations, due to its very high aggregate bottleneck distances to all other methods.

\subsection{Geodesic distance computations (Fig.~\ref{fig:dist_fig})}\label{sec:geodesic_distances}

For the experiment comparing geodesic distances, we approximated true geodesic distance as follows. First, we generated sample points from the Sritharan parametrization by mapping a $1000\times 1000$ grid of evenly spaced points in the set $[0,\pi]\times[0,2\pi]$ into $\mathbb{R}^{3\times3}$ through the restriction of $k_0$ to $\{-1,0,1\} \times \{-1,0,1\}$. For \texttt{Isomap} specifically, we uniformly subsampled these points down to 20,000 in order to keep memory requirements under 4GB. These points in $\mathbb{R}^{3 \times 3}$ were then reshaped these as vectors in $\mathbb{R}^9$. These points were used to create a $15$-nearest-neighbors graph, with each edge weighted by the Euclidean distance between its endpoints. True geodesic distance between a given pair of points was then approximated by computing a shortest path in this graph. 

For \texttt{Isomap}, \texttt{PCA}, and \texttt{$t$-SNE}, geodesic distances were computed via an analogous approach. That is, each of these transformations was used to map the sampled grid points into a new, separate representation, on which $15$-nearest-neighbor graphs with distance-weighted edges were computed. Note that these graphs have one million nodes, while \atlas uses a graph $G_{\epsilon, \delta}$ of only 28,700 nodes for the values of $\epsilon=0.6$ and $\delta=0.1$ (Sec.~\ref{sec:app_naive_dist}); therefore, naively, one might expect geodesic distances to be better preserved by \texttt{PCA}, $t$-\texttt{SNE}, and \texttt{Isomap} in the transformed space than by the \atlas representation. 

To investigate how well geodesic distances were preserved by \atlas, PCA, $t$-SNE, and UMAP, 100 pairs of points were randomly sampled without replacement from the one million grid points in the Sritharan parametrization. For \atlas, these 100 pairs of points were ingested into the \atlas, and the geodesic distance between each pair of points was approximated as 
the na\"ive approximate distance between the points (Sec.~\ref{sec:app_naive_dist}).

\begin{figure}[t]
    \centering
    \includegraphics[width=0.45\linewidth]{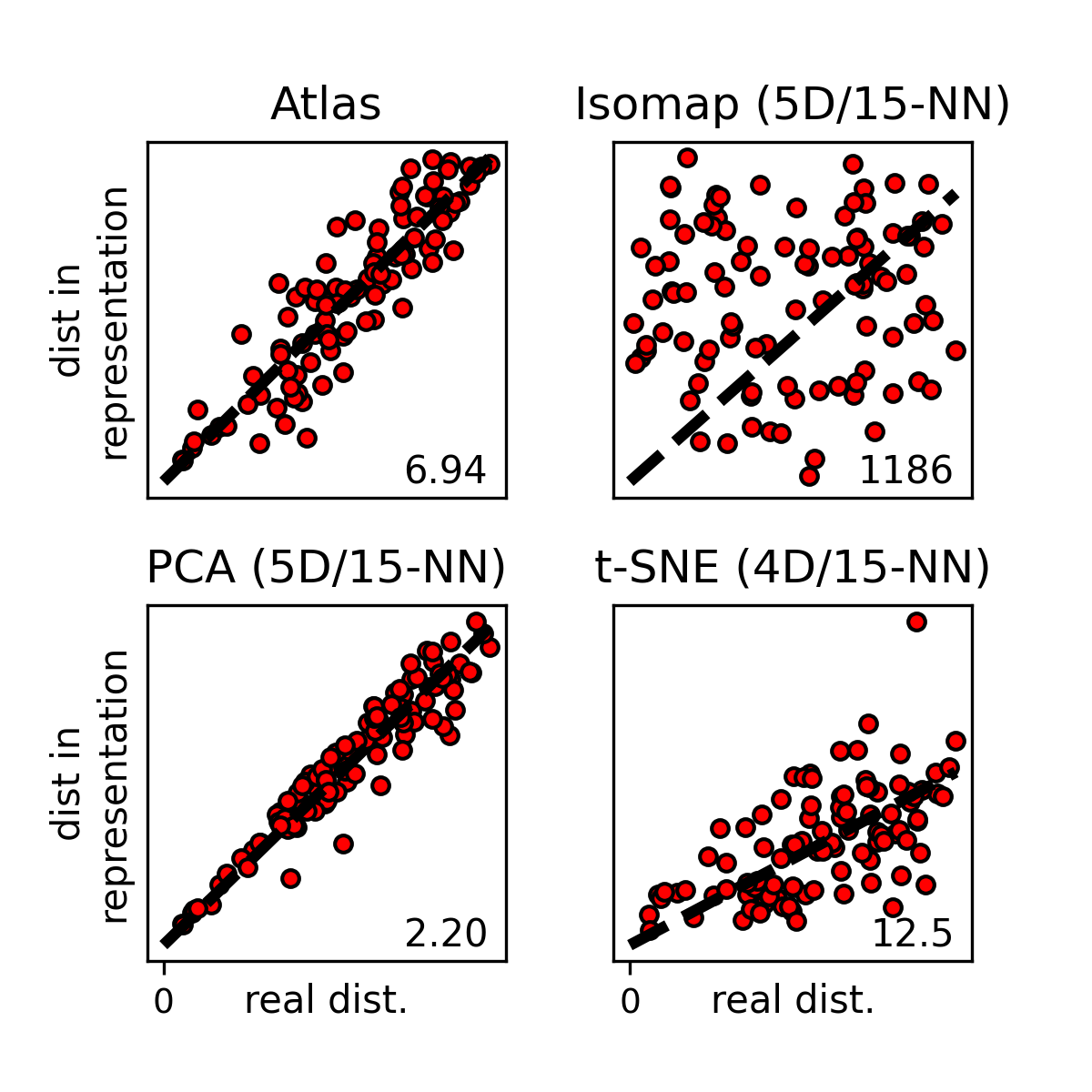} \hspace{1cm}\includegraphics[width=0.45\linewidth]{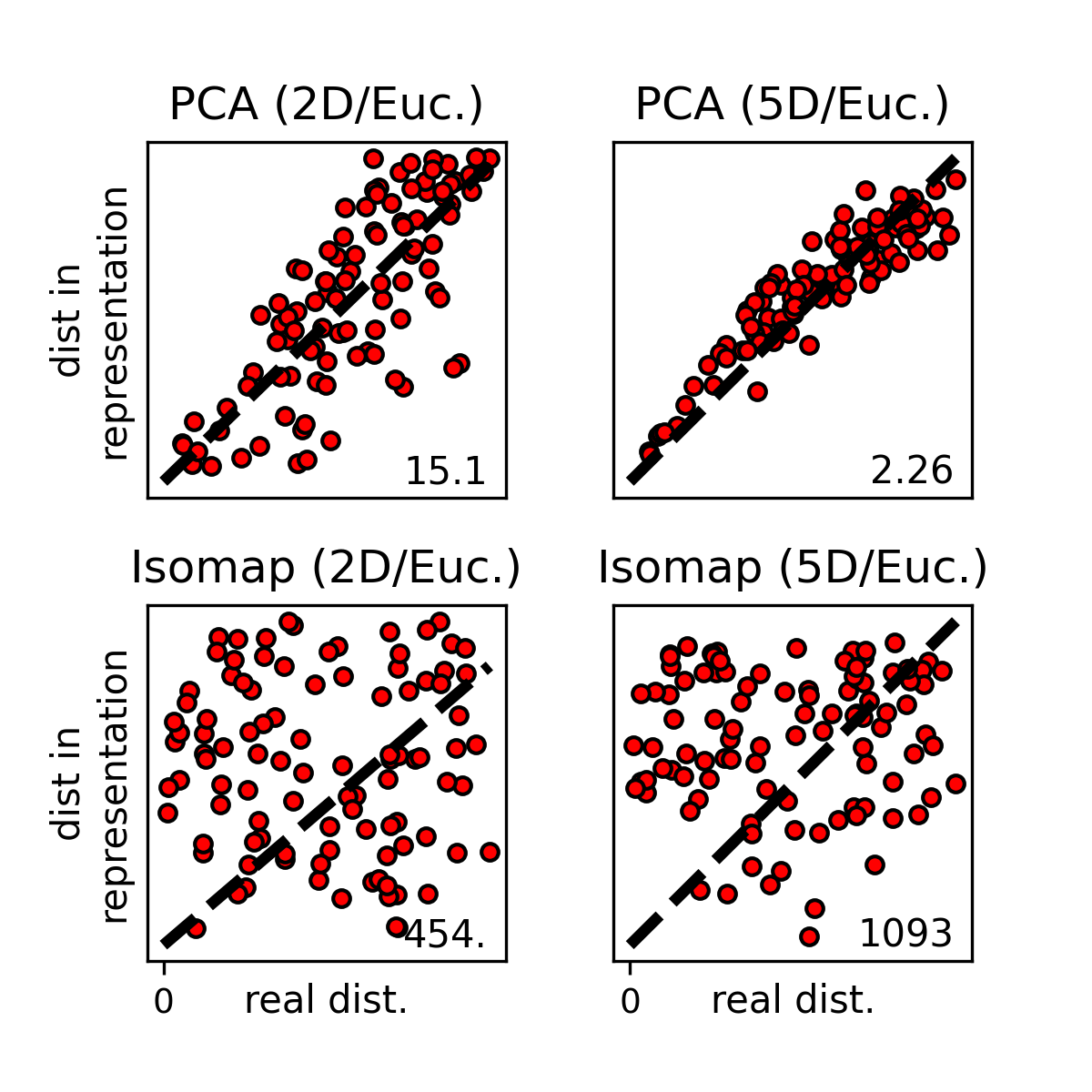}
    \caption{\textbf{Even at the maximum embedding dimension from the Whitney Embedding Theorem, state-of-the-art, nonlinear dimensionality reduction techniques fail to preserve geodesic distances.} \texttt{Isomap}, \texttt{PCA} and \texttt{$t$-SNE} were computed on a mesh of 1 million points. \atlaslearn was asked to produce 64 charts, given only a 10,000-points mesh. For 100 randomly sampled pairs of points on the Klein bottle, each scatter plot shows each pair (dot) according to a precise estimate of their true geodesic distance (x axis, ``real dist.'') versus the embedding distance (y axis, ``dist. in representation'') for each embedding (panel title). \atlas is a 2D representation. For all other methods, we specify parameters in the form $(a/b)$ where $a$ indicates the dimensionality of the representation used and $b$ the method used for computing geodesics. For $b$, the options are either \textit{15-NN}, indicating that shortest path were computed on a 15-nearest-neighbor graph, or \textit{Euc.}, meaning that distances were estimated by the Euclidean distance in the represented embedding. %
    Metric distortion is given in bottom right corner. Dashed line is fit with $y$-intercept zero.}
    \label{fig:dist_whitney}
\end{figure}

\subsubsection{State-of-the-art dimensionality reduction techniques fail to preserve geodesic distances even at the Whitney dimension}

At least four dimmensions are needed to embed the Klein bottle into Euclidean space.
In Figure~\ref{fig:dist_whitney}, we show that \texttt{PCA} finds a five-dimensional linear subspace onto which data sampled from the $k_0$ Klein bottle can be preserved with little metric distortion, whereas \texttt{Isomap} and \texttt{$t$-SNE} fail to preserve the metric structure even when having enough dimensions. The methodology for sampling points, performing dimensionality reduction, and computing geodesic distances is as described in Section~\ref{fig:dist_fig}. For \texttt{$t$-SNE}, four dimensions were used instead of five due to runtime constraints.

\subsection{Packages used for non-atlas dimensionality reduction}\label{app:non_atlas_packages}
PCA and $t$-SNE were computed using scikit-learn (version 1.3.2)~\cite{scikit-learn}
. We ran $t$-SNE was  with \texttt{perplexity=5.0} and otherwise default arguments.

\section{Computing Riemannian principal boundaries}\label{app:rpb}

\subsection{Parameter choices and modifications}\label{sec:params_and_mods}
We implemented the RPB algorithm as described in~\cite{yao_2020}, with the exceptions described here. 

In Yao, \textit{et al.}~\cite{yao_2020}, Sec.~2.2, a univariate kernel $\kappa_h$ is used to define which points are included in the computation of the local covariance matrix $\Sigma_h$. For this kernel, we use the indicator function for the ball of radius $h$.

The original algorithm computes a weighted average of the first derivatives of the two principal flows, where $\lambda_\delta(t)$ is the weight at iteration $t$ (\cite{yao_2020}, Eq.~10). In our implementation, we assume that $\lambda_\delta(t)=1/2$ for all $t$. This choice was motivated by both simplicity and practical considerations, i.e., it helped avoid issues where the boundary could collapse into one of the principal flows. 

The differential equation for a principal flow $\gamma^+$ (induced by Equation~5 of \cite{yao_2020}) effectively follows the vector field defined by the top eigenvector of the local covariance matrix $\Sigma_h$. To avoid oscillations in the principal flow direction, due either to the insensitivity of eigenvector computations to multiplication by $-1$ or non-smooth changes in the top eigenvectors, we enforce a positive inner product between tangent vectors in adjacent iterates. 
Further,
if the support of the data is sufficiently sparse, in practice, the top eigenvector field for the principal flow will be dominated by noise and will eventually cause the boundary curve to move away from the data, which also causes $\Sigma_h$ to become undefined. 
To prevent this from happening, we use the following modification to correct the principal flow solution by moving it towards the mean of the local data:

Let $W$ be the top eigenvector field of the local covariance matrix $\Sigma_h$ (as in Equation~4 of \cite{yao_2020}). Instead of following the update rule $\dot{\gamma}=W(\gamma),$ we instead follow the rule
\begin{equation}\label{eqn:modified_yao}
    \dot{\gamma}=W(\gamma)+\alpha\left(I-W(\gamma)W(\gamma)^\top\right)\left(\frac{1}{\sum_i\kappa_h\left(x_i,\gamma\right)}\sum_i\log_\gamma x_j\right),
\end{equation}
where $\alpha>0$ is a correction factor that moves $\dot{\gamma}$ toward the mean Riemannian logarithm of nearby sample points, projected onto the orthocomplement $I-W(\gamma)W(\gamma)^\top$ of the top eigenvector of the local covariance matrix

\subsubsection{State-of-the-art dimensionality reduction methods fail to learn intelligible separator between convex and concave patches in two dimensions}\label{app:bad_separator_klein}

\begin{figure}[t]
    \centering
    \includegraphics[width=0.85\columnwidth]{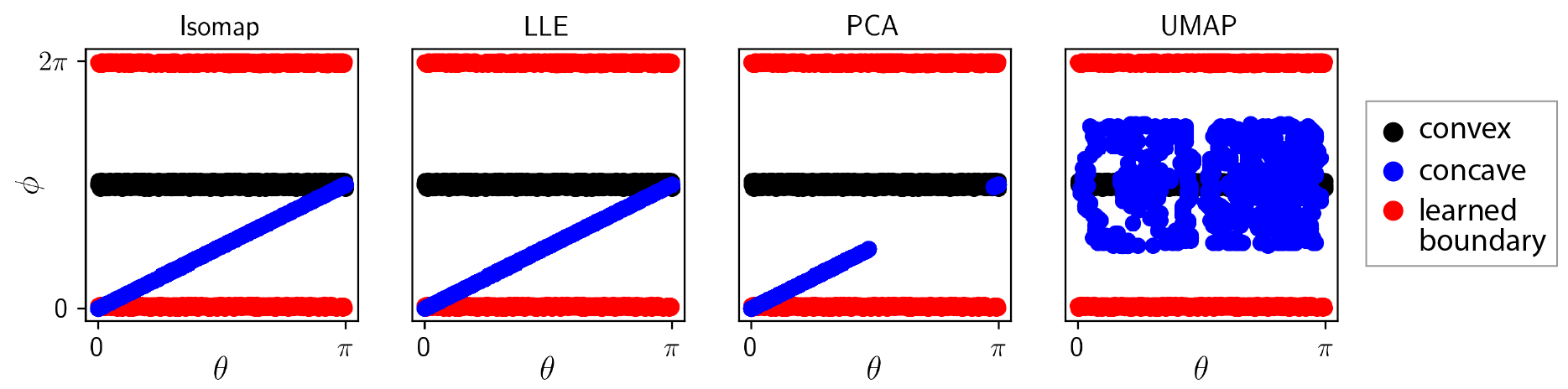} \\
    \caption{\textbf{Separating boundaries between convex and concave patches learned within 2D representations from state-of-the-art dimensionality reduction techniques forfeit the topology of the $k_0$ Klein bottle.} Dimensionality reduction techniques are applied to 20,000 points (Isomap) or one million points (LLE, PCA, UMAP) from the $k_0$ Klein bottle in order to learn 2D representations of the data. Then, $1,000$ convex (black) patches and $1,000$ concave (red) patches are mapped into the learned representation. Linear SVM is applied in order to learn a separator between convex and concave patches. The $1,000$ points closest to the boundary in the learned space are then visualized in the polar representation of the Klein bottle (blue).}
    \label{fig:bad_separator_klein}
\end{figure}

To highlight the significance of the RPB algorithm learning an interpretable, intrinsic separator between convex and concave patches on a 2D \atlas object in Section~\ref{app:klein}, we compare against other 2D representations learned using state-of-the-art dimensionality reduction techniques. Specifically, we take 1 million points from a grid in the polar representation of the Klein bottle, and subsample $20,000$ for \texttt{Isomap}, as in Section~\ref{sec:geodesic_distances}. After transforming these polar points into $\mathbb{R}^9$, we apply \texttt{Isomap}, \texttt{$t$-SNE}, and \texttt{PCA} as described in the same section, and additionally apply \texttt{UMAP} from the \texttt{scikit\_learn} implementation using \texttt{n\_neighbors=5} and othwerwise default parameters. We then map $1,000$ convex and $1,000$ concave patches into the learned representations. Using the linear SVM classifier implementation from \texttt{scikit\_learn}, we learn linear boundaries separating convex and concave patches in each representation~\cite{scikit-learn}. Recall that the RPB algorithm is intended to be a Riemannian-geometric generalization of linear SVM. This is the closest analog of the RPB algorithm in these learned, one-chart representations of the data, and is chosen for the sake of fair comparison~\cite{yao_2020}.

When the boundaries learned in the 2D representations are mapped back into the ambient space---by identifying the preimages of the $1,000$ points closest to the learned boundary in each represenation---we see that \texttt{Isomap}, \texttt{PCA}, and \texttt{$t$-SNE} capture a local separator between the convex and concave patches. However, these local separators take into account neither the compactness nor the unorientability of the Klein bottle.

\section{RNA Experiments}\label{app:rna}

\subsection{Using \atlaslearn to create 5-dimensional representation of  hematopoietic single-cell transcriptomic data}\label{app:atlas_learn_dynamo}

\begin{figure}[h]
    \centering
    \includegraphics[width=\columnwidth, trim=2cm 0 2cm 0, clip]{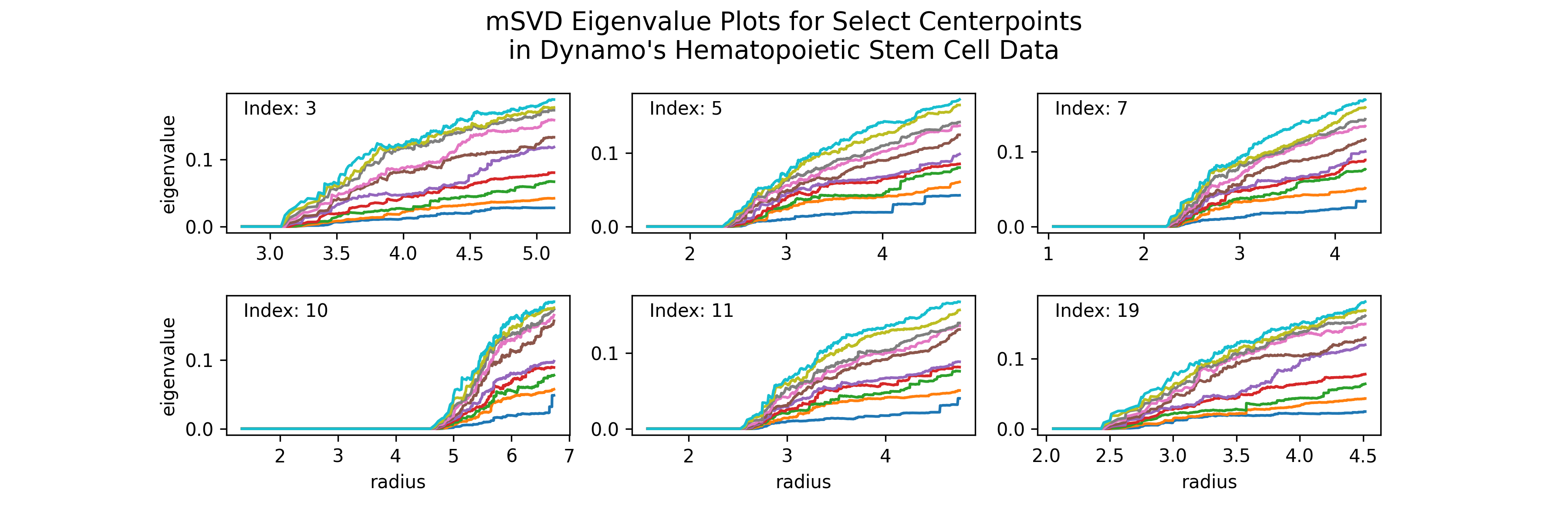}
    \caption{\textbf{mSVD eigenvalue plots based at chosen centerpoints on the Dynamo HSC data in 30-PC space suggest the data lie close to a submanifold of dimension five.} For each centerpoint $\vec{x}$, all other data were sorted by distance from $\vec{x}$. At each radius $r$, the eigenvalues of the covariance matrix $\Sigma$ containing points within distance $r$ of $\vec{x}$ are plotted. This is done for the 1200 points closest to $\vec{x}$. These heuristic, graphical adaptations of the mSVD technique from~\cite{little} demonstrate a prominent gap between the fifth and sixth eigenvalues.}
    \label{fig:dynamo_msvd}
\end{figure}

Single-cell RNA-sequencing data (with metabolically labeled new RNA tagging) from hematopoietic cells were retrieved using the  \texttt{dynamo} Python package, following the ``scNT-seq human hematopoiesis dynamics'' tutorial notebook, which creates a data representation based on the top 30 principal components (PCs)~\cite{dynamo_github, qiuMappingTranscriptomicVector2022}. Our preliminary,  multiscale-singular vector decomposition (mSVD) analysis of these data suggests that, at several points of the manifold, the data are well approximated by a five-dimensional submanifold (Figure~\ref{fig:dynamo_msvd}). An \atlas data structure is learned on the 30-PC representation using \atlaslearn (Algorithm~\ref{alg:learn_atlas_graph}) with dimensionality 5 and 30 coordinate charts. For several random seeds, there are enough points in each randomly assigned chart for the tangent Stiefel matrix $L$ to be learned, but not enough for the quadratic coefficients in $\hat{\mathbf{h}}$ to be determined. For this reason, we set the $M_i$ an arbitrary, Stiefel orthocomplement of $L_i$ and all quadratic coefficients in $K_i$ to be zero.

\subsection{Computation, integration, and interpretation of an RNA velocity vector field}

\begin{figure}[h]
    \centering
    \includegraphics[width=1.0\columnwidth]{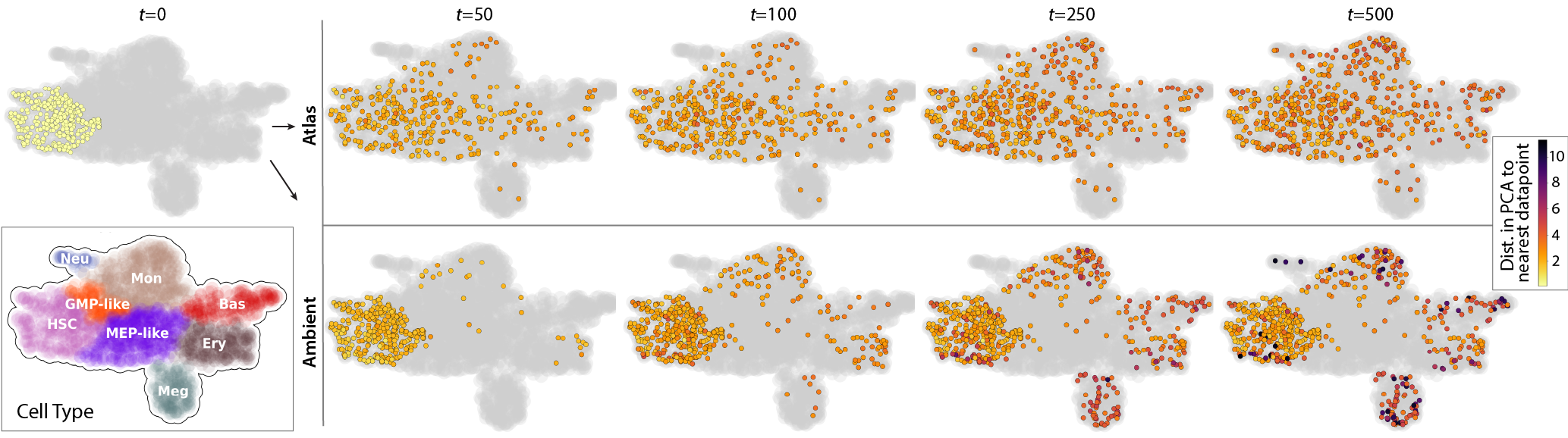}
    \caption{\textbf{Integrating the vector field in Section~\ref{sec:bio} in the tangent space of an \atlas object preserves proximity to the data while capturing all cell types present.} Each plot shows a UMAP representation of single-cell RNA-sequencing data (with metabolically labeled new RNA tagging)  from 1,947 hematopoietic cells (colored by cell type, bottom left, or in gray)~\cite{qiuMappingTranscriptomicVector2022}. The RNA velocity vector field $\vec{V}_{\texttt{amb}}$ was integrated as an ODE both in the tangent plane of an \atlaslearn structure (``Atlas'', top row) and in 30-dimensional PCA space (``Ambient'', bottom row). For 309 initial conditions ($t=0$, top left, in yellow), each corresponding to a HSC, integration was performed for 500 time steps $t$ in each space. At each labelled time step $t\in \{50,100,250,500\}$ (panels), the iterate is visualized in the UMAP by overplotting the datapoint that is closest to the iterate in color, representing its Euclidean distance from the iterate in 30-PC space. Abbreviations: HSC: hematopoietic stem cell; GMP: granulocyte-monocyte progenitor; MEP: megakaryocyte-erythroid progenitor; Neu: neutrophil; Mon: monocyte; Bas: basophil; Ery: erythroid; Meg: megakaryocyte. 
    }
    \label{fig:dynamo_speckles}
\end{figure}

We isolate the 309 points labeled as hematopoietic stem cells (HSCs), treating each as the initial condition to an ODE. We consider two ODEs: the one induced by the \texttt{Dynamo} vector field $\vec{V}_{\texttt{amb}}$ in the ambient space and the vector field $\vec{V}_{\texttt{atlas}}$ induced by projecting $\vec{V}_{\texttt{amb}}$ onto the learned tangent bundle. 
The $\vec{V}_{\texttt{amb}}$ ODE is integrated in the ambient space by the 4(5) Runge-Kutta method using the \texttt{Dynamo} function \texttt{dyn.vf.VectorField.integrate}, with arguments \texttt{interpolation\_num=500} and \texttt{t\_end=t\_term}\footnote{Here, \texttt{t\_term = 2 * dynamo.tools.utils.getTend(adata\_hsc.obsm["X\_pca"], adata\_hsc.obsm["velocity\_pca"])}.}. The $\vec{V}_{\texttt{atlas}}$ is integrated by the forward-Euler method with stepsize 1 for 500 iterations, using the method $\mathbf{Ret}_\cdot \cdot$ in Algorithm~\ref{spec:atlas_graph} as the update step. Integrating $\vec{V}_{\texttt{amb}}$ results in iterates $\vec{x}_i^\texttt{amb} \in \mathbb{R}^{30}$ for $i\in\{0, \ldots, 500\}$. Similarly, integrating $\vec{V}_{\texttt{atlas}}$ results in iterates $\left(\vec{\xi}_i, j_i\right) \in \mathbb{R}^5 \times \{1,\ldots,30\}$ for $i\in\{0, \ldots, 500\}$, where $\xi_i$ are tangential coordinates and $j_i$ are chart indices. For the sake of comparison, we define $\vec{x}_{i}^{\texttt{atlas}} = \varphi_{j_i}\left(\vec{\xi}\right)$.

The results show that  iterates $\vec{x}^{\texttt{atlas}}_i$ keep close to the manifold for all iterations, whereas iterates $\vec{x}^{\texttt{amb}}_i$ gradually depart from the manifold over time (Figure~\ref{fig:dynamo_speckles}). Moreover, iterates $\vec{x}^{\texttt{atlas}}_i$ gradually pass through canonical intermediate cell states (i.e., granulocyte-macrophage progenitor-like and megakaryocyte-erythroid progenitor-like cells) before entering terminal states (i.e., neutrophils, monocytes, basophils, erythroid cells, megakaryocytes). On the other hand, iterates $\vec{x}^{\texttt{amb}}_i$ rarely pass through intermediate states, including in time steps that are not shown. Taken together, these data suggest that using the learned atlas constrains the integration of an RNA-velocity transcriptomic vector field ODE to better reflect the data, improving the biological plausiblity of the resulting trajectories.

\putbib[references]
\end{bibunit}
\end{document}